\RequirePackage[svgnames,table]{xcolor}
\documentclass[11pt,letterpaper]{yalearxiv}
\usepackage[numbers,sort&compress]{natbib}
\usepackage{multirow}
\usepackage{array}
\usepackage{subcaption}
\usepackage{algorithm}
\usepackage{algpseudocode}
\usepackage{adjustbox}
\usepackage{bookmark}

\makeatletter
\renewcommand\AB@affilsepx{\protect\\\protect\Affilfont}
\makeatother
\graphicspath{{figures/}}
\definecolor{splbg}{RGB}{235,242,255}
\newcolumntype{Q}{>{\columncolor{splbg}}c}

\theoremstyle{plain}
\newtheorem{proposition}{Proposition}
\newtheorem{theorem}{Theorem}
\newtheorem{lemma}{Lemma}

\theoremstyle{definition}
\newtheorem{assumption}{Assumption}

\fancypagestyle{firststyle}{%
  \fancyhf{}%
  \fancyhead[L]{\headerfont Preprint}%
  \fancyfoot[R]{\footerfont\thepage}%
}
\hypersetup{
  colorlinks=true,
  allcolors=YaleLightBlue,
  pdftitle={Cooperating with Future Collaborators: Multi-Agent RL under Staggered Participation},
  pdfauthor={Jianglin Qiao; Siyi Hu; Thien Hoang Nguyen; Zehong Cao; Salah Sukkarieh},
  pdfsubject={Preprint},
  pdfkeywords={Multi-Agent RL, Staggered Participation, Cooperative Tasks}
}

\newif\ifShowYaleLogo
\ShowYaleLogofalse

\newcommand{\YaleLogoHeight}{20pt}

\newcommand{\AdelaideLogoHeight}{18pt}
\newcommand{\CurtinLogoHeight}{18pt}
\newcommand{\SydneyLogoHeight}{20pt}

\newlength{\HeaderLogoBoxHeight}
\newcommand{\HeaderLogo}[2][]{%
  \raisebox{\dimexpr0.5\HeaderLogoBoxHeight-0.5\height\relax}%
    [\HeaderLogoBoxHeight][0pt]{%
      \includegraphics[#1]{#2}%
    }%
}

\renewcommand{\firstpagelogos}{%
  \makebox[\textwidth][s]{%

    \ifShowYaleLogo
      \HeaderLogo[
        height=\YaleLogoHeight,
        keepaspectratio
      ]{figures/yale_logo.png}%
      \hfill
    \fi
    \hfill

        \HeaderLogo[
      height=\SydneyLogoHeight,
      keepaspectratio
    ]{figures/logos/Sydney_logo.png}%
    
    \HeaderLogo[
      height=\AdelaideLogoHeight,
      keepaspectratio
    ]{figures/logos/Adelaide_logo.png}%
    
    \hfill
    \HeaderLogo[
      height=\CurtinLogoHeight,
      keepaspectratio
    ]{figures/logos/Curtin_logo.png}%

  }
}

\title{Cooperating with Future Collaborators:\\
Multi-Agent RL under Staggered Participation}
\runningtitle{Cooperating with Future Collaborators: Multi-Agent RL under Staggered Participation}
\keywords{Multi-Agent RL, Staggered Participation, Cooperative Tasks}

\DeclareRobustCommand{\EmailBreak}{\\[0.45em]}

\author[1]{Jianglin Qiao}
\author[2]{Siyi Hu}
\author[1]{Thien Hoang Nguyen}
\author[3]{Zehong Cao}
\author[1]{Salah Sukkarieh}

\affil[1]{ACFR, The University of Sydney, Sydney, Australia}
\affil[2]{School of EECMS, Curtin University, Bentley, Australia}
\affil[3]{School of CSIT, Adelaide University, Adelaide, Australia}

\vspace{0.3em}
\affil[ ]{%
\small
\texttt{\{jianglin.qiao,thienhoang.nguyen,salah.sukkarieh\}@sydney.edu.au}%
\EmailBreak
\texttt{siyi.hu@curtin.edu.au} \quad \texttt{jimmy.cao@adelaide.edu.au}%
\vspace{0.3em}
}
\begin{document}
\begin{abstract}
In cooperative Multi-Agent Reinforcement Learning (MARL), agents are often trained under concurrent participation, while in many tasks some agents act earlier and leave task-relevant information that becomes useful to agents participating later. We study this setting as \emph{staggered participation} (SP), which introduces a cross-time, cross-agent learning dependency because an early action may affect the return through the information it provides and the later policy that uses it. Learning under SP therefore requires both identifying what information is useful for future decisions and learning how later agents should use it. We propose \emph{Staggered Participation Learning} (SPL), a training-time augmentation that addresses these two parts with \emph{prospective acquisition supervision} for earlier agents and \emph{outcome-supervised receiver learning} for later agents. We evaluate SPL across multiple policy-based MARL backbones, environments, and staggered-participation patterns. Across 60 MPE/RWARE backbone setting comparisons, SPL achieves higher observed mean task completion in every case, with an average difference of 14.1\%. The gains also extend to eight-agent teams and a physics-based UAV--UGV environment in Isaac Lab, providing evidence across algorithmic, temporal, and embodied settings.
\end{abstract}
\maketitle
\section{Introduction}
\label{sec:intro}

Cooperative agents do not always participate over the same time interval. For instance, in multi-robot systems, one agent may begin sensing or exploration earlier, while another becomes active later for navigation or task execution~\citep{rockenbauer2024traversing,sasaki2020map,9732342}. Their participation windows may partially overlap, allowing direct interaction for some time steps, or they may be fully separated. In these cases, earlier agents can acquire or leave task-relevant information that remains useful to agents participating later. We study this temporal structure as \emph{staggered participation} (SP), where agents contribute to the same cooperative objective over temporally offset participation windows. As illustrated in Fig.~\ref{fig:problem}, these windows may partially overlap or be fully separated, while earlier participants can leave task-relevant information that supports later decisions.
\begin{figure}[t]
    \centering
    \includegraphics[width=1\linewidth]{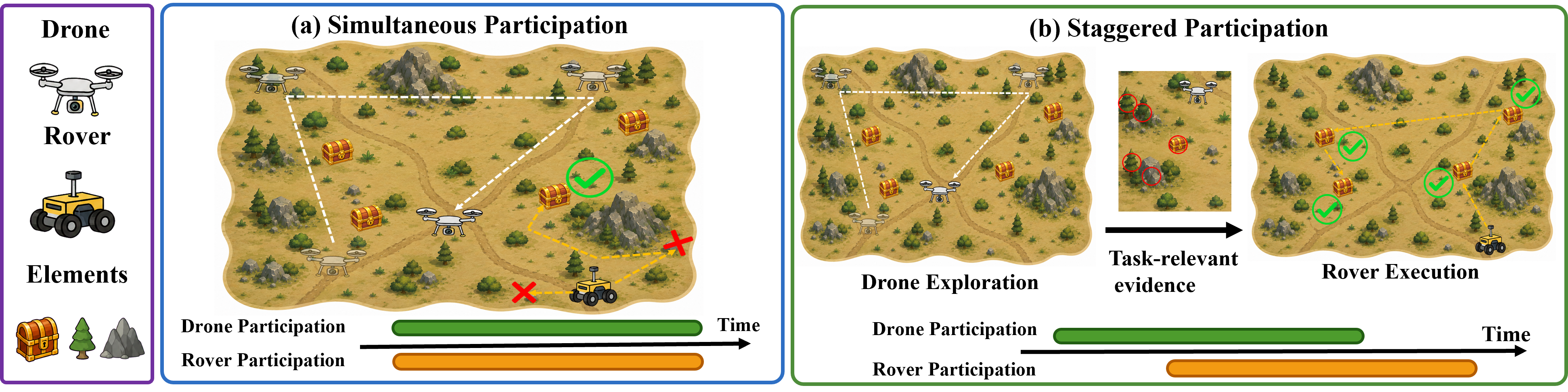}
    \caption{Simultaneous and staggered participation. In simultaneous participation, agents contribute within the same participation period. Under staggered participation, earlier participants acquire or leave information that remains available to future collaborators and supports later decisions. Participation windows may partially overlap or be fully separated.}
    \label{fig:problem}
\end{figure}

Learning under SP is challenging in cooperative MARL because the effect of an early action may only become visible through the behavior of an agent acting later. An earlier agent may take an action that acquires or reveals information for downstream decisions, but the usefulness of that action depends on how the resulting information affects later behavior. The return therefore links decisions across both time and participants. When participation windows overlap, this dependency can coexist with direct interaction; when they are disjoint, it is carried through information and other consequences of earlier actions. This gives rise to two interdependent learning requirements: \textit{(i) Information Acquisition}, where earlier agents learn which actions provide useful information for later decisions, and \textit{(ii) Downstream Utilization}, where later agents learn how that information should influence action selection. Although a shared team return couples the two roles through a common objective, it does not provide separate supervision for acquiring useful evidence and using it effectively downstream.

SP is closely related to three existing directions in cooperative MARL. {First, asynchronous MARL} studies agents that act or communicate at different times~\citep{3545946.3598752,min2023cooperative,3709347.3743915}. {Second, dynamic and open-team MARL} considers agents that join, leave, or interact with changing teammates~\citep{liu2021coach,rahman2023general,10.24963/ijcai.2024/974}. {Third, communication MARL} learns what information is useful for decision-making. Our focus is the learning dependency that appears when information acquisition and downstream utilization are distributed across different participation windows and potentially different agents. SP therefore places these two components within the same cooperative learning problem.

We introduce Staggered Participation Learning (SPL), a training-time augmentation that provides complementary supervision for the two learning requirements under SP. \emph{Prospective acquisition supervision} guides earlier agents toward actions that provide useful information for future collaborators, while \emph{outcome-supervised receiver learning} trains later agents to use inherited evidence based on downstream outcomes. SPL augments actor training without modifying environment rewards and introduces no additional communication or auxiliary inference at execution. We evaluate SPL across multiple policy-based MARL backbones, participation patterns, larger teams, and a physics-based UAV--UGV setting. Across the 60 primary Vanilla--SPL comparisons, SPL achieves higher observed mean task completion in every configuration, with an average gain of 14.1\%. Our contributions are: (i) \textit{Staggered participation.} We formulate cooperative MARL with temporally offset participation and identify the dependency between evidence acquisition and downstream utilization; (ii) \textit{Staggered Participation Learning.} We introduce complementary supervision for prospective acquisition and outcome-informed evidence use while preserving the underlying reward and decentralized execution; and (iii) \textit{Empirical evaluation.} We evaluate SPL across policy-based MARL backbones, temporal participation patterns, larger teams, and physics-based UAV--UGV simulation.

\section{Related Work}
\label{sec:related}
\paragraph{Simultaneous vs. Staggered Participation in Cooperative MARL.}
In simultaneous participation, agents contribute within the same interaction period. Cooperative MARL also studies temporal variation through asynchronous interaction, dynamic team composition, and staged decision making. Asynchronous methods consider agents that execute actions, communicate, or assign credit across asynchronous interactions~\citep{3545946.3598752,min2023cooperative,3709347.3743915,10.24963/ijcai.2025/20,farjadnasab2025cooperative}, while dynamic and open-team approaches address agents that join, leave, or interact with changing teammates~\citep{liu2021coach,rahman2023general,10.24963/ijcai.2024/974,hu2025adaptability}. Cooperative Consecutive Policies studies cooperation across consecutive decision stages~\citep{erskine2022developing}. These directions examine different aspects of when agents act and participate in a cooperative task. SP concerns temporally offset participation windows, which may partially overlap or be fully separated. In the tasks studied here, earlier participants can acquire task-relevant evidence before future collaborators begin participating, and this evidence remains available to support downstream decisions.

\paragraph{Direct vs. Indirect Communication Protocols in Cooperative MARL.}
Communication methods support direct information exchange and learn what to send and with whom to communicate, including under intermittent connectivity~\citep{zhu2024survey,hu2024learning,li2023context,yang2024team,lin2026and,ijcai2026p50}. Cooperation can also occur indirectly through persistent environmental traces or another agent's subsequent actions. Stigmergic methods support coordination through such environmental traces~\citep{xu2021stigmergic}, while MAE-MAPPO addresses indirect cooperation with invisible collaborators through marginal advantage estimation~\citep{pmlr-v337-qiao26a}. Related methods provide complementary mechanisms for retaining and interpreting information: sequence-based methods retain long-horizon context~\citep{mahjoub2025sable}, while SMPE$^2$ and MA$^2$E model or reconstruct information unavailable from local observations~\citep{kontogiannis2025enhancing,kang2025mae}. Policy distillation transfers training-time coordination knowledge into decentralized policies~\citep{pei2025policy,zhaidistilling}. Informative sensing and Dec-POMDP approaches further guide information acquisition according to its expected decision value~\citep{2484920.2484926,3306127.3331815,3709347.3743765}. The form of SP studied here combines temporally offset participation with indirect cooperation through persistent task-relevant evidence. Within this setting, SPL focuses on evidence acquisition and downstream utilization, while related credit-assignment methods study how cooperative outcomes are attributed across agents or coalitions~\citep{pmlr-v337-qiao26a,3709347.3743764}. SPL builds on value-aware information acquisition and policy distillation to supervise earlier participants and learn receiver preferences from attributable downstream outcomes. Persistent evidence connects the two roles without requiring additional execution-time communication.

\section{Methodology}\label{sec:method}
\subsection{Staggered Participation (SP)}\label{sec:staggered_participation}
\paragraph{Cooperative decision process.}
We model the cooperative task as a decentralized partially observable Markov decision process (Dec-POMDP)~\citep{oliehoek2016concise}, defined by
$\mathcal{M}=\langle \mathcal{N},\mathcal{S},\{\mathcal{O}_i\}_{i\in\mathcal{N}},\{\mathcal{U}_i\}_{i\in\mathcal{N}},P,R,\gamma\rangle$,
where $\mathcal{N}=\{1,\ldots,N\}$ is the set of agents and $\mathcal{S}$ is the global state space. At time $t$, the environment is in state $s_t\in\mathcal{S}$ and agent $i$ receives a local observation $o_t^i\in\mathcal{O}_i$. Each agent maintains an action--observation history $\tau_t^i$ and selects an action $u_t^i\in\mathcal{U}_i$ according to $\pi_{\theta_i}(u_t^i\mid\tau_t^i,M_t^i)$, where $M_t^i$ denotes the legal-action mask. The joint action $\mathbf{u}_t=(u_t^1,\ldots,u_t^N)$ produces the shared reward $r_t=R(s_t,\mathbf{u}_t)$ and the next state is sampled according to $s_{t+1}\sim P(\cdot\mid s_t,\mathbf{u}_t)$. All agents optimize the common discounted team return $G=\sum_{t=0}^{T-1}\gamma^t r_t$.

\paragraph{Staggered participation.}
Agents may participate over different temporal intervals, so teammates need not be active throughout the same period. Let $\mathcal{N}^{\mathrm{part}}_t\subseteq\mathcal{N}$ denote the set of agents participating at time $t$; agents outside this set take a null action and make no policy decision. For each agent $i$, let $[t_i^{\mathrm{start}},t_i^{\mathrm{end}})$ denote its participation interval, where $t_i^{\mathrm{start}}$ is the first participating time and $t_i^{\mathrm{end}}$ is the exclusive end time. We call agent $j$ a \emph{future collaborator} of agent $i$ if $t_i^{\mathrm{start}}<t_j^{\mathrm{start}}$ and both contribute to the same cooperative objective. Their participation intervals may be disjoint, with $t_i^{\mathrm{end}}\leq t_j^{\mathrm{start}}$, or may partially overlap, with $t_j^{\mathrm{start}}<t_i^{\mathrm{end}}$. In both cases, agent $i$ can act before agent $j$ begins participating.
\begin{figure*}[t]
    \centering
    \includegraphics[width=1\textwidth]{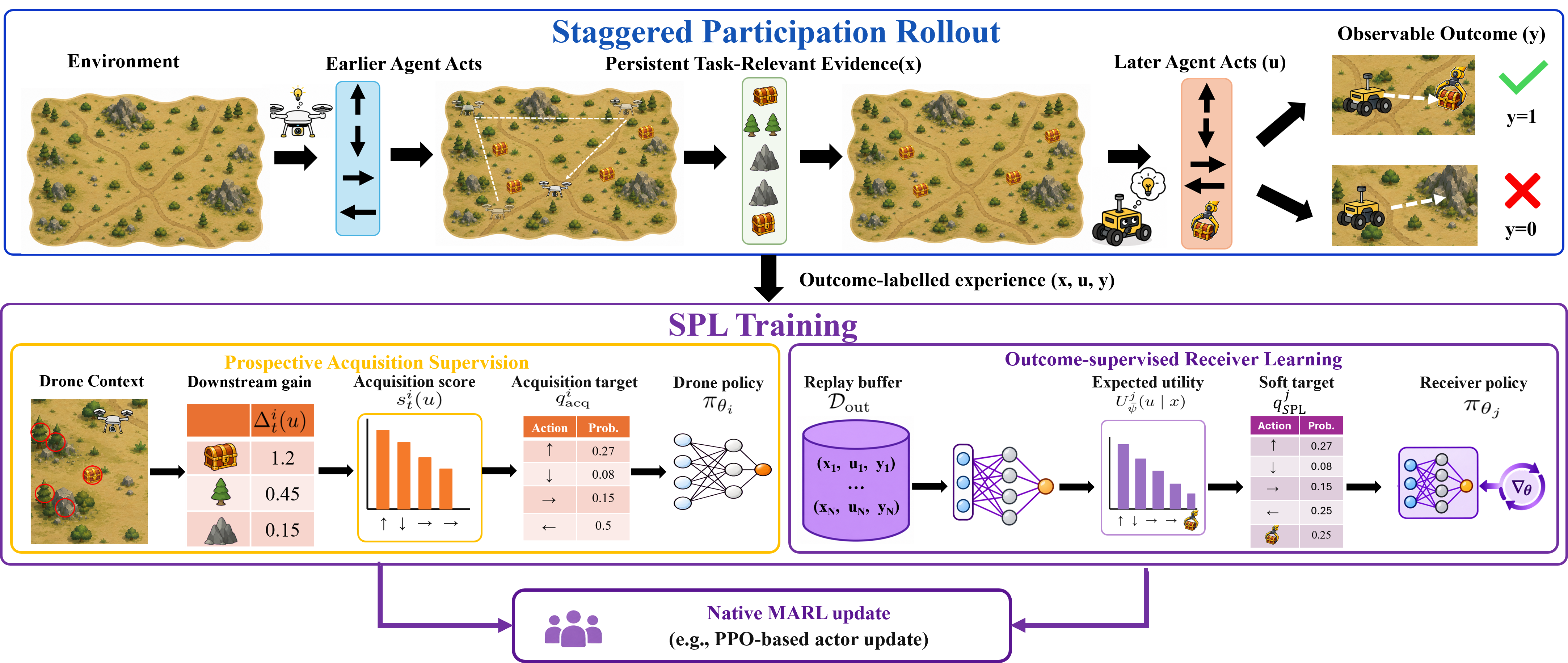}
    \caption{Overview of SP and SPL. Earlier participants receive prospective acquisition supervision, while future collaborators learn evidence-conditioned preferences from downstream outcomes. SPL is used only during training.}
    \label{fig:spl_pipeline}
\end{figure*}
\paragraph{Learning problem under SP}
Earlier information-gathering actions can change the evidence available to future collaborators and thereby influence downstream decisions and the shared return. This creates two interdependent learning requirements: what evidence earlier participants should acquire and how future collaborators should use it. Because the effect of an earlier action is realized through persistent evidence and subsequent decisions by another policy, its learning signal is delayed and indirect. We therefore focus on improving both evidence preparation and downstream evidence use. The next subsection introduces SPL to support these two requirements.

\subsection{Staggered Participation Learning (SPL)}
\label{sec:spl}

Under SP, earlier participants and future collaborators are coupled through persistent task-relevant evidence. SPL augments the native cooperative MARL objective with two complementary actor-side training signals. \emph{Prospective acquisition supervision} guides earlier participants toward evidence-gathering actions expected to support future decisions, while \emph{outcome-supervised receiver learning} teaches future collaborators how inherited evidence should influence downstream action selection. Both augment native actor training while leaving the environment reward and backbone critic objective unchanged, as illustrated in \textbf{Fig.~\ref{fig:spl_pipeline}}.

\paragraph{Structured training supervision.}
SPL uses structured task knowledge available during training to construct its auxiliary targets. Prospective acquisition supervision uses task-family validity priors, the fixed valid-task constraint, and the probe observation model to estimate the downstream value of candidate evidence-gathering actions. Receiver supervision uses fixed task-level utilities for successful and unsuccessful execution, while candidate-specific success probabilities are learned from attributable outcomes observed during training. These quantities describe task-generation, sensing, and outcome semantics; they do not reveal realized task validity, future outcomes, or a predefined correct action. In particular, the fixed acquisition model does not provide receiver success probabilities. The acquisition and receiver objectives are complementary but separately constructed: the acquisition target is fixed by the structured task model and is not updated from the learned receiver policy or outcome model. SPL therefore assumes that such task structure is available from the training environment or task specification. All SPL-specific models and targets remain training-only and are unavailable during execution. Implementation details are provided in \textbf{Appendix~B.4}.

\subsubsection{Prospective Acquisition Supervision}
\label{sec:spl_acquisition}

\paragraph{Prospective evidence value.}
Consider an eligible evidence-acquisition decision of an earlier participant $i$. Let $c_t^i$ denote its actor-visible acquisition context and $M_t^i$ the corresponding legal-action mask. For each legal task-directed action $u$, SPL uses a fixed training-time task model to estimate $\Delta_t^i(u)$, the expected increase in downstream decision value associated with the evidence produced by that action. SPL combines this prospective value with the corresponding travel or acquisition distance $d_t^i(u)$:
\begin{equation}
s_t^i(u)
=
\log\!\left(\max\{\Delta_t^i(u),\epsilon_{\mathrm{acq}}\}\right)
-
\frac{d_t^i(u)}{\tau_d},
\label{eq:spl_acquisition_score}
\end{equation}
where $\tau_d$ is a fixed distance temperature and $\epsilon_{\mathrm{acq}}>0$ is a numerical stabilizer. A deterministic transform $\mathcal{T}_{\mathrm{acq}}$ masks illegal actions, normalizes the scores of positive-gain task actions, and reserves a small fixed probability mass for the scan action, which gathers local evidence without targeting a specific candidate. If no positive-gain task action is available, the scan action receives all target mass. The resulting prospective acquisition target is $q_{\mathrm{acq}}^i=\mathcal{T}_{\mathrm{acq}}(s_t^i,M_t^i)$.

\paragraph{Acquisition policy supervision.}
For eligible earlier-participant decisions $\mathcal{D}_{\mathrm{acq}}$, SPL distils the prospective acquisition target into the decentralized policy:
\begin{equation}
\mathcal{L}_{\mathrm{SPL}}^{\mathrm{acq}}
=
\mathbb{E}_{(i,t)\sim\mathcal{D}_{\mathrm{acq}}}
\left[
\operatorname{CE}
\left(
q_{\mathrm{acq}}^i,
\pi_{\theta_i}(\cdot\mid\tau_t^i,M_t^i)
\right)
\right].
\label{eq:spl_acquisition_loss}
\end{equation}
This auxiliary loss updates only the earlier-participant actor.

\subsubsection{Outcome-Supervised Receiver Learning}
\label{sec:spl_receiver}

\paragraph{Outcome-labeled decisions.}
Consider a future collaborator $j$ making a downstream decision at time $t\geq t_j^{\mathrm{start}}$. Let $x_t^j$ denote the actor-visible information available before the decision, including retained task-relevant evidence and the collaborator's current local context. The collaborator selects $u_t^j$ using its decentralized policy, and the selected task subsequently produces an attributable outcome $y_t^j\in\{0,1\}$, where $y_t^j=1$ denotes successful completion and $y_t^j=0$ an attributable unsuccessful attempt. Each eligible decision contributes a tuple $(x_t^j,u_t^j,j,y_t^j)$ to the outcome replay set $\mathcal{D}_{\mathrm{out}}$. Subsequent observations are used only to determine the outcome label; we exclude interrupted, expired, or unresolved attempts when their outcomes cannot be reliably attributed.

\paragraph{Candidate outcome model.}
SPL learns how the evidence and local context available to a future collaborator relate to the outcome of each candidate action. Let $g_j(x,u)$ collect the actor-visible decision context, candidate-action features, and collaborator-role identity. A candidate-shared scorer $f_\psi$ predicts
\begin{equation}
p_{\psi}^{\,j}(u\mid x)
=
\operatorname{sigmoid}
\left(
f_{\psi}\!\left(g_j(x,u)\right)
\right),
\label{eq:spl_outcome_model}
\end{equation}
where $p_{\psi}^{\,j}(u\mid x)$ estimates the probability that candidate $u$ succeeds under information $x$. The model is trained from attributable outcomes using binary cross-entropy,
\begin{equation}
\mathcal{L}_{\mathrm{out}}(\psi)
=
\mathbb{E}_{(x,u,j,y)\sim\mathcal{D}_{\mathrm{out}}}
\left[
\ell_{\mathrm{BCE}}
\left(
y,p_{\psi}^{\,j}(u\mid x)
\right)
\right].
\label{eq:spl_outcome_loss}
\end{equation}
The outcome model is therefore learned from the observed consequences of the future collaborator's own selected actions. It receives no labels identifying which candidate action should have been selected.

\paragraph{Evidence-conditioned action utility.}
Outcome probability alone does not determine the desirability of a candidate when successful and unsuccessful executions have different task-level consequences. Let $c_j^{+}(u)$ and $c_j^{-}(u)$ denote fixed task-level utilities associated with successful and unsuccessful execution of candidate $u$, respectively. SPL combines these utilities with the evidence-conditioned success probability predicted by a slowly updated target outcome model with parameters $\bar{\psi}$:
\begin{equation}
U_{\bar{\psi}}^{\,j}(u\mid x)
=
p_{\bar{\psi}}^{\,j}(u\mid x)c_j^{+}(u)
+
\left[1-p_{\bar{\psi}}^{\,j}(u\mid x)\right]c_j^{-}(u).
\label{eq:spl_utility}
\end{equation}
Because $p_{\bar{\psi}}^{\,j}(u\mid x)$ is conditioned on decision-time information, the same candidate can receive different utility estimates under different evidence prepared by earlier participants.

\paragraph{Receiver target and policy supervision.}
Let $\mathcal{A}_t^j=\{u\in\mathcal{U}_j:M_t^j(u)=1\}$ denote the legal actions available to collaborator $j$. SPL standardizes the predicted utilities over this set and constructs a soft preference target:
\begin{equation}
z_t^j(u)
=
\frac{
U_{\bar{\psi}}^{\,j}(u\mid x_t^j)-\mu_t^j
}{
\sqrt{(\sigma_t^j)^2+\epsilon_z}
},
\qquad
q_{\mathrm{SPL}}^j(u)
=
\frac{
\exp(\beta z_t^j(u))
}{
\sum_{v\in\mathcal{A}_t^j}\exp(\beta z_t^j(v))
},
\label{eq:spl_receiver_target}
\end{equation}
where $\mu_t^j$ and $(\sigma_t^j)^2$ are the mean and variance of the legal-action utilities, $\epsilon_z>0$ is a variance stabilizer, and $\beta>0$ controls the strength of the induced preference. Illegal actions receive zero probability. Since the target is constructed from learned evidence-conditioned outcome predictions, it expresses relative preferences among the currently legal actions and does not require predefined correct-action labels. For eligible future-collaborator decision points $\mathcal{D}_{\mathrm{rec}}$, SPL distills this soft target into the decentralized policy:
\begin{equation}
\mathcal{L}_{\mathrm{SPL}}^{\mathrm{rec}}
=
\mathbb{E}_{(j,t)\sim\mathcal{D}_{\mathrm{rec}}}
\left[
\operatorname{CE}
\left(
q_{\mathrm{SPL}}^j,
\pi_{\theta_j}(\cdot\mid\tau_t^j,M_t^j)
\right)
\right].
\label{eq:spl_receiver_loss}
\end{equation}
The target is treated as fixed during this update. Gradients therefore modify only the future-collaborator actor and do not propagate through the outcome model, recorded outcomes, or environment dynamics.

\paragraph{Training and Execution.}
SPL combines $\mathcal{L}_{\mathrm{SPL}}^{\mathrm{acq}}$ for earlier participants with $\mathcal{L}_{\mathrm{SPL}}^{\mathrm{rec}}$ for future collaborators, while the MARL backbone continues to optimize its native objective from the shared team return. After each rollout, attributable outcomes are added to $\mathcal{D}_{\mathrm{out}}$ and used to update the outcome model once replay-readiness conditions are met; the target parameters are updated by $\bar{\psi}\leftarrow(1-\rho)\bar{\psi}+\rho\psi$. SPL actor updates are interleaved with native actor optimization, while critic optimization follows the original backbone procedure. During execution, agents act solely through $u_t^i\sim\pi_{\theta_i}(\cdot\mid\tau_t^i,M_t^i)$; no acquisition model, auxiliary target, outcome replay, outcome model, or utility estimate is required. We provide detailed readiness conditions, update ordering, gradient boundaries, and the complete training algorithm in \textbf{Appendices~B.6 and B.7}.

\subsection{Theoretical Analysis}
\label{sec:theoretical-analysis}

We provide a local analysis of the auxiliary objectives introduced by SPL. The results characterize the prospective acquisition target, calibration of the outcome-based receiver model, and local decision quality induced by the receiver target under the structured task assumptions of Sec.~\ref{sec:spl}. The acquisition result instantiates a standard value-of-information argument for the top-$k$ surrogate used by SPL, while the receiver results characterize calibration and distillation errors. Together, these results support the construction of the SPL training signals but do not form an end-to-end guarantee or establish global convergence of the cooperative return. Full proofs, assumptions, and guarantee boundaries are provided in \textbf{Appendix~A}.

\paragraph{Prospective acquisition.}
Consider task families represented by the fixed binary-validity model used to construct acquisition supervision. Let $b=(b_1,\ldots,b_n)$ denote posterior validity marginals under this model, where $b_r=\Pr(W_r=1\mid c_t^i)$. If a future collaborator can execute at most $k$ tasks, define the top-$k$ posterior completion value as
$V_k(b)=\max_{S:|S|\leq k}\sum_{r\in S}b_r$.
For a purely informational acquisition action $u$, let $Z_u$ denote the resulting observation and $b^{Z_u}$ the updated posterior. Its model-relative prospective gain is
$\Delta_t^i(u)=\mathbb{E}[V_k(b^{Z_u})\mid c_t^i,u]-V_k(b)$.
Let $\mathcal{P}_t^i=\{u:M_t^i(u)=1,\;u\text{ is task-directed},\;\Delta_t^i(u)>0\}$ denote the legal task-directed actions with positive prospective gain.

\begin{proposition}[Prospective acquisition target]
\label{prop:value-aware-acquisition}
Assume that acquisition actions change only the available information and that posterior updating and predictive expectation are Bayesian-consistent under the acquisition model. Then $\Delta_t^i(u)\geq0$ for every legal acquisition action. Moreover, before adding the scan mass, if $\mathcal{T}_{\mathrm{acq}}$ applies Gibbs normalization over $\mathcal{P}_t^i$, the acquisition target satisfies $q_{\mathrm{acq}}^i(u)\propto\exp(s_t^i(u))$ and uniquely maximizes the corresponding entropy-regularized score objective.
\end{proposition}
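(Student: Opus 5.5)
The proof splits into two independent parts: nonnegativity of the prospective gain, and the Gibbs characterization of the target.

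\textbf{Nonnegativity of $\Delta_t^i(u)$.} First we show that $V_k$ is convex on the simplex-product of marginals. For each fixed set $S$ with $|S|\le k$, the map $b\mapsto\sum_{r\in S}b_r$ is linear in $b$. $V_k$ is a pointwise maximum of finitely many such linear maps, so it is convex.

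Next we use the assumption that acquisition is purely informational and Bayesian-consistent. Because $u$ does not change the validity variables $W$, the updated posteriors average back to the prior. This is the martingale property
\[
\mathbb{E}[b^{Z_u}_r\mid c_t^i,u]=\Pr(W_r=1\mid c_t^i)=b_r \quad\text{for every } r,
\]
obtained from the tower rule applied to $b^{Z_u}_r=\Pr(W_r=1\mid c_t^i,Z_u)$. Jensen's inequality for the convex $V_k$ then gives
\[
\mathbb{E}[V_k(b^{Z_u})\mid c_t^i,u]\ge V_k\bigl(\mathbb{E}[b^{Z_u}\mid c_t^i,u]\bigr)=V_k(b),
\]
so $\Delta_t^i(u)\ge0$.

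The main care point is this step. We must state that $Z_u$'s likelihood depends only on $W$ (and the context), not on future actions. We must also note that Jensen needs only finitely many outcomes, or integrability, which holds since $V_k\in[0,k]$ is bounded.

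\textbf{Gibbs form of the target.} Restrict to $\mathcal{P}_t^i$, where $\Delta_t^i(u)>0$. We treat the stabilizer as inactive, i.e.\ we assume $\Delta_t^i(u)\ge\epsilon_{\mathrm{acq}}$ on $\mathcal{P}_t^i$; this is exactly what makes $s_t^i(u)=\log\Delta_t^i(u)-d_t^i(u)/\tau_d$ finite and well defined there. By the definition of Gibbs normalization, $\mathcal{T}_{\mathrm{acq}}$ assigns
\[
q(u)=\frac{\exp(s_t^i(u))}{\sum_{v\in\mathcal{P}_t^i}\exp(s_t^i(v))}
\]
before the scan mass is mixed in, so $q_{\mathrm{acq}}^i(u)\propto\exp(s_t^i(u))$ holds by construction. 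Equivalently, $q(u)\propto\Delta_t^i(u)e^{-d_t^i(u)/\tau_d}$.

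\textbf{Variational characterization.} Consider the objective
\[
J(p)=\sum_{u\in\mathcal{P}_t^i}p(u)s_t^i(u)+H(p)
\]
over distributions $p$ on $\mathcal{P}_t^i$, where $H$ is Shannon entropy. The Gibbs variational identity gives
\[
J(p)=\log Z-\mathrm{KL}(p\,\|\,q),\qquad Z=\sum_{v\in\mathcal{P}_t^i}e^{s_t^i(v)}.
\]
This is verified by expanding $\log q(u)=s_t^i(u)-\log Z$. Since the KL divergence is nonnegative and vanishes iff $p=q$, the Gibbs target is the unique maximizer of $J$. Uniqueness also follows from strict concavity of $H$. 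If a temperature is desired, the same argument applies with $H$ scaled accordingly.

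Finally, the scan-mass reservation is a fixed mixture applied afterward. Inside $\mathcal{P}_t^i$ it only rescales $q$ by a constant, so proportionality is preserved. This is why the statement says "before adding the scan mass."
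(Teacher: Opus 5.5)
Your proof is correct and follows the paper's argument. For $\Delta_t^i(u)\ge0$ you use the same three steps: $V_k$ is convex as a pointwise maximum of linear maps, the posterior marginals satisfy $\mathbb{E}[b^{Z_u}\mid c_t^i,u]=b$, and Jensen's inequality then applies. For the Gibbs part you use the identity $J(p)=\log Z-\mathrm{KL}(p\,\|\,q)$, whereas the paper writes the Lagrange stationarity condition and invokes strict concavity of $H$; this is a cosmetic difference, and your version gives global optimality without having to check that the maximizer is interior.

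Your extra assumption $\Delta_t^i(u)\ge\epsilon_{\mathrm{acq}}$ on $\mathcal{P}_t^i$ is unnecessary, and your stated reason for it is off. The stabilized score $s_t^i$ is always finite, and $\Delta_t^i>0$ on $\mathcal{P}_t^i$ in any case, so your variational argument applies to the stabilized score directly.
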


\paragraph{Proof sketch.}
The top-$k$ value $V_k$ is the maximum of linear functions of the posterior marginals and is therefore convex. Bayesian consistency gives
$\mathbb{E}[b^{Z_u}\mid c_t^i,u]=b$, so Jensen's inequality implies
$\mathbb{E}[V_k(b^{Z_u})\mid c_t^i,u]\geq V_k(b)$.
The Gibbs form follows from the standard entropy-regularized maximization problem. The full proof is provided in \textbf{Appendix~A.2}.

\paragraph{Outcome prediction and local utility.}
For an attributable future-collaborator decision, let $Y\in\{0,1\}$ denote its downstream outcome and let $C\in\{0,1\}$ indicate whether that outcome is retained in $\mathcal{D}_{\mathrm{out}}$. We define
$p_*^{\,j}(u\mid x)=\Pr(Y=1\mid X=x,U=u,J=j,C=1)$,
with corresponding local utility
$U_*^{\,j}(u\mid x)=p_*^{\,j}(u\mid x)c_j^+(u)+[1-p_*^{\,j}(u\mid x)]c_j^-(u)$.
For any predictor $p$, let $\mathcal{L}_{\mathrm{out}}(p)$ denote its population binary-cross-entropy risk over retained outcomes and $U_p^j$ the induced local utility.

\begin{proposition}[Retained-outcome calibration]
\label{prop:outcome-calibration}
On the covered retained-outcome support, $\mathcal{L}_{\mathrm{out}}(p)$ is uniquely minimized pointwise by $p_*^{\,j}$. If $C_\Delta=\sup_{u,j}|c_j^+(u)-c_j^-(u)|<\infty$, then
\begin{equation}
\mathbb{E}\!\left[
\left(
U_p^J(U\mid X)-U_*^J(U\mid X)
\right)^2
\mid C=1
\right]
\leq
\frac{C_\Delta^2}{2}
\left[
\mathcal{L}_{\mathrm{out}}(p)
-
\mathcal{L}_{\mathrm{out}}(p_*)
\right].
\label{eq:theory-utility-error}
\end{equation}
\end{proposition}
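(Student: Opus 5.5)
The plan is to decompose the population BCE risk conditionally on the retained decision context, and then turn the excess risk into a bound on squared probability error and hence on squared utility error. Write $W=(X,U,J)$ for the retained decision tuple. By the tower property,
$\mathcal{L}_{\mathrm{out}}(p)=\mathbb{E}[\,\mathbb{E}[\ell_{\mathrm{BCE}}(Y,p^J(U\mid X))\mid W,C=1]\mid C=1]$.
The inner conditional expectation equals the binary cross-entropy between the Bernoulli law with parameter $p_*^{J}(U\mid X)$ and the Bernoulli law with parameter $p^J(U\mid X)$. It therefore splits as
\[
H\bigl(p_*^{J}(U\mid X)\bigr)+\mathrm{KL}\bigl(\mathrm{Ber}(p_*^{J}(U\mid X))\,\|\,\mathrm{Ber}(p^{J}(U\mid X))\bigr).
\]
The entropy term does not depend on $p$, so
\[
\mathcal{L}_{\mathrm{out}}(p)-\mathcal{L}_{\mathrm{out}}(p_*)=\mathbb{E}\bigl[\mathrm{KL}(\mathrm{Ber}(p_*)\,\|\,\mathrm{Ber}(p))\mid C=1\bigr].
\]
Because the Bernoulli KL is nonnegative and vanishes only when the two parameters agree, the risk is minimized pointwise, and uniquely, by $p_*^{j}$ wherever the retained-outcome distribution places mass. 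This is exactly why the statement is restricted to the covered support: off that support, $p$ is unconstrained.

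Next I relate the utilities. From the definitions,
\[
U_p^j(u\mid x)-U_*^j(u\mid x)=\bigl(p^j(u\mid x)-p_*^j(u\mid x)\bigr)\bigl(c_j^+(u)-c_j^-(u)\bigr),
\]
so the squared utility error is at most $C_\Delta^2\,(p-p_*)^2$ pointwise. It remains to bound $(p-p_*)^2$ by a multiple of the Bernoulli KL.

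For that step I invoke Pinsker's inequality, which gives $\mathrm{TV}\le\sqrt{\mathrm{KL}/2}$. For Bernoulli laws $\mathrm{TV}=|p-p_*|$, hence $(p-p_*)^2\le\tfrac12\mathrm{KL}$, or $\mathrm{KL}\ge 2(p-p_*)^2$. Taking conditional expectations given $C=1$ and combining with the utility identity yields
\[
\mathbb{E}\bigl[(U_p-U_*)^2\mid C=1\bigr]\le C_\Delta^2\,\mathbb{E}\bigl[(p-p_*)^2\mid C=1\bigr]\le \frac{C_\Delta^2}{2}\bigl[\mathcal{L}_{\mathrm{out}}(p)-\mathcal{L}_{\mathrm{out}}(p_*)\bigr],
\]
which is \eqref{eq:theory-utility-error}.

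The main obstacle is measure-theoretic rather than conceptual. The decomposition needs $\mathcal{L}_{\mathrm{out}}(p_*)<\infty$, and it must handle cases where $p\in\{0,1\}$ while $p_*$ lies strictly inside $(0,1)$. In those cases the KL is $+\infty$, so the inequality holds trivially, and I will note this explicitly. I will also make sure all expectations are taken under the retained-outcome distribution conditioned on $C=1$, so that the selection indicator never has to be removed.
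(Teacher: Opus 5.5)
Your proposal is correct and follows the paper's proof essentially step for step. The excess binary-cross-entropy risk is identified with the expected Bernoulli divergence $\operatorname{KL}(\operatorname{Ber}(p_*)\Vert\operatorname{Ber}(p))$, which gives pointwise uniqueness of $p_*^{\,j}$ on the covered support; the utility gap factors as $(p-p_*)(c_j^+-c_j^-)$; and Pinsker's inequality $|p-p_*|^2\le\frac{1}{2}\operatorname{KL}$ closes the bound. Your remarks on the finiteness of $\mathcal{L}_{\mathrm{out}}(p_*)$ (automatic, since Bernoulli entropy is at most $\log 2$) and on infinite KL at boundary predictions are harmless refinements that the paper leaves implicit.
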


\paragraph{Proof sketch.}
Excess binary-cross-entropy risk equals the Bernoulli KL divergence on the retained-outcome distribution. Since
$U_p^j-U_*^j=(p^j-p_*^j)(c_j^+-c_j^-)$,
Pinsker's inequality gives Eq.~\ref{eq:theory-utility-error}. The result applies to retained, selected receiver actions and is conditional on $C=1$; an unconditional interpretation additionally requires $Y\perp C\mid X,U,J$. The full proof is provided in \textbf{Appendix~A.3}.

\paragraph{Receiver decision quality.}
Fix an eligible future-collaborator decision with legal action set $\mathcal{A}$ and $K=|\mathcal{A}|$. Let $z_*$ denote the reference standardized score and $z_{\bar{\psi}}$ the score produced by the target outcome model. Define
$\delta=\|z_{\bar{\psi}}-z_*\|_\infty$,
$R_z=\max_{u\in\mathcal{A}}z_*(u)-\min_{u\in\mathcal{A}}z_*(u)$, and
\[
\eta
=
\operatorname{KL}(q_{\mathrm{SPL}}^j\Vert\pi_t^j)
=
\operatorname{CE}(q_{\mathrm{SPL}}^j,\pi_t^j)
-
H(q_{\mathrm{SPL}}^j),
\]
where $q_{\mathrm{SPL}}^j(u)\propto\exp(\beta z_{\bar{\psi}}(u))$.
Thus, $\eta$ is directly measurable from the receiver target and actor distributions. The score error $\delta$ additionally depends on generalization across the full legal action set and score standardization; controlling it from outcome-model error requires stronger coverage conditions.

\begin{theorem}[Local receiver score regret]
\label{thm:local-receiver-regret}
For any legal receiver policy $\pi_t^j$ satisfying $\operatorname{KL}(q_{\mathrm{SPL}}^j\Vert\pi_t^j)<\infty$,
\begin{equation}
\operatorname{Reg}_{z,t}^j(\pi_t^j)
\leq
\frac{\log K}{\beta}
+
2\delta
+
R_z\sqrt{\frac{\eta}{2}},
\label{eq:theory-local-regret}
\end{equation}
where
$\operatorname{Reg}_{z,t}^j(\pi_t^j)
=
\max_{u\in\mathcal{A}}z_*(u)
-
\mathbb{E}_{u\sim\pi_t^j}[z_*(u)]$.
\end{theorem}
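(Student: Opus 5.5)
The plan is to split the regret into three pieces by passing through the Gibbs target $q_{\mathrm{SPL}}^j$. Write $u^\star\in\arg\max_{u\in\mathcal{A}} z_*(u)$ and decompose
\[
\operatorname{Reg}_{z,t}^j(\pi_t^j)
=
\Bigl[z_*(u^\star)-\mathbb{E}_{q_{\mathrm{SPL}}^j}[z_*]\Bigr]
+
\Bigl[\mathbb{E}_{q_{\mathrm{SPL}}^j}[z_*]-\mathbb{E}_{\pi_t^j}[z_*]\Bigr].
\]
The first bracket measures how well the soft target itself chooses under the reference scores. The second bracket measures how far the actor is from the target.

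For the first bracket I will move to the learned scores. Since $|z_{\bar\psi}-z_*|\le\delta$ pointwise, we have $z_*(u^\star)\le z_{\bar\psi}(u^\star)+\delta\le\max_u z_{\bar\psi}(u)+\delta$. In the same way, $\mathbb{E}_{q}[z_*]\ge\mathbb{E}_q[z_{\bar\psi}]-\delta$, so the first bracket is at most $\max_u z_{\bar\psi}(u)-\mathbb{E}_q[z_{\bar\psi}]+2\delta$.

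The remaining term is the standard suboptimality of a softmax at inverse temperature $\beta$. The Gibbs distribution $q\propto\exp(\beta z_{\bar\psi})$ maximizes $\mathbb{E}_p[z_{\bar\psi}]+\beta^{-1}H(p)$ over the simplex on $\mathcal{A}$, which is the same variational fact used in Proposition~\ref{prop:value-aware-acquisition}. Comparing $q$ against the point mass at $\arg\max z_{\bar\psi}$ gives
\[
\mathbb{E}_q[z_{\bar\psi}]+\beta^{-1}H(q)\ge\max_u z_{\bar\psi}(u).
\]
Since $H(q)\le\log K$, the suboptimality is at most $\log K/\beta$.

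For the second bracket I will use the variation characterization of total variation. Because $\sum_u(q(u)-\pi(u))=0$, we can subtract the constant $c=\tfrac12(\max z_*+\min z_*)$ from $z_*$ without changing the difference. Then $|z_*-c|\le R_z/2$ on $\mathcal{A}$, and
\[
\Bigl|\mathbb{E}_q[z_*]-\mathbb{E}_\pi[z_*]\Bigr|\le\tfrac{R_z}{2}\|q-\pi\|_1=R_z\,\mathrm{TV}(q,\pi).
\]
Pinsker's inequality, $\mathrm{TV}(q,\pi)\le\sqrt{\mathrm{KL}(q\Vert\pi)/2}=\sqrt{\eta/2}$, then gives $R_z\sqrt{\eta/2}$. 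The finiteness assumption on the KL is exactly what makes this step meaningful.

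The policy $\pi_t^j$ is legal, so both distributions are supported on $\mathcal{A}$ and all expectations are finite sums. Adding the three bounds yields Eq.~\eqref{eq:theory-local-regret}.

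I expect the only delicate point to be the centering step that gives $R_z/2$ rather than $\max|z_*|$, together with making sure the $2\delta$ accounting uses pointwise sup-norm error at both $u^\star$ and under $q$. Neither step requires any property of the outcome model beyond the definition of $\delta$.
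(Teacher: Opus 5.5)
Your proposal is correct and follows the paper's proof essentially step for step: you add and subtract $\mathbb{E}_{q_{\mathrm{SPL}}^j}[z_*]$, pay $2\delta$ to pass to $z_{\bar{\psi}}$, bound the softmax gap by $\log K/\beta$ via the Gibbs-versus-point-mass comparison (the paper's soft-receiver-target lemma), and finish with $R_z\,\operatorname{TV}$ plus Pinsker. The only addition is that you justify the $R_z\,\operatorname{TV}(q_{\mathrm{SPL}}^j,\pi_t^j)$ bound by centering $z_*$ at its midrange, a step the paper states without comment.
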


\paragraph{Proof sketch.}
The Boltzmann target contributes a softness gap of at most $\log K/\beta$. Replacing $z_{\bar{\psi}}$ with $z_*$ contributes at most $2\delta$, and Pinsker's inequality bounds the remaining distillation error by $R_z\sqrt{\eta/2}$. The three terms therefore capture finite-temperature softness, score approximation, and incomplete receiver distillation. The full proof is provided in \textbf{Appendix~A.5}.
\begin{figure}[t]
    \centering
    \includegraphics[width=1\linewidth]{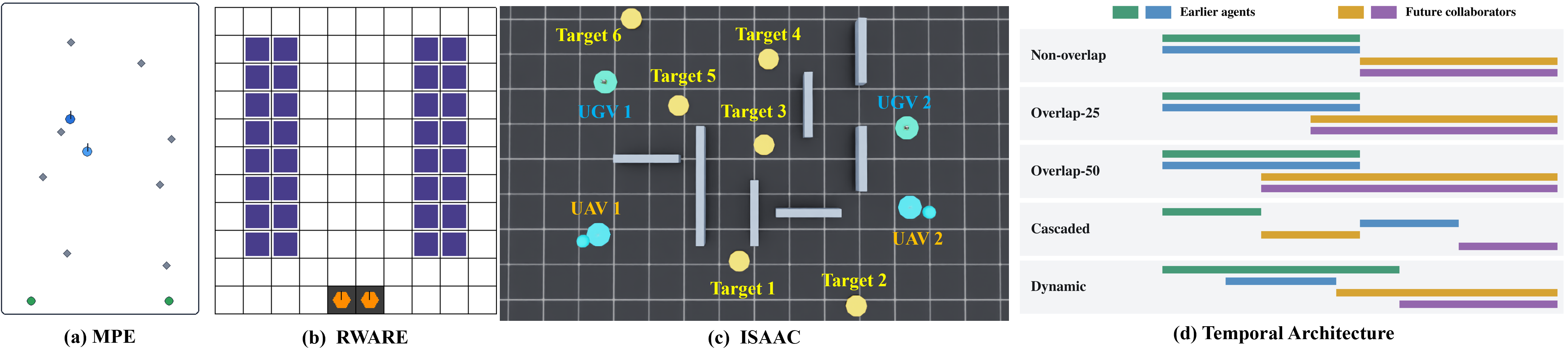}
    \caption{Evaluation environments and temporal participation patterns. (a) MPE, (b) RWARE, (c) the physics-based UAV--UGV environment in Isaac Lab, and (d) the five participation patterns used in MPE and RWARE.}
    \label{fig:env}
\end{figure}
\paragraph{Scope.}
The results characterize local components of SPL under the structured training assumptions of \textbf{Sec.~\ref{sec:spl}}. \textbf{Proposition~\ref{prop:value-aware-acquisition}} gives a model-relative information-value guarantee: internally consistent Bayesian updating preserves non-negative prospective gain, but misspecified priors can change how this gain relates to performance under the true environment distribution; this sensitivity is evaluated in \textbf{Appendix~D.6}. \textbf{Proposition~\ref{prop:outcome-calibration}} controls expected local-utility error on the retained selected-action distribution. \textbf{Theorem~\ref{thm:local-receiver-regret}} separates the measurable distillation error $\eta$ from the score-approximation term $\delta$ and bounds standardized local-score regret, not raw utility or full team return. Relating $\delta$ directly to outcome-model training requires stronger coverage and generalization assumptions across legal actions.

\begin{figure*}[t]
    \centering
    \includegraphics[width=1\linewidth]{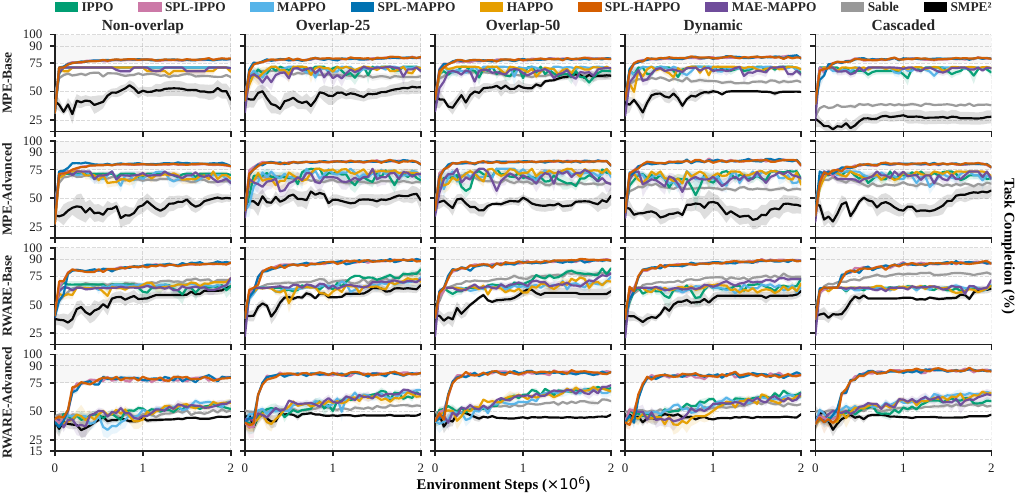}
    \caption{Training TC (\%) for all methods across 4 environments and 5 participation patterns averaged over 5 training seeds.}
    \label{fig:training-curves}
\end{figure*}

\section{Experimental Setup}\label{sec:experiments}
\paragraph{Environments.} We evaluate SPL in MPE, RWARE, and a physics-based UAV--UGV environment, where earlier participants gather task-relevant evidence before future collaborators perform downstream tasks. MPE-Base builds on the Multi-Agent Particle Environment~\citep{lowe2017multi}, while MPE-Advanced further randomizes task assignments, spawn positions, and future-collaborator identities. RWARE-Base transfers the same evidence-mediated cooperation problem to the Multi-Robot Warehouse domain~\citep{papoudakis2021benchmarking}, and RWARE-Advanced additionally introduces task permutation, a shared candidate pool, and dynamic task claims. Isaac Lab~\citep{mittal2025isaac} provides an embodied UAV--UGV setting with physical motion and obstacle constraints. Full environment specifications are given in \textbf{Appendices~B.1--B.3}. For MPE and RWARE, we evaluate five patterns: \emph{Non-overlap}, with a strict handoff between roles; \emph{Overlap-25} and \emph{Overlap-50}, with fixed temporal overlap; \emph{Dynamic}, with stochastic pair-specific handoff times; and \emph{Cascaded}, with successive participation intervals. Isaac Lab uses corresponding UAV--UGV schedules. Exact participation windows are provided in \textbf{Appendices~B.2--B.3}.

\paragraph{Implementation} We augment three PPO-based MARL backbones---IPPO, MAPPO, and HAPPO---to obtain SPL-IPPO, SPL-MAPPO, and SPL-HAPPO. MAPPO~\citep{yu2022surprising} is the primary backbone, while HAPPO~\citep{kuba2022trust} and IPPO~\citep{de2020independent} test compatibility with heterogeneous and independent learning, respectively. We additionally compare with MAE-MAPPO~\citep{pmlr-v337-qiao26a}, Sable~\citep{mahjoub2025sable}, and SMPE$^2$~\citep{kontogiannis2025enhancing} as complementary cross-method references. The primary controlled comparisons remain between each Vanilla backbone and its SPL-augmented counterpart. Baseline adaptations and implementation details are provided in \textbf{Appendix~B.5}. Our implementation builds on HARL~\citep{JMLR:v25:23-0488}. SPL preserves the native actor--critic architecture and training objective of each backbone and adds only role-specific auxiliary actor updates during training; SPL-specific models are not required during execution. Optimization details, training procedures, and observed training costs are reported in \textbf{Appendices~B--C}.
\paragraph{Evaluation.}
Task completion (TC) is the primary metric. The four-agent MPE/RWARE experiments use five independent training seeds per method and configuration, while the eight-agent and Isaac Lab evaluations use three; all final evaluations contain 200 episodes per seed. Results are reported as mean $\pm$ standard error across training seeds. Throughout the results, reported TC gains denote absolute differences between percentage-valued TC scores and are written using \%. The primary Vanilla--SPL comparisons are additionally examined using paired seed-level analyses where same-seed pairing is available; complete confidence intervals, Holm-adjusted tests, and pairing details are reported in \textbf{Appendix~D.7}. Additional metrics, including full success, return, failed execution, and exploration diagnostics, are reported in \textbf{Appendix~D}. All configurations are trained and evaluated independently, so the reported results characterize within-configuration performance rather than transfer across participation patterns or task structures.

\begin{table*}[t]
\centering
\caption{Final task completion (TC, \%) after 2M environment steps. Values are mean $\pm$ standard error over five training seeds. SPL columns are shaded; bold indicates the highest observed mean in each setting and does not denote statistical significance.}
\label{tab:s4-task-completion}

\scriptsize
\setlength{\tabcolsep}{1.8pt}
\renewcommand{\arraystretch}{0.98}

\resizebox{\textwidth}{!}{%
\begin{tabular}{@{}ll
c>{\columncolor{splbg}}c
c>{\columncolor{splbg}}c
c>{\columncolor{splbg}}c
ccc@{}}
\toprule
\multirow{2}{*}{Environment}
& \multirow{2}{*}{Participation}
& \multicolumn{2}{c}{IPPO}
& \multicolumn{2}{c}{MAPPO}
& \multicolumn{2}{c}{HAPPO}
& \multicolumn{3}{c}{Other baselines} \\
\cmidrule(lr){3-4}
\cmidrule(lr){5-6}
\cmidrule(lr){7-8}
\cmidrule(lr){9-11}
&
& Van. & \cellcolor{splbg}SPL
& Van. & \cellcolor{splbg}SPL
& Van. & \cellcolor{splbg}SPL
& MAE-MAPPO & Sable & SMPE$^2$ \\
\midrule

\multirow{5}{*}{MPE-Base}
& Non-overlap
& $70.50 \pm 0.00$
& $78.38 \pm 0.38$
& $70.50 \pm 0.00$
& $78.73 \pm 0.43$
& $67.45 \pm 3.05$
& $\mathbf{79.18 \pm 0.23}$
& $70.50 \pm 0.00$
& $62.70 \pm 0.25$
& $42.73 \pm 9.29$ \\

& Overlap-25
& $68.25 \pm 2.75$
& $79.35 \pm 0.57$
& $70.78 \pm 0.14$
& $80.25 \pm 0.21$
& $70.70 \pm 0.18$
& $\mathbf{80.35 \pm 0.29}$
& $67.93 \pm 2.95$
& $63.50 \pm 2.24$
& $53.85 \pm 5.43$ \\

& Overlap-50
& $67.85 \pm 3.15$
& $\mathbf{78.90 \pm 0.31}$
& $69.53 \pm 1.27$
& $\mathbf{78.90 \pm 0.24}$
& $69.80 \pm 0.92$
& $78.50 \pm 0.47$
& $70.63 \pm 0.23$
& $64.43 \pm 1.85$
& $63.95 \pm 3.34$ \\

& Dynamic
& $66.20 \pm 3.01$
& $\mathbf{79.65 \pm 0.18}$
& $70.23 \pm 0.59$
& $78.88 \pm 0.18$
& $69.53 \pm 0.99$
& $79.63 \pm 0.39$
& $64.85 \pm 3.06$
& $58.30 \pm 1.11$
& $49.70 \pm 1.04$ \\

& Cascaded
& $66.83 \pm 3.68$
& $78.90 \pm 0.70$
& $70.63 \pm 0.10$
& $\mathbf{78.93 \pm 0.48}$
& $70.68 \pm 0.19$
& $78.78 \pm 0.22$
& $70.45 \pm 0.05$
& $38.03 \pm 0.93$
& $27.93 \pm 5.74$ \\

\midrule

\multirow{5}{*}{MPE-Advanced}
& Non-overlap
& $70.38 \pm 0.00$
& $77.95 \pm 0.19$
& $64.08 \pm 3.86$
& $\mathbf{78.23 \pm 0.16}$
& $67.28 \pm 3.10$
& $78.03 \pm 0.16$
& $63.28 \pm 3.47$
& $64.90 \pm 1.36$
& $49.78 \pm 3.25$ \\

& Overlap-25
& $65.65 \pm 3.53$
& $\mathbf{79.95 \pm 0.52}$
& $68.28 \pm 1.50$
& $79.38 \pm 0.27$
& $70.93 \pm 0.56$
& $79.55 \pm 0.40$
& $71.40 \pm 0.23$
& $64.30 \pm 1.63$
& $47.83 \pm 9.02$ \\

& Overlap-50
& $71.30 \pm 0.22$
& $78.63 \pm 0.23$
& $68.58 \pm 3.05$
& $78.33 \pm 0.40$
& $71.40 \pm 0.14$
& $\mathbf{78.68 \pm 0.25}$
& $62.63 \pm 5.92$
& $61.88 \pm 1.48$
& $52.08 \pm 8.25$ \\

& Dynamic
& $66.63 \pm 3.66$
& $78.23 \pm 0.29$
& $67.05 \pm 3.20$
& $\mathbf{79.08 \pm 0.54}$
& $61.73 \pm 5.55$
& $78.80 \pm 0.18$
& $67.68 \pm 1.76$
& $57.60 \pm 1.30$
& $43.10 \pm 6.61$ \\

& Cascaded
& $67.68 \pm 2.58$
& $76.80 \pm 0.13$
& $67.28 \pm 3.10$
& $77.00 \pm 0.33$
& $69.28 \pm 1.10$
& $\mathbf{77.23 \pm 0.18}$
& $66.83 \pm 3.02$
& $62.73 \pm 0.98$
& $56.70 \pm 5.50$ \\

\midrule

\multirow{5}{*}{RWARE-Base}
& Non-overlap
& $66.45 \pm 5.86$
& $86.83 \pm 0.40$
& $71.38 \pm 2.27$
& $86.70 \pm 0.49$
& $73.18 \pm 1.50$
& $\mathbf{86.93 \pm 0.26}$
& $73.48 \pm 1.69$
& $71.80 \pm 0.98$
& $66.05 \pm 3.29$ \\

& Overlap-25
& $81.10 \pm 3.41$
& $\mathbf{89.15 \pm 0.16}$
& $74.25 \pm 3.41$
& $89.08 \pm 0.35$
& $70.15 \pm 2.93$
& $88.35 \pm 0.50$
& $72.50 \pm 2.86$
& $76.70 \pm 1.18$
& $67.28 \pm 2.87$ \\

& Overlap-50
& $82.00 \pm 1.12$
& $\mathbf{89.00 \pm 0.19}$
& $74.73 \pm 3.35$
& $88.58 \pm 0.28$
& $70.28 \pm 2.41$
& $88.80 \pm 0.23$
& $77.53 \pm 0.72$
& $77.23 \pm 1.97$
& $61.95 \pm 4.09$ \\

& Dynamic
& $72.88 \pm 1.19$
& $\mathbf{88.88 \pm 0.27}$
& $65.35 \pm 5.46$
& $88.70 \pm 0.30$
& $68.80 \pm 4.06$
& $88.40 \pm 0.42$
& $72.28 \pm 1.51$
& $74.63 \pm 1.22$
& $62.03 \pm 4.29$ \\

& Cascaded
& $66.80 \pm 1.98$
& $86.25 \pm 0.38$
& $67.53 \pm 2.05$
& $\mathbf{86.75 \pm 0.50}$
& $66.58 \pm 2.24$
& $85.95 \pm 0.79$
& $71.63 \pm 0.73$
& $76.83 \pm 0.93$
& $64.18 \pm 2.94$ \\

\midrule

\multirow{5}{*}{RWARE-Advanced}
& Non-overlap
& $52.08 \pm 1.37$
& $79.93 \pm 0.59$
& $58.63 \pm 2.31$
& $\mathbf{80.03 \pm 0.23}$
& $57.58 \pm 3.44$
& $79.55 \pm 0.79$
& $58.50 \pm 2.20$
& $50.70 \pm 0.32$
& $45.70 \pm 0.25$ \\

& Overlap-25
& $64.18 \pm 1.62$
& $83.23 \pm 0.47$
& $68.83 \pm 1.40$
& $83.65 \pm 0.32$
& $63.28 \pm 1.98$
& $\mathbf{83.73 \pm 0.48}$
& $64.70 \pm 1.81$
& $54.75 \pm 0.44$
& $47.68 \pm 0.66$ \\

& Overlap-50
& $70.35 \pm 1.28$
& $\mathbf{84.55 \pm 0.69}$
& $68.05 \pm 4.14$
& $84.45 \pm 0.33$
& $67.50 \pm 2.12$
& $84.43 \pm 0.56$
& $72.88 \pm 1.05$
& $58.88 \pm 0.77$
& $47.05 \pm 0.38$ \\

& Dynamic
& $66.60 \pm 1.95$
& $81.30 \pm 0.34$
& $62.55 \pm 2.83$
& $\mathbf{82.50 \pm 0.43}$
& $63.23 \pm 0.83$
& $82.20 \pm 0.16$
& $65.90 \pm 2.33$
& $56.30 \pm 0.49$
& $47.55 \pm 0.40$ \\

& Cascaded
& $58.95 \pm 1.70$
& $85.40 \pm 0.42$
& $65.55 \pm 1.76$
& $\mathbf{85.98 \pm 0.44}$
& $66.03 \pm 2.06$
& $84.90 \pm 0.51$
& $64.18 \pm 3.98$
& $54.90 \pm 1.08$
& $47.03 \pm 0.55$ \\

\bottomrule
\end{tabular}%
}
\end{table*}

\section{Results}
\subsection{Main Results}
\label{sec:main-results}

\textbf{Fig.~\ref{fig:training-curves}} shows the training task-completion trajectories across the four environments and five participation patterns. In most settings, the SPL variants reach a higher task-completion regime than their corresponding Vanilla backbones and maintain this difference through 2M environment steps. Although the Vanilla IPPO, MAPPO, and HAPPO trajectories differ, their SPL counterparts often converge to similar final performance, suggesting that the observed improvement is not specific to a single PPO-based actor--critic backbone. \textbf{Table~\ref{tab:s4-task-completion}} reports the final results for the 60 primary Vanilla--SPL comparisons. SPL achieves higher observed mean TC in every configuration, with an equal-weight descriptive mean difference of 14.09\%. The corresponding mean differences are 13.93\%, 14.02\%, and 14.33\% for IPPO, MAPPO, and HAPPO, respectively. Among the 57 configurations with paired inference, 56 pointwise 95\% confidence intervals have positive lower bounds, and 29 remain significant after Holm correction over the declared family of 60 comparisons at a familywise level of 0.05. Complete statistical results are reported in \textbf{Appendix~D.7}, while full TC, FS, return, and FE results are provided in \textbf{Appendices~D.1--D.4}.

Participation structure affects the magnitude of the observed improvement. Averaged across environments and backbones, SPL improves task completion by 15.1\% under Non-overlap, 13.3\% under Overlap-25, 11.7\% under Overlap-50, 15.5\% under Dynamic, and 14.9\% under Cascaded participation. Across the fixed Non-overlap, Overlap-25, and Overlap-50 schedules, the average gain decreases as cross-role concurrency increases. Since the exact temporal constructions differ between MPE and RWARE, we treat this pattern as descriptive and do not interpret it as a controlled effect of overlap. The positive gains under both overlap settings nevertheless show that SPL does not require a strictly sequential handoff. Improvements under Dynamic and Cascaded participation further show that the observed benefit extends to stochastic and successively staged participation schedules. We additionally evaluate SPL-MAPPO in eight-agent MPE-Advanced and RWARE-Advanced settings. As shown in \textbf{Table~\ref{tab:additional-tc}}, SPL-MAPPO achieves higher mean task completion than MAPPO in all six matched comparisons, with an average observed gain of 17.1\%. The gains range from 3.3\% in MPE-Advanced Cascaded to more than 30\% in the Non-overlap and Dynamic RWARE-Advanced settings. Non-overlap and Dynamic use 2M environment steps, whereas Cascaded uses 4M to accommodate its longer participation horizon; differences across participation patterns should therefore not be interpreted as controlled comparisons of training budget. SPL introduces no additional models during execution. Its training-time computational cost, however, varies across environments and team sizes: the overhead is small in several four-agent settings but becomes more pronounced in some larger-scale configurations. Detailed wall-clock measurements are reported in \textbf{Appendix~C}.

\begin{table}[t]
\centering
\caption{Task completion (\%) for eight-agent scaling and physics-based UAV--UGV evaluations using MAPPO.}
\label{tab:additional-tc}
\footnotesize
\setlength{\tabcolsep}{3.2pt}
\renewcommand{\arraystretch}{1.02}
\begin{tabular}{@{}l c Q c Q@{}}
\toprule
& \multicolumn{2}{c}{MPE-Advanced (8)}
& \multicolumn{2}{c}{RWARE-Advanced (8)} \\
\cmidrule(lr){2-3}
\cmidrule(lr){4-5}
Participation
& Vanilla & \cellcolor{splbg}SPL
& Vanilla & \cellcolor{splbg}SPL \\
\midrule
Non-overlap
& $64.21\pm0.61$
& $\mathbf{77.77\pm0.30}$
& $41.10\pm2.66$
& $\mathbf{74.38\pm1.24}$ \\

Dynamic
& $68.60\pm2.64$
& $\mathbf{77.75\pm0.20}$
& $44.81\pm1.57$
& $\mathbf{76.15\pm0.88}$ \\

Cascaded
& $71.79\pm0.27$
& $\mathbf{75.13\pm0.14}$
& $19.73\pm0.70$
& $\mathbf{31.75\pm1.06}$ \\

\midrule

& \multicolumn{2}{c}{1 UAV + 1 UGV}
& \multicolumn{2}{c}{2 UAVs + 2 UGVs} \\
\cmidrule(lr){2-3}
\cmidrule(lr){4-5}
Participation
& Vanilla & \cellcolor{splbg}SPL
& Vanilla & \cellcolor{splbg}SPL \\
\midrule
Non-overlap
& $31.50\pm5.97$
& $\mathbf{61.58\pm0.67}$
& $55.79\pm3.23$
& $\mathbf{76.38\pm1.52}$ \\

Dynamic
& $36.67\pm0.71$
& $\mathbf{59.58\pm1.79}$
& $58.46\pm3.40$
& $\mathbf{79.25\pm1.30}$ \\

\bottomrule
\end{tabular}
\end{table}

\subsection{Physics-Based UAV--UGV Evaluation}
\label{sec:isaac}

We further evaluate SPL in a physics-based UAV--UGV setting in Isaac Lab, where earlier UAVs navigate to sensing locations, and later UGVs navigate to and execute downstream tasks. This introduces embodied motion and obstacle constraints on both evidence acquisition and downstream execution. As shown in \textbf{Table~\ref{tab:additional-tc}}, SPL-MAPPO achieves higher mean task completion than MAPPO in all four team-size--participation settings. The gains range from 20.6\% to 30.1\% across both team sizes and occur under both Non-overlap and Dynamic participation. Attempt-level diagnostics show more resolved executions and successful completions with SPL-MAPPO, together with a lower observed failure rate per resolved attempt, while map and published evidence coverage vary with team size. These results provide complementary evidence that the observed SPL improvement persists when evidence acquisition and downstream task execution are coupled to embodied motion. Execution, coverage, and safety diagnostics are reported in \textbf{Appendix~D.5}.

\begin{figure}[t]
    \centering
    \includegraphics[width=0.5\linewidth]{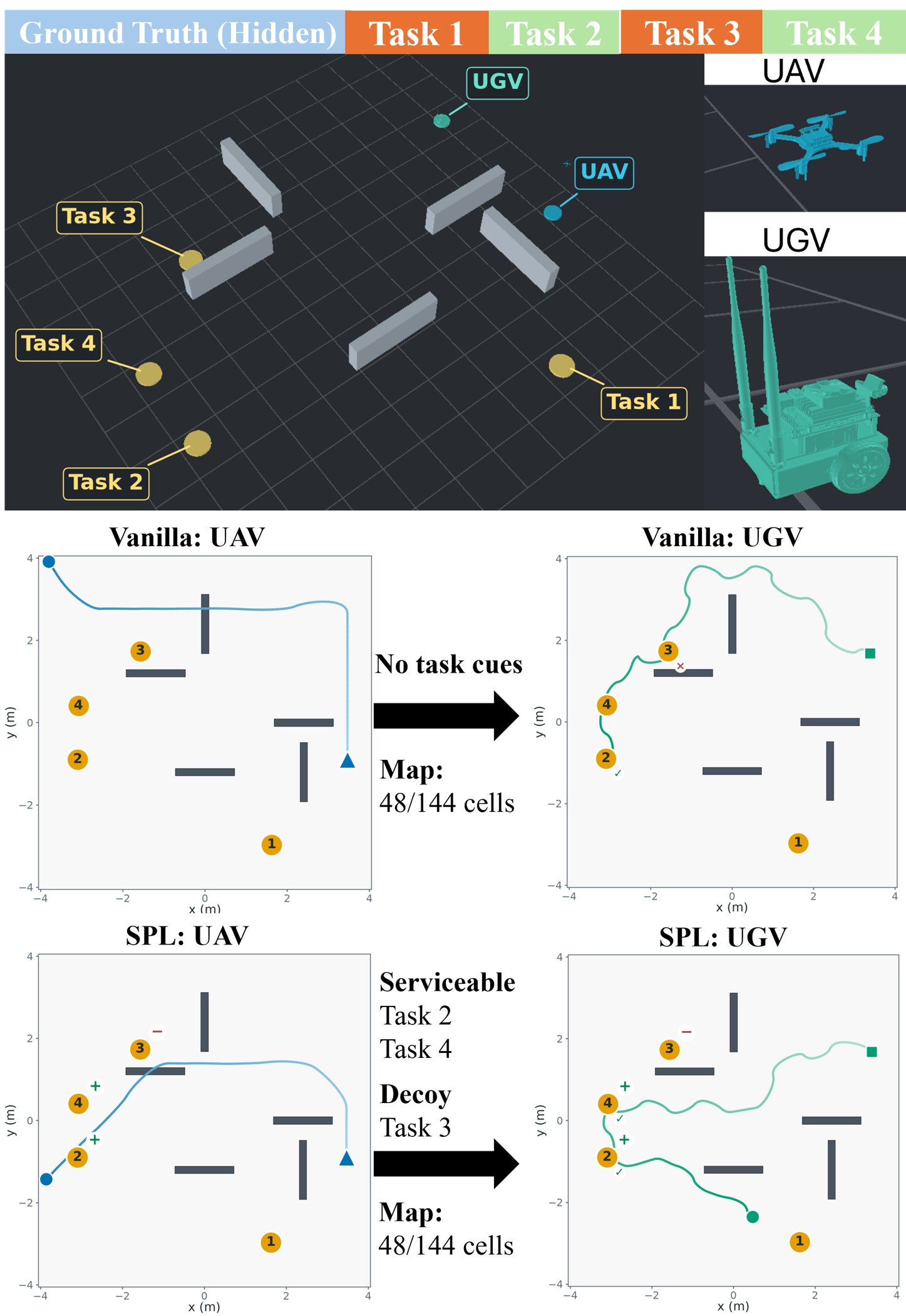}
    \caption{Isaac Lab case study comparing Vanilla and SPL based on MAPPO.}
    \label{fig:isaac-case}
\end{figure}

\paragraph{Qualitative case study.}
Figure~\ref{fig:isaac-case} compares MAPPO and SPL-MAPPO on the same deterministic fixed-handoff rollout in the one-UAV--one-UGV Isaac setting. The hidden ground truth, shown only for interpretation, contains two decoy tasks (Tasks~1 and~3) and two serviceable tasks (Tasks~2 and~4). All candidate locations are visible to the agents, while task serviceability must be inferred from acquired evidence. The UAV acts during the first 200 steps of the 600-step episode, after which the UGV acts using the persistent shared board. In the trajectory panels, triangles and squares mark UAV and UGV starting positions, circles mark trajectory endpoints, plus/minus symbols denote published task cues, and check/cross symbols denote UGV service outcomes; trajectory color darkens with time. At handoff, both methods have mapped 48 of 144 occupancy cells, but their task-specific evidence differs substantially. MAPPO publishes no task-quality cues, after which its UGV attempts decoy Task~3 at step~453 and completes serviceable Task~2 at step~577, leaving Task~4 unserved. SPL-MAPPO instead acquires cues for Tasks~2, 3, and~4, which are published to the shared board after the five-step delay: positive for Tasks~2 and~4 and negative for Task~3. Its UGV then completes Task~4 at step~438 and Task~2 at step~484 without attempting a decoy. This example makes the cross-stage information pathway explicit: similar spatial coverage can coexist with different task-specific evidence and, consequently, different downstream decisions.

\subsection{Ablation Study}
\label{sec:ablation}
We evaluate SPL-MAPPO under Non-overlap participation to examine the contribution of its two training components, the role of inherited evidence in receiver learning, and the dependence on structured training information. \emph{Acquisition-only} and \emph{Receiver-only} retain only the corresponding SPL component. \emph{Evidence-redacted} removes inherited task evidence consistently from both outcome-model inputs and receiver-target construction, whereas \emph{SKD-MAPPO} retains full evidence during outcome-model training but masks predecessor-generated evidence when constructing receiver targets. \emph{Outcome-agnostic} fixes candidate success probabilities to $0.5$, while \emph{Permuted} assigns predicted utilities to incorrect legal actions while preserving the receiver-update budget. \textbf{Table~\ref{tab:spl-ablation}} shows that full SPL achieves the highest observed mean task completion in all four environments. Relative to full SPL, Acquisition-only and Receiver-only reduce mean task completion by 10.7\% and 7.7\% on average, respectively. Neither component alone reproduces the complete method, supporting the complementary roles of evidence preparation and downstream evidence use.
\begin{table}[t]
\centering
\caption{SPL ablations under Non-overlap.}
\label{tab:spl-ablation}
\footnotesize
\setlength{\tabcolsep}{2.5pt}
\renewcommand{\arraystretch}{1.05}
\begin{tabular}{lcccc}
\toprule
Variant & MPE-Base & MPE-Adv. & RWARE-Base & RWARE-Adv. \\
\midrule
Full SPL
& $\mathbf{78.73\pm0.43}$
& $\mathbf{78.23\pm0.16}$
& $\mathbf{86.70\pm0.49}$
& $\mathbf{80.03\pm0.23}$ \\

SKD-MAPPO
& $70.50\pm0.00$
& $70.38\pm0.00$
& $70.75\pm0.00$
& $67.63\pm0.23$ \\

\midrule

Acquisition-only
& $70.50\pm0.00$
& $69.88\pm0.50$
& $77.80\pm2.37$
& $62.83\pm1.98$ \\

Receiver-only
& $73.45\pm1.29$
& $72.28\pm0.94$
& $74.40\pm1.94$
& $72.90\pm1.03$ \\

Evidence-redacted
& $70.50\pm0.00$
& $70.38\pm0.00$
& $70.75\pm0.00$
& $69.13\pm0.21$ \\

Outcome-agnostic
& $70.50\pm0.00$
& $70.38\pm0.00$
& $62.33\pm2.27$
& $48.48\pm0.80$ \\

Permuted
& $37.13\pm0.37$
& $37.05\pm0.87$
& $26.80\pm2.76$
& $28.95\pm0.84$ \\
\bottomrule
\end{tabular}
\end{table}

SKD-MAPPO also remains below full SPL in every environment, with an average gap of 11.1\%. Thus, matching SPL's training-side structured information and auxiliary machinery without evidence-conditioned receiver supervision does not reproduce the full SPL performance. We further test sensitivity to misspecified validity priors while keeping the true environment distribution unchanged. Even with fully reversed priors ($\lambda=1$), SPL-MAPPO retains $76.73\%$ and $77.45\%$ TC in MPE-Advanced and RWARE-Advanced, compared with nominal values of $78.23\%$ and $80.03\%$, respectively. Full success is more sensitive, particularly in RWARE-Advanced; complete results are reported in \textbf{Appendix~D.6}. These results show that SPL retains substantial task-completion performance under validity-prior misspecification in the tested settings. Other forms of structured-model error remain outside the scope of this experiment. Receiver-target ablations further show that the information used to construct the target matters. Evidence-redacted, Outcome-agnostic, and Permuted reduce mean task completion by 10.7\%, 18.0\%, and 48.4\% on average, respectively. The strong degradation under Permuted, despite preserving the receiver-update budget, shows that the correspondence between candidate actions and their predicted utilities is important. Preserving the auxiliary update budget alone is insufficient to recover the full SPL performance.

\section{Conclusion}
We studied cooperative MARL under staggered participation and introduced SPL, which combines prospective evidence acquisition with downstream evidence use. SPL achieves higher observed mean task completion across multiple policy-based backbones, participation patterns, larger teams, and physics-based UAV--UGV simulation. These results support explicitly addressing both information acquisition and downstream utilization when agents contribute over staggered participation windows.

\clearpage
\begingroup
\small
\setlength{\bibsep}{5pt}
\bibliographystyle{unsrtnat}
\bibliography{main}
\endgroup

\clearpage
\appendix
\numberwithin{equation}{section}
\counterwithin{table}{section}
\counterwithin{figure}{section}
\section{Proofs and Additional Analysis for SPL}
\label{app:spl-theory}

This section provides full proofs for Proposition~1, Proposition~2, and
Theorem~1 in the main paper, together with supporting lemmas and additional
analysis of the SPL auxiliary objectives. The acquisition result applies a
standard value-of-information argument to the top-$k$ completion surrogate
used by SPL. For the receiver analysis, we additionally relate the
distillation term directly to the training objective and give a conditional
connection between outcome-model error and standardized-score error.
Section~\ref{app:theory-guarantee-boundary} summarizes the scope of the
analysis.

\subsection{Notation and Assumptions}
\label{app:theory-setting}

Agent $i$ denotes an earlier participant and agent $j$ a future collaborator.
The actor-visible acquisition context of $i$ is denoted by $c_t^i$, while
$x_t^j$ denotes the actor-visible receiver context of $j$. The legal receiver
action set is
\[
\mathcal{A}_t^j
=
\{u\in\mathcal{U}_j:M_t^j(u)=1\}.
\]

For an attributable receiver decision, let $Y\in\{0,1\}$ denote its downstream
outcome and let $C\in\{0,1\}$ indicate whether that outcome is retained in
$\mathcal{D}_{\mathrm{out}}$. The retained-outcome probability is
\begin{equation}
p_*^{\,j}(u\mid x)
=
\Pr(Y=1\mid X=x,U=u,J=j,C=1).
\label{eq:theory-retained-outcome-probability}
\end{equation}
The corresponding local utility is
\begin{equation}
U_*^{\,j}(u\mid x)
=
p_*^{\,j}(u\mid x)c_j^+(u)
+
\left[1-p_*^{\,j}(u\mid x)\right]c_j^-(u),
\label{eq:theory-retained-local-utility}
\end{equation}
where $c_j^+(u)$ and $c_j^-(u)$ are the task-level utilities associated with
successful and unsuccessful execution, respectively. Illegal actions are
excluded from the distributions below.

\begin{assumption}[Attribution and censoring]
\label{ass:theory-attribution}
Whenever $C=1$, the recorded outcome is correctly attributed to the selected
receiver action. Receiver-side probability and utility results are conditional
on $C=1$. They admit an unconditional interpretation only when
\[
Y\perp C\mid X,U,J.
\]
\end{assumption}

\begin{assumption}[Coverage]
\label{ass:theory-coverage}
Every legal context--candidate pair on which a receiver guarantee is invoked
is selected with positive probability under the retained rollout distribution.
\end{assumption}

\subsection{Proof of Proposition 1: Prospective Acquisition Target}
\label{app:theory-acquisition}

Let $\mathcal{M}_{\mathrm{acq}}$ denote the fixed acquisition model used to
construct SPL supervision. Probabilities and expectations in this subsection
are taken with respect to this model. Consider binary task-validity variables
$W_1,\ldots,W_n$. Given acquisition context $c_t^i$, let
\[
b_r
=
\Pr_{\mathcal{M}_{\mathrm{acq}}}(W_r=1\mid c_t^i),
\qquad
b=(b_1,\ldots,b_n).
\]
If a future collaborator can execute at most $k$ tasks, define
\begin{equation}
V_k(b)
=
\max_{S\subseteq\{1,\ldots,n\}:|S|\leq k}
\sum_{r\in S} b_r .
\label{eq:theory-top-k-value}
\end{equation}

For a legal acquisition action $u$, let $Z_u$ denote its observation and
$b^{Z_u}$ the posterior marginals produced by
$\mathcal{M}_{\mathrm{acq}}$. Its model-relative prospective gain is
\begin{equation}
\Delta_t^i(u)
=
\mathbb{E}_{\mathcal{M}_{\mathrm{acq}}}
\left[
V_k(b^{Z_u})
\mid c_t^i,u
\right]
-
V_k(b).
\label{eq:theory-acquisition-gain}
\end{equation}
SPL combines this gain with acquisition distance through
\begin{equation}
s_t^i(u)
=
\log\!\left(\max\{\Delta_t^i(u),\epsilon_{\mathrm{acq}}\}\right)
-
\frac{d_t^i(u)}{\tau_d}.
\label{eq:theory-acquisition-score}
\end{equation}
Let
\[
\mathcal{P}_t^i
=
\left\{
u:
M_t^i(u)=1,\;
u\text{ is task-directed},\;
\Delta_t^i(u)>0
\right\}
\]
denote the legal positive-gain task-directed actions.

For completeness, the task-directed component of the SPL acquisition target is
\begin{equation}
\widetilde q_{\mathrm{acq}}^i(u)
=
\frac{\exp(s_t^i(u))}
{\sum_{v\in\mathcal{P}_t^i}\exp(s_t^i(v))}.
\label{eq:theory-acquisition-gibbs}
\end{equation}
Equivalently,
\begin{equation}
\widetilde q_{\mathrm{acq}}^i
=
\arg\max_{q\in\Delta(\mathcal{P}_t^i)}
\left\{
\sum_{u\in\mathcal{P}_t^i}q(u)s_t^i(u)
+
H(q)
\right\},
\label{eq:theory-acquisition-variational}
\end{equation}
where $H(q)=-\sum_u q(u)\log q(u)$.

\begin{proof}
For every feasible set $S$ with $|S|\leq k$, let
\[
L_S(b)=\sum_{r\in S}b_r.
\]
Since
\[
V_k(b)=\max_{S:|S|\leq k}L_S(b),
\]
$V_k$ is convex. Bayesian consistency under
$\mathcal{M}_{\mathrm{acq}}$ gives
\[
\mathbb{E}_{\mathcal{M}_{\mathrm{acq}}}
\left[
b^{Z_u}\mid c_t^i,u
\right]
=
b.
\]
Jensen's inequality therefore gives
\[
\mathbb{E}_{\mathcal{M}_{\mathrm{acq}}}
\left[
V_k(b^{Z_u})
\mid c_t^i,u
\right]
\geq
V_k
\left(
\mathbb{E}_{\mathcal{M}_{\mathrm{acq}}}
[b^{Z_u}\mid c_t^i,u]
\right)
=
V_k(b),
\]
which proves
\[
\Delta_t^i(u)\geq0.
\]
This is the standard nonnegative value-of-information argument applied to the
top-$k$ completion surrogate.

For the Gibbs characterization, consider
\[
F(q)
=
\sum_{u\in\mathcal{P}_t^i}q(u)s_t^i(u)
+
H(q)
\]
over the probability simplex. Introducing a Lagrange multiplier for
$\sum_u q(u)=1$ gives the stationary condition
\[
s_t^i(u)-\log q(u)-1+\lambda=0,
\]
hence
\[
q(u)\propto\exp(s_t^i(u)).
\]
Normalization yields Eq.~\eqref{eq:theory-acquisition-gibbs}. Since entropy is
strictly concave and the score term is linear, this solution is the unique
maximizer of Eq.~\eqref{eq:theory-acquisition-variational}.
\end{proof}

The complete acquisition target assigns the prescribed fixed mass to the scan
action and distributes the remaining mass according to
$\widetilde q_{\mathrm{acq}}^i$. If $\mathcal{P}_t^i$ is empty, the scan
action receives unit target mass. The implementation uses the small
positive-gain threshold and numerical constants reported in
Sec.~\ref{app:task-model-utilities}.

The nonnegativity result is model-relative. Bayesian consistency within
$\mathcal{M}_{\mathrm{acq}}$ does not require its assumed priors to equal the
true environment priors. Prior misspecification can therefore preserve
nonnegative model-relative gains while changing how those gains relate to
performance in the true environment; this sensitivity is evaluated in
Sec.~\ref{app:prior-misspecification}.

\subsection{Proof of Proposition 2: Retained-Outcome Calibration}
\label{app:theory-outcome}

For any measurable predictor $p^j(u\mid x)\in[0,1]$, define its population
binary-cross-entropy risk over retained outcomes as
\begin{equation}
\mathcal{L}_{\mathrm{out}}(p)
=
\mathbb{E}
\left[
\ell_{\mathrm{BCE}}
\left(
Y,p^J(U\mid X)
\right)
\mid C=1
\right].
\label{eq:theory-population-bce}
\end{equation}

The excess population risk satisfies
\begin{equation}
\mathcal{L}_{\mathrm{out}}(p)
-
\mathcal{L}_{\mathrm{out}}(p_*)
=
\mathbb{E}
\left[
\operatorname{KL}
\left(
\operatorname{Ber}(p_*^J(U\mid X))
\Vert
\operatorname{Ber}(p^J(U\mid X))
\right)
\mid C=1
\right].
\label{eq:theory-bce-excess-kl}
\end{equation}

Let
\[
C_\Delta
=
\sup_{u,j}
|c_j^+(u)-c_j^-(u)|
<
\infty.
\]
The induced local utility then satisfies
\begin{equation}
\mathbb{E}
\left[
\left(
U_p^J(U\mid X)
-
U_*^J(U\mid X)
\right)^2
\mid C=1
\right]
\leq
\frac{C_\Delta^2}{2}
\left[
\mathcal{L}_{\mathrm{out}}(p)
-
\mathcal{L}_{\mathrm{out}}(p_*)
\right].
\label{eq:theory-local-utility-error}
\end{equation}

\begin{proof}
Fix a covered triple $(x,u,j)$ and abbreviate
\[
p_*=p_*^{\,j}(u\mid x),
\qquad
p=p^j(u\mid x).
\]
The conditional binary-cross-entropy risk is
\[
r(p;p_*)
=
-p_*\log p
-
(1-p_*)\log(1-p).
\]
Its excess risk is
\[
r(p;p_*)-r(p_*;p_*)
=
\operatorname{KL}
\left(
\operatorname{Ber}(p_*)
\Vert
\operatorname{Ber}(p)
\right),
\]
which is non-negative and vanishes only when $p=p_*$. Taking expectations
over $(X,U,J)\mid C=1$ gives
Eq.~\eqref{eq:theory-bce-excess-kl}.

From Eq.~\eqref{eq:theory-retained-local-utility},
\[
U_p^j(u\mid x)-U_*^{\,j}(u\mid x)
=
\left[p^j(u\mid x)-p_*^{\,j}(u\mid x)\right]
\left[c_j^+(u)-c_j^-(u)\right].
\]
Hence
\[
\left|
U_p^j-U_*^{\,j}
\right|^2
\leq
C_\Delta^2|p-p_*|^2.
\]
Pinsker's inequality gives
\[
|p-p_*|^2
\leq
\frac{1}{2}
\operatorname{KL}
\left(
\operatorname{Ber}(p_*)
\Vert
\operatorname{Ber}(p)
\right).
\]
Combining the two inequalities and taking expectations proves
Eq.~\eqref{eq:theory-local-utility-error}.
\end{proof}

The result is conditional on retained attributable outcomes. Under the
additional condition
\[
Y\perp C\mid X,U,J,
\]
the retained-outcome probability also equals the corresponding unconditional
success probability.

\subsection{Receiver Target and Supporting Analysis}
\label{app:theory-receiver-target}

Fix an eligible receiver decision $(j,t)$ with legal action set
\[
\mathcal{A}=\mathcal{A}_t^j,
\qquad
K=|\mathcal{A}|.
\]
For any local utility vector $U$ over $\mathcal{A}$, let $\mu_U$ and
$\sigma_U^2$ denote its mean and variance over legal actions, and define the
standardized score
\begin{equation}
z_U(u)
=
\frac{
U(u)-\mu_U
}{
\sqrt{\sigma_U^2+\epsilon_z}
},
\qquad
\epsilon_z>0.
\label{eq:theory-standardised-score}
\end{equation}

For a score vector $z$, the SPL receiver target is
\begin{equation}
q_{\mathrm{SPL}}^j(u)
=
\frac{
\exp(\beta z(u))
}{
\sum_{v\in\mathcal{A}}\exp(\beta z(v))
},
\qquad
u\in\mathcal{A}.
\label{eq:theory-spl-target}
\end{equation}

\begin{lemma}[Soft receiver target]
\label{lem:theory-receiver-target}
For $\beta>0$, the distribution in Eq.~\eqref{eq:theory-spl-target}
uniquely maximizes
\begin{equation}
\sum_{u\in\mathcal{A}}q(u)z(u)
+
\frac{1}{\beta}H(q)
\label{eq:theory-receiver-variational}
\end{equation}
over $q\in\Delta(\mathcal{A})$, and satisfies
\begin{equation}
\max_{u\in\mathcal{A}}z(u)
-
\mathbb{E}_{u\sim q_{\mathrm{SPL}}^j}[z(u)]
\leq
\frac{\log K}{\beta}.
\label{eq:theory-temperature-gap}
\end{equation}
\end{lemma}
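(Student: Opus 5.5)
The plan reuses the Gibbs-variational argument from the proof of Proposition~\ref{prop:value-aware-acquisition}, applied to the rescaled objective. Let $F_\beta(q)=\sum_{u\in\mathcal{A}}q(u)z(u)+\beta^{-1}H(q)$ on $\Delta(\mathcal{A})$. Multiplying by $\beta>0$ does not change the maximizer, and it turns $F_\beta$ into $\sum_u q(u)\,\beta z(u)+H(q)$. This is exactly the form treated earlier, with score $\beta z$.

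\textbf{Maximizer.} Rather than the Lagrangian stationarity step, I would use the cleaner identity
\[
\sum_u q(u)\,\beta z(u)+H(q)=\log Z_\beta-\operatorname{KL}(q\Vert q_{\mathrm{SPL}}^j),
\qquad Z_\beta=\sum_{v\in\mathcal{A}}\exp(\beta z(v)).
\]
It follows by substituting $\log q_{\mathrm{SPL}}^j(u)=\beta z(u)-\log Z_\beta$ into the definition of the KL divergence. Since the KL divergence is nonnegative and vanishes only at $q=q_{\mathrm{SPL}}^j$, existence and uniqueness of the maximizer follow at once. The identity also shows that the maximal value of $F_\beta$ is $\beta^{-1}\log Z_\beta$.

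\textbf{Temperature gap.} Let $u^\star\in\arg\max_{u\in\mathcal{A}}z(u)$ and let $\delta_{u^\star}$ be the point mass at $u^\star$. Optimality of $q_{\mathrm{SPL}}^j$ gives $F_\beta(q_{\mathrm{SPL}}^j)\ge F_\beta(\delta_{u^\star})=z(u^\star)$, because $H(\delta_{u^\star})=0$. Rearranging,
\[
\max_u z(u)-\mathbb{E}_{q_{\mathrm{SPL}}^j}[z]\le \beta^{-1}H(q_{\mathrm{SPL}}^j)\le \beta^{-1}\log K,
\]
where the last step uses the fact that entropy on $K$ atoms is at most $\log K$. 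As a cross-check, the same bound follows from the log-sum-exp inequality $\log Z_\beta\le \beta\max_u z(u)+\log K$, combined with $\mathbb{E}_q[z]=\beta^{-1}(\log Z_\beta-H(q))$.

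\textbf{Main obstacle.} No step is genuinely hard. The only points needing care are stating uniqueness strictly via the KL identity and noting that $q_{\mathrm{SPL}}^j$ has full support on $\mathcal{A}$, so all logarithms are finite. I expect the gap inequality, specifically the comparison against the point mass, to be the step that actually carries content.
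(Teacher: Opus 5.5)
Your proof is correct. For the temperature gap it is the paper's argument: compare $q_{\mathrm{SPL}}^j$ with the point mass at $u^\star$ and use $H(q_{\mathrm{SPL}}^j)\le\log K$.

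\textbf{Where the routes differ.} The difference is in the maximizer step.
\begin{itemize}
\item The paper attaches a Lagrange multiplier to the normalization constraint only. It solves stationarity to get $q(u)\propto\exp(\beta z(u))$ and cites strict concavity of entropy for uniqueness. Global optimality over the simplex therefore implicitly rests on two further facts: the stationary point is interior, and the objective is concave (KKT sufficiency). This is the same template as in Proposition~\ref{prop:value-aware-acquisition}.
\item Your Gibbs variational identity, $\beta F_\beta(q)=\log Z_\beta-\operatorname{KL}(q\Vert q_{\mathrm{SPL}}^j)$, handles boundary distributions automatically. In one line it gives existence, uniqueness, the optimal value $\beta^{-1}\log Z_\beta$, and the exact suboptimality gap $\beta^{-1}\operatorname{KL}(q\Vert q_{\mathrm{SPL}}^j)$. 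The paper's version yields only the characterization of the maximizer.
\end{itemize}

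\textbf{A slip in your cross-check.} Combining $\mathbb{E}_{q_{\mathrm{SPL}}^j}[z]=\beta^{-1}\bigl(\log Z_\beta-H(q_{\mathrm{SPL}}^j)\bigr)$ with $\log Z_\beta\le\beta\max_u z(u)+\log K$ gives a lower bound on the gap, not an upper bound. The cross-check instead needs the other half of the log-sum-exp sandwich, $\log Z_\beta\ge\beta\max_u z(u)$, together with $H(q_{\mathrm{SPL}}^j)\le\log K$. This does not affect your main argument.
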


\begin{proof}
Introducing a Lagrange multiplier for the simplex constraint gives the
stationary condition
\[
z(u)
-
\frac{1}{\beta}
\left[\log q(u)+1\right]
+
\lambda
=
0,
\]
hence
\[
q(u)\propto\exp(\beta z(u)).
\]
Since the entropy term is strictly concave, this solution is unique and equals
Eq.~\eqref{eq:theory-spl-target}.

Let
\[
u^*\in\arg\max_{u\in\mathcal{A}}z(u).
\]
Optimality of $q_{\mathrm{SPL}}^j$ relative to the point mass on $u^*$ gives
\[
\mathbb{E}_{q_{\mathrm{SPL}}^j}[z]
+
\frac{1}{\beta}H(q_{\mathrm{SPL}}^j)
\geq
z(u^*).
\]
Using
\[
H(q_{\mathrm{SPL}}^j)\leq\log K
\]
yields Eq.~\eqref{eq:theory-temperature-gap}.
\end{proof}

\paragraph{Receiver distillation.}
For a fixed SPL target $q_{\mathrm{SPL}}^j$ and receiver policy $\pi_t^j$,
\begin{equation}
\operatorname{CE}(q_{\mathrm{SPL}}^j,\pi_t^j)
=
H(q_{\mathrm{SPL}}^j)
+
\operatorname{KL}(q_{\mathrm{SPL}}^j\Vert\pi_t^j).
\label{eq:theory-receiver-kl}
\end{equation}
Thus,
\begin{equation}
\eta
=
\operatorname{KL}(q_{\mathrm{SPL}}^j\Vert\pi_t^j)
=
\operatorname{CE}(q_{\mathrm{SPL}}^j,\pi_t^j)
-
H(q_{\mathrm{SPL}}^j),
\label{eq:theory-measurable-eta}
\end{equation}
so the local distillation term is directly computable from the receiver target
and actor distributions. Averaging
Eq.~\eqref{eq:theory-measurable-eta} over receiver decisions gives an
empirical diagnostic of target distillation.

\paragraph{Connection between outcome error and score error.}
The calibration result in Eq.~\eqref{eq:theory-local-utility-error} controls
utility error on selected actions, whereas receiver regret depends on scores
over the full legal action set. A quantitative connection follows under
stronger coverage. Let
\[
\rho^j(u\mid x)
=
\Pr(U=u\mid X=x,J=j,C=1).
\]
Suppose that, on the retained context distribution considered below, there
exists $\alpha>0$ such that
\[
\rho^j(u\mid x)\geq\alpha
\]
for every legal candidate whose utility is supplied by the outcome model.
Any remaining legal actions with fixed utilities are assumed identical in the
predicted and reference utility vectors.

\begin{lemma}[Mean score-error control]
\label{lem:theory-score-error-control}
Let $p_{\bar{\psi}}$ denote the target outcome predictor and let
\[
\delta(X,J)
=
\|z_{\bar{\psi}}-z_*\|_\infty.
\]
Under the uniform coverage condition above,
\begin{equation}
\mathbb{E}
\left[
\delta(X,J)^2
\mid C=1
\right]
\leq
\frac{C_\Delta^2}
{2\alpha\epsilon_z}
\left[
\mathcal{L}_{\mathrm{out}}(p_{\bar{\psi}})
-
\mathcal{L}_{\mathrm{out}}(p_*)
\right]
\leq
\frac{C_\Delta^2}
{2\alpha\epsilon_z}
\mathcal{L}_{\mathrm{out}}(p_{\bar{\psi}}).
\label{eq:theory-score-error-control}
\end{equation}
\end{lemma}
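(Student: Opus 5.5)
The plan is to prove the bound in three steps. First, a deterministic Lipschitz estimate for the standardization map shows that $\delta(X,J)$ is controlled by the utility error on the legal candidates. Second, the coverage assumption converts that worst-case error into an expectation over the selected action. Third, the existing inequality Eq.~\eqref{eq:theory-local-utility-error} of Proposition~\ref{prop:outcome-calibration} finishes the argument. The second inequality in Eq.~\eqref{eq:theory-score-error-control} is immediate: BCE risk is nonnegative, so $\mathcal{L}_{\mathrm{out}}(p_*)\ge 0$.

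\textbf{Step 1 (standardization is Lipschitz).} Fix $(x,j)$. Write $U=U_{\bar\psi}$, $V=U_*$ and $e=U-V$ on $\mathcal{A}$. By assumption $e$ vanishes on the fixed-utility actions. The denominator satisfies $\sqrt{\sigma^2+\epsilon_z}\ge\sqrt{\epsilon_z}$, so the crude part of the estimate is
\[
\bigl|z_U(u)-z_V(u)\bigr|\le \frac{|e(u)-\bar e|}{\sqrt{\epsilon_z}}+|V(u)-\mu_V|\,\Bigl|\tfrac{1}{\sqrt{\sigma_U^2+\epsilon_z}}-\tfrac{1}{\sqrt{\sigma_V^2+\epsilon_z}}\Bigr|,
\]
where $\bar e$ is the mean of $e$ over $\mathcal{A}$. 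The target is the clean bound
\[
\delta^2\le \frac{1}{\epsilon_z}\max_{u}e(u)^2\le\frac{1}{\epsilon_z}\sum_{u\in\mathcal{A}_{\mathrm{model}}}e(u)^2 .
\]
Getting exactly the constant $1/\epsilon_z$, with no factor such as $2$ or $K$, is the main obstacle. The route I would try is to view $z$ as the gradient-like map $U\mapsto (U-\mu)/\sqrt{\sigma^2+\epsilon_z}$. I would then bound its Jacobian operator norm in the centered subspace by $1/\sqrt{\sigma^2+\epsilon_z}\le 1/\sqrt{\epsilon_z}$. The Jacobian is $(I-zz^\top/K)/\sqrt{\sigma^2+\epsilon_z}$ restricted to centered vectors, and $\|z\|^2/K=\sigma^2/(\sigma^2+\epsilon_z)<1$, so this is a contraction scaled by $1/\sqrt{\sigma^2+\epsilon_z}$. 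Integrating along the segment from $V$ to $U$ then gives $\|z_U-z_V\|_2\le\|e-\bar e\|_2/\sqrt{\epsilon_z}\le\|e\|_2/\sqrt{\epsilon_z}$. Finally, $\delta\le\|z_U-z_V\|_2$. If constants are lost along the way, I would absorb them and note the slack.

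\textbf{Step 2 (coverage reweighting).} Since $\rho^j(u\mid x)\ge\alpha$ on model-supplied candidates,
\[
\sum_u e(u)^2\le\frac1\alpha\sum_u\rho^j(u\mid x)e(u)^2=\frac1\alpha\,\mathbb{E}\bigl[(U_{\bar\psi}^J(U\mid X)-U_*^J(U\mid X))^2\mid X=x,J=j,C=1\bigr].
\]
I would take the outer expectation over $(X,J)\mid C=1$ and combine it with Step 1. This gives
\[
\mathbb{E}[\delta^2\mid C=1]\le\frac{1}{\alpha\epsilon_z}\,\mathbb{E}\bigl[(U_{\bar\psi}-U_*)^2\mid C=1\bigr].
\]

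\textbf{Step 3.} I would apply Eq.~\eqref{eq:theory-local-utility-error} with $p=p_{\bar\psi}$, which bounds the right-hand side by $\frac{C_\Delta^2}{2}[\mathcal{L}_{\mathrm{out}}(p_{\bar\psi})-\mathcal{L}_{\mathrm{out}}(p_*)]$. This yields the claimed constant $C_\Delta^2/(2\alpha\epsilon_z)$. Assumption~\ref{ass:theory-coverage} is what keeps $p_*$ well defined on all model-supplied legal candidates.
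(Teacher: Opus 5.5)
Your proposal is correct and follows the same route as the paper's proof: a $1/\sqrt{\epsilon_z}$-Lipschitz bound for the standardization, coverage reweighting via $\rho^j\ge\alpha$, then Proposition~\ref{prop:outcome-calibration}, with $\mathcal{L}_{\mathrm{out}}(p_*)\ge 0$ giving the second inequality; your Jacobian argument, $(I-zz^\top/K)/\sqrt{\sigma^2+\epsilon_z}$ with $\|z\|_2^2/K<1$, actually justifies the Lipschitz constant that the paper only asserts. One small slip: the intermediate bound $\delta^2\le\epsilon_z^{-1}\max_u e(u)^2$ in your ``target'' is false in general, because centering can make $\|e-\bar e\|_\infty$ exceed $\|e\|_\infty$ when $\epsilon_z$ dominates the variance; however, your Jacobian route proves $\delta^2\le\|z_U-z_V\|_2^2\le\epsilon_z^{-1}\|e\|_2^2$ directly, so nothing depends on that step.
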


\begin{proof}
For a fixed legal action set, the stabilized standardization map in
Eq.~\eqref{eq:theory-standardised-score} is
$1/\sqrt{\epsilon_z}$-Lipschitz from the utility-vector $\ell_2$ norm to the
score-vector $\ell_\infty$ norm. Hence
\[
\delta(X,J)^2
\leq
\frac{
\|U_{\bar{\psi}}-U_*\|_2^2
}{
\epsilon_z
}.
\]
The uniform coverage condition gives
\[
\sum_{u\in\mathcal{A}}
\left(
U_{\bar{\psi}}^j(u\mid x)
-
U_*^j(u\mid x)
\right)^2
\leq
\frac{1}{\alpha}
\mathbb{E}
\left[
\left(
U_{\bar{\psi}}^J(U\mid X)
-
U_*^J(U\mid X)
\right)^2
\mid X=x,J=j,C=1
\right].
\]
Taking expectations over retained contexts and applying
Eq.~\eqref{eq:theory-local-utility-error} proves the first inequality in
Eq.~\eqref{eq:theory-score-error-control}. The second follows from
\[
\mathcal{L}_{\mathrm{out}}(p_*)\geq0.
\]
\end{proof}

Equation~\eqref{eq:theory-score-error-control} controls mean squared score
error under the retained context distribution; it is not a uniform
per-context bound. For fixed $\alpha$ and $\epsilon_z$, vanishing population
excess outcome risk implies vanishing mean squared score error. Held-out
binary-cross-entropy provides an empirical estimate of population outcome risk,
while finite-sample generalization remains separate from this population
statement.

\subsection{Proof of Theorem 1: Local Receiver Score Regret}
\label{app:theory-receiver-regret}

Let $z_*$ denote the standardized score induced by the retained-outcome local
utility $U_*^{\,j}$, and let $z_{\bar{\psi}}$ denote the score induced by the
target outcome model. Define
\[
\delta
=
\|z_{\bar{\psi}}-z_*\|_\infty,
\qquad
\eta
=
\operatorname{KL}
\left(
q_{\mathrm{SPL}}^j\Vert\pi_t^j
\right),
\]
where
\[
q_{\mathrm{SPL}}^j(u)
\propto
\exp(\beta z_{\bar{\psi}}(u)),
\]
and let
\[
R_z
=
\max_{u\in\mathcal{A}}z_*(u)
-
\min_{u\in\mathcal{A}}z_*(u).
\]

Theorem~1 in the main paper states that
\begin{equation}
\operatorname{Reg}_{z,t}^j(\pi_t^j)
\leq
\frac{\log K}{\beta}
+
2\delta
+
R_z\sqrt{\frac{\eta}{2}},
\label{eq:theory-receiver-regret}
\end{equation}
where
\begin{equation}
\operatorname{Reg}_{z,t}^j(\pi_t^j)
=
\max_{u\in\mathcal{A}}z_*(u)
-
\mathbb{E}_{u\sim\pi_t^j}[z_*(u)].
\label{eq:theory-score-regret-definition}
\end{equation}

\begin{proof}
Add and subtract the expected reference score under
$q_{\mathrm{SPL}}^j$:
\[
\operatorname{Reg}_{z,t}^j(\pi_t^j)
=
\left[
\max_u z_*(u)
-
\mathbb{E}_{q_{\mathrm{SPL}}^j}[z_*]
\right]
+
\left[
\mathbb{E}_{q_{\mathrm{SPL}}^j}[z_*]
-
\mathbb{E}_{\pi_t^j}[z_*]
\right].
\]

Since
\[
\|z_{\bar{\psi}}-z_*\|_\infty\leq\delta,
\]
we have
\[
\max_u z_*(u)
-
\mathbb{E}_{q_{\mathrm{SPL}}^j}[z_*]
\leq
\max_u z_{\bar{\psi}}(u)
-
\mathbb{E}_{q_{\mathrm{SPL}}^j}[z_{\bar{\psi}}]
+
2\delta.
\]
Applying Eq.~\eqref{eq:theory-temperature-gap} gives
\[
\max_u z_*(u)
-
\mathbb{E}_{q_{\mathrm{SPL}}^j}[z_*]
\leq
\frac{\log K}{\beta}
+
2\delta.
\]

For the second term,
\[
\mathbb{E}_{q_{\mathrm{SPL}}^j}[z_*]
-
\mathbb{E}_{\pi_t^j}[z_*]
\leq
R_z\,
\operatorname{TV}
\left(
q_{\mathrm{SPL}}^j,
\pi_t^j
\right).
\]
Pinsker's inequality gives
\[
\operatorname{TV}
\left(
q_{\mathrm{SPL}}^j,
\pi_t^j
\right)
\leq
\sqrt{
\frac{
\operatorname{KL}(q_{\mathrm{SPL}}^j\Vert\pi_t^j)
}{2}
}
=
\sqrt{\frac{\eta}{2}}.
\]
Combining the two bounds proves
Eq.~\eqref{eq:theory-receiver-regret}.
\end{proof}

The bound separates three local sources of receiver error:
finite-temperature softness, error in the evidence-conditioned score estimate,
and incomplete distillation of the SPL target into the decentralized policy.
Equation~\eqref{eq:theory-measurable-eta} makes the distillation term directly
observable during training, while
Lemma~\ref{lem:theory-score-error-control} gives a population-level connection
between score approximation and outcome-model error under uniform coverage.

\subsection{Scope of the Local Analysis}
\label{app:theory-guarantee-boundary}

The results above characterize the local auxiliary objectives used by SPL.
They support the construction and interpretation of the training signals but
do not establish global convergence or monotonic improvement of the
cooperative return.

\paragraph{Acquisition model and prior misspecification.}
Proposition~1 in the main paper is a model-relative value-of-information
result. Bayesian consistency within the acquisition model guarantees
non-negative prospective gain for the top-$k$ surrogate, including when the
model uses internally consistent but misspecified priors. Such
misspecification can change the relation between the acquisition score and
performance under the true environment distribution.
Section~\ref{app:prior-misspecification} evaluates this sensitivity while
keeping the remaining structured model components fixed.

\paragraph{Receiver approximation scope.}
Proposition~2 in the main paper controls expected local-utility error on the
retained selected-action distribution. The receiver distillation quantity
$\eta$ is directly computable from the target and actor distributions through
Eq.~\eqref{eq:theory-measurable-eta}.
Lemma~\ref{lem:theory-score-error-control} further connects mean squared
standardized-score error to population outcome-model risk under a uniform
retained-action coverage condition. It does not provide a uniform per-context
bound, and empirical validation loss introduces the usual finite-sample
generalization error. Theorem~1 in the main paper therefore remains a local
standardized-score guarantee, not a bound on raw utility or full team return.

\paragraph{Local utility and retained outcomes.}
The retained-outcome utility in
Eq.~\eqref{eq:theory-retained-local-utility} describes the task-level
consequence of an attributable receiver action and need not equal the full
Dec-POMDP action value. Receiver-side results are conditional on attributable
outcomes retained in $\mathcal{D}_{\mathrm{out}}$ and admit an unconditional
interpretation only under Assumption~\ref{ass:theory-attribution}. Under
partially overlapping participation, the receiver context may combine evidence
acquired before and after participation begins; the analysis conditions on
the complete actor-visible context and does not isolate the causal contribution
of pre-participation evidence.

\paragraph{Backbone optimization.}
SPL auxiliary actor updates are applied in addition to the native MAPPO, IPPO,
or HAPPO optimization. The analysis does not provide a convergence guarantee
for the resulting joint optimization; the reported task-completion and return
results are empirical.

\section{Implementation and Reproducibility Details}
\label{app:implementation-details}

This section provides the environment configurations, participation schedules, SPL target construction, optimization settings, and training procedure used in the reported experiments. All methods are implemented within the HARL framework. We denote the SPL extensions of MAPPO, IPPO, and HAPPO as SPL-MAPPO, SPL-IPPO, and SPL-HAPPO, respectively. Within each matched comparison, the corresponding unmodified backbone is referred to as MAPPO, IPPO, or HAPPO.

\subsection{Environment Configuration}
\label{app:environment-construction}

The primary four-agent MPE and RWARE evaluations contain two earlier participants, two future collaborators, and eight candidate tasks. Each domain includes Base and Advanced variants. We additionally evaluate SPL in physics-based UAV--UGV tasks in Isaac Lab with either one UAV and one UGV or two UAVs and two UGVs. Persistent task-relevant evidence produced by earlier participants remains available to future collaborators.

The Base variants use fixed task assignments and earlier-participant--future-collaborator correspondences. MPE-Advanced randomizes balanced task shards, spawn assignments, and future-collaborator identities, while RWARE-Advanced permutes task identifiers and uses a shared candidate pool with dynamic task claims. The Isaac Lab settings introduce physical motion, obstacles, and planner-mediated downstream task execution. The eight-agent scaling configurations are described separately below.

\begin{table}[!htbp]
\centering
\caption{Four-agent MPE/RWARE and Isaac Lab configurations.}
\label{tab:app-environment-summary}
\small
\setlength{\tabcolsep}{5pt}
\renewcommand{\arraystretch}{1.06}
\begin{adjustbox}{max width=\linewidth}
\begin{tabular}{lcccc}
\toprule
Environment & Earlier & Future & Tasks & Horizon \\
\midrule
MPE-Base       & 2 & 2 & 8 & 125 (250) \\
MPE-Advanced   & 2 & 2 & 8 & 125 (250) \\
RWARE-Base     & 2 & 2 & 8 & 500 (1000) \\
RWARE-Advanced & 2 & 2 & 8 & 500 (1000) \\
Isaac Lab (2-agent) & 1 UAV & 1 UGV & 4 & 600 \\
Isaac Lab (4-agent) & 2 UAVs & 2 UGVs & 6 & 840 \\
\bottomrule
\end{tabular}
\end{adjustbox}
\end{table}

For the primary four-agent MPE and RWARE settings, the environments use the same eight task-validity priors and a fixed-valid-task sampling rule. Probe outcomes are noisy but persistent, and each future collaborator can execute at most two candidate tasks. Table~\ref{tab:app-task-settings} summarizes the main task and evidence parameters.

\begin{table}[!htbp]
\centering
\caption{Task and evidence settings for MPE and RWARE.}
\label{tab:app-task-settings}
\small
\setlength{\tabcolsep}{5pt}
\renewcommand{\arraystretch}{1.06}
\begin{adjustbox}{max width=\linewidth}
\begin{tabular}{lcc}
\toprule
Setting & MPE & RWARE \\
\midrule
Candidate tasks & 8 & 8 \\
Validity priors
& \multicolumn{2}{c}{$(0.80,0.80,0.65,0.65,0.35,0.35,0.20,0.20)$} \\
Probe budget / earlier participant & 5 & 6 \\
Probe limit / task & 2 & 2 \\
Probe accuracy & 0.85 & 0.85 \\
Publication delay & 5 & 5 \\
Execution budget / future collaborator & 2 & 2 \\
Valid-task value & $\{12,12,10,10,8,8,6,6\}$ & 1 \\
Probe cost & 0.2 & 0.01 \\
Invalid-execution cost & 8.0 & 0.25 \\
\bottomrule
\end{tabular}
\end{adjustbox}
\end{table}

\subsection{Temporal Participation Patterns}
\label{app:temporal-patterns}

Participation intervals follow the half-open convention $[t_{\mathrm{start}},t_{\mathrm{end}})$. Inactive agents take a null environment action and make no effective policy decision. The Base and Advanced variants within each domain use the same participation schedules.

We evaluate five participation patterns. \emph{Non-overlap} uses a strict team-wide handoff from earlier participants to future collaborators. \emph{Overlap-25} and \emph{Overlap-50} introduce fixed levels of cross-role overlap. \emph{Dynamic} samples pair-specific handoff times each episode while preserving aggregate role exposure. \emph{Cascaded} distributes the two earlier-participant--future-collaborator pairs across successive participation intervals, with an extended horizon to preserve total role exposure.

\begin{table}[!htbp]
\centering
\caption{Four-agent participation schedules.}
\label{tab:app-temporal-windows}
\small
\setlength{\tabcolsep}{3.2pt}
\renewcommand{\arraystretch}{1.04}

\begin{adjustbox}{max width=\linewidth}
\begin{tabular}{@{}llccccc@{}}
\toprule
Domain & Participation & Horizon & $P_0$ & $P_1$ & $R_0$ & $R_1$ \\
\midrule

MPE & Non-overlap & 125
& $[0,45)$ & $[0,45)$ & $[45,125)$ & $[45,125)$ \\

MPE & Overlap-25 & 125
& $[0,45)$ & $[0,45)$ & $[34,125)$ & $[34,125)$ \\

MPE & Overlap-50 & 125
& $[0,45)$ & $[0,45)$ & $[22,125)$ & $[22,125)$ \\

MPE & Dynamic & 125
& $[0,h_0)$ & $[0,h_1)$ & $[h_0,125)$ & $[h_1,125)$ \\

MPE & Cascaded & 250
& $[0,45)$ & $[125,170)$ & $[45,125)$ & $[170,250)$ \\

\midrule

RWARE & Non-overlap & 500
& $[0,200)$ & $[0,200)$ & $[200,500)$ & $[200,500)$ \\

RWARE & Overlap-25 & 500
& $[0,175)$ & $[0,225)$ & $[175,500)$ & $[225,500)$ \\

RWARE & Overlap-50 & 500
& $[0,150)$ & $[0,250)$ & $[150,500)$ & $[250,500)$ \\

RWARE & Dynamic & 500
& $[0,h_0)$ & $[0,h_1)$ & $[h_0,500)$ & $[h_1,500)$ \\

RWARE & Cascaded & 1000
& $[0,200)$ & $[500,700)$ & $[200,500)$ & $[700,1000)$ \\

\bottomrule
\end{tabular}
\end{adjustbox}
\end{table}

Here, $P_0,P_1$ denote earlier participants and $R_0,R_1$ denote future collaborators. In MPE, Overlap-25 and Overlap-50 provide 11 and 23 steps of cross-role overlap, respectively; the corresponding RWARE overlaps are 50 and 100 steps. For Dynamic participation,
$h_0=\bar h+\delta$ and $h_1=\bar h-\delta$, where $\delta$ is sampled once per episode from a symmetric discrete uniform distribution. MPE uses $\bar h=45$ and $\delta\in\{-11,\ldots,11\}$, while RWARE uses $\bar h=200$ and $\delta\in\{-50,\ldots,50\}$. Consequently, Dynamic changes the pair-specific handoff times while preserving aggregate earlier-participant and future-collaborator exposure. Cascaded preserves the same role exposure while changing its temporal ordering.

\subsection{Isaac Lab Configuration}
\label{app:isaac-configuration}

The two-agent Isaac Lab task contains four candidate sites, of which two are serviceable in each episode, on a $12\times12$ occupancy grid with five barriers. The four-agent setting contains six candidate sites, of which four are serviceable. Candidate positions are public from reset, while serviceability remains hidden. UAV sensing has accuracy $0.85$ and a five-step publication delay. UGV motion is planned from the published occupancy map, and selecting a task commits the UGV to its physical execution. Neither the actor nor the critic receives hidden serviceability.

\begin{table}[!htbp]
\centering
\caption{Isaac Lab participation and training configurations.}
\label{tab:app-isaac-settings}
\small
\setlength{\tabcolsep}{4pt}
\renewcommand{\arraystretch}{1.06}
\begin{adjustbox}{max width=\linewidth}
\begin{tabular}{@{}lll>{\raggedright\arraybackslash}p{3.7cm}@{}}
\toprule
Agents & Participation & Training steps & Participation windows \\
\midrule
1 UAV + 1 UGV
& Non-overlap
& 1,996,800
& UAV $[0,200)$; UGV $[200,600)$ \\

1 UAV + 1 UGV
& Dynamic
& 1,996,800
& UAV $[0,h)$; UGV $[h,600)$, $h\sim\operatorname{Unif}\{150,\ldots,250\}$ \\

2 UAVs + 2 UGVs
& Non-overlap
& 3,010,560
& UAVs $[0,300)$; UGVs $[300,840)$ \\

2 UAVs + 2 UGVs
& Dynamic
& 3,010,560
& UAVs $[0,300)$; UGVs $[h,h+540)$, $h\in\{150,180,\ldots,300\}$ \\
\bottomrule
\end{tabular}
\end{adjustbox}
\end{table}

Dynamic handoff times are sampled per episode. The two-agent setting exposes the normalized handoff step to the actors, while the four-agent setting uses a shared UGV participation window with a sampled start time. MAPPO and SPL-MAPPO use matched participation schedules and evaluation instances. Each setting uses three independent training seeds and 200 final evaluation episodes per seed. Final policies are evaluated deterministically, and checkpoints are not selected by performance.

\subsection{SPL Target Construction}
\label{app:task-model-utilities}

\paragraph{Prospective acquisition target.}
The frozen acquisition model constructs the task-validity posterior from the configured validity priors, the fixed number of valid tasks, and public probe outcomes available in the actor-visible context. Repeated probe observations are aggregated through their positive and negative counts using the configured probe accuracy. Hidden task validity and future outcomes are never provided to either the actor or the target-construction procedure.

Posterior marginals $b_r=\Pr(W_r=1\mid c_t^i)$ are used to compute the prospective gain $\Delta_t^i(u)$ defined in Eq.~\eqref{eq:theory-acquisition-gain}. In the implementation, task-directed actions with
$\Delta_t^i(u)\leq10^{-8}$ receive no acquisition-target mass. For the remaining legal task-directed actions,
\begin{equation}
w_t^i(u)
=
\max\{\Delta_t^i(u),\epsilon_{\mathrm{acq}}\}
\exp\!\left(
-\frac{d_t^i(u)}{\tau_d}
\right),
\qquad
\epsilon_{\mathrm{acq}}=10^{-12},
\quad
\tau_d=0.50.
\label{eq:app-acquisition-weight}
\end{equation}
The weights are normalized over eligible task-directed actions. The target assigns $0.95$ total probability mass to this distribution and $0.05$ to the scan action. If no positive-gain task-directed action is legal, the scan action receives unit target mass. The acquisition model and target-construction rule remain fixed throughout training.

\paragraph{Outcome-supervised receiver target.}
For each future collaborator $j$, SPL uses a candidate-shared outcome model
\begin{equation}
p_{\psi}^{\,j}(u\mid x)
=
\operatorname{sigmoid}
\left(
f_{\psi}(g_j(x,u))
\right),
\label{eq:app-outcome-model}
\end{equation}
where $g_j(x,u)$ contains the actor-visible pre-decision context, candidate-action features, and collaborator-role identity. The same scorer $f_\psi$ is applied across candidate actions. Training labels are available only for selected actions whose outcomes can be reliably attributed; unselected actions receive no counterfactual labels.

A slowly updated target model $p_{\bar{\psi}}$ is used to construct the receiver target. For legal candidate action $u$,
\begin{equation}
U_{\bar{\psi}}^{\,j}(u\mid x)
=
p_{\bar{\psi}}^{\,j}(u\mid x)c_j^+(u)
+
\left[
1-p_{\bar{\psi}}^{\,j}(u\mid x)
\right]c_j^-(u).
\label{eq:app-local-utility}
\end{equation}
The task-level utilities are fixed during training. In MPE,
$c_j^+(u)=v(u)/12$ with $v(u)\in\{12,10,8,6\}$ and
$c_j^-(u)=-8/12$.
In RWARE,
$c_j^+(u)=1$ and $c_j^-(u)=-0.25$.
In Isaac Lab,
$c_j^+(u)=1$ and $c_j^-(u)=0$.
The no-op utility is zero. These quantities define the local utilities used for SPL supervision and are not interpreted as the complete environment return.

Utilities are standardized over the current legal action set using
$\epsilon_z=10^{-6}$ as the variance stabilizer, following
Eq.~\eqref{eq:theory-standardised-score}. The receiver target is then the Boltzmann distribution in Eq.~\eqref{eq:theory-spl-target}. All reported experiments use $\beta=1$. Receiver supervision uses this soft utility target directly, without mixing it with the current actor policy or converting it to a one-hot target.

\begin{table}[!htbp]
\centering
\caption{Training hyperparameters for the PPO backbones and SPL.}
\label{tab:app-hyperparameters}
\small
\setlength{\tabcolsep}{6pt}
\renewcommand{\arraystretch}{1.10}

\begin{adjustbox}{max width=\linewidth}
\begin{tabular}{@{}lc@{\hspace{1.2cm}}lc@{}}
\toprule
\multicolumn{2}{c}{\textbf{PPO backbone}}
&
\multicolumn{2}{c}{\textbf{SPL-specific}} \\
\cmidrule(lr){1-2}
\cmidrule(lr){3-4}
Parameter & Value & Parameter & Value \\
\midrule

Actor hidden layers
& $[128,128]$
& Outcome hidden width
& 128 \\

Activation
& ReLU
& Outcome learning rate
& $3\times10^{-4}$ \\

Actor learning rate
& $1\times10^{-4}$
& Replay capacity
& 20,000 \\

Critic learning rate
& $5\times10^{-4}$
& Outcome minibatch size
& 128 \\

PPO epochs
& 3
& Outcome gradient steps
& 4 \\

Critic epochs
& 3
& Target EMA rate $\rho$
& 0.05 \\

PPO clip
& 0.2
& Outcome-model gate
& 128 \\

Entropy coefficient
& 0.002
& Per-class model gate
& 16 \\

Discount $\gamma$
& 0.99
& Receiver-policy gate
& 1,024 \\

GAE $\lambda$
& 0.95
& Per-class receiver gate
& 128 \\

Value normalization
& enabled
& Acquisition epochs
& 1 \\

Recurrent policy
& disabled
& Receiver epochs
& 20 \\

Parameter sharing
& disabled
& Acquisition task mass
& 0.95 \\

Gradient norm cap
& 10
& Distance temperature $\tau_d$
& 0.50 \\

&
&
Receiver target strength $\beta$
& 1.0 \\

\bottomrule
\end{tabular}
\end{adjustbox}
\end{table}

\subsection{Optimization and Training Settings}
\label{app:hyperparameters}

MAPPO, IPPO, and HAPPO use the same feed-forward actor architecture and PPO hyperparameters listed in Table~\ref{tab:app-hyperparameters}. Policies are not parameter-shared within a team. MAPPO and HAPPO retain centralized critics, whereas IPPO uses independent critics based on local observations. SPL does not use centralized critic state and does not modify the native reward, return estimator, critic objective, or critic target. The corresponding augmented methods are denoted SPL-MAPPO, SPL-IPPO, and SPL-HAPPO. Vanilla--SPL comparisons use the same backbone configuration, environment, participation schedule, training budget, and evaluation protocol.

All primary four-agent MPE and RWARE method--environment--participation combinations are trained for two million environment steps without early stopping. Reported results use five independent training seeds per method and configuration, with 200 final evaluation episodes per seed. Training seeds are the independent replication units. Final policies are evaluated deterministically using a fixed 200-episode reset-seed schedule reused across training seeds within each environment protocol. The scaling and Isaac Lab settings use the training procedures described in their respective configuration sections.

\paragraph{Training-seed pairing.}
A retrospective audit recovered all 600 expected method--training-seed records for the 60 primary Vanilla--SPL comparisons. Same-training-seed pairing was verified for 285 pairs across 57 configurations, providing five paired training seeds in each of those configurations. The Non-overlap MAPPO comparisons in MPE-Advanced, RWARE-Base, and RWARE-Advanced use different Vanilla and SPL training-seed sets. Their observed method means remain included in the descriptive results, but paired confidence intervals and paired $p$-values are reported as unavailable. Consequently, paired inference covers 57 of the 60 primary configurations. Full statistical details are provided in Appendix~D.7.

\paragraph{Baseline implementation and adaptation.}
MAE-MAPPO uses the same per-agent observations, legal task-option masks, team rewards, participation schedules, and evaluation protocol as MAPPO. For Sable and SMPE$^2$, we retain the original learning architectures and optimization objectives and adapt only the environment interface to the SP tasks. Both methods receive the same environment-specific local observations, legal task-option masks, shared rewards, participation schedules, and episode boundaries as the PPO-based methods. A common task-option controller maps each legal high-level choice to the corresponding native environment action, while inactive agents receive only the environment-defined no-op action. Where global or state-level inputs are required, they are constructed solely from non-privileged observable information and do not expose hidden task validity. Sable retains its sequence-model rollout procedure, while SMPE$^2$ retains its recurrent policy, critic, state-VAE, and state-filtering components. All primary four-agent baseline comparisons use the same two-million-step training budget, five independent training seeds per method and configuration, and 200 final evaluation episodes per seed.

\paragraph{Knowledge-matched control.}
SKD-MAPPO uses the same structured task information, prospective acquisition supervision, task utilities, outcome-model training procedure, target-model update, readiness conditions, and auxiliary update budget as SPL-MAPPO. Its online outcome model is trained from the same complete actor-visible pre-decision context and attributable outcome labels as in SPL. When the learned target outcome model is subsequently used to construct receiver supervision, however, predecessor-generated task evidence is masked from its input. The decentralized actor observation remains unchanged. SKD-MAPPO therefore retains evidence during outcome-model learning but withholds that evidence when constructing the receiver target. This control tests the effect of removing inherited evidence specifically from the target-construction pathway while leaving the remaining SPL training machinery unchanged.

\paragraph{Evidence-redacted control.}
Evidence-redacted uses the same MAPPO backbone, acquisition supervision, task utilities, attributable outcome labels, readiness conditions, receiver utility construction, and auxiliary update budget as SPL-MAPPO. Unlike SKD-MAPPO, inherited predecessor-generated task evidence is removed consistently from the feature vectors supplied to both the online outcome model during outcome-model training and the target outcome model during receiver-target construction. The replay buffer retains the original actor-visible pre-decision context, and redaction is applied when outcome-model features are constructed. The decentralized actor observation is not modified. Evidence-redacted therefore evaluates receiver learning when inherited task evidence is unavailable to both outcome prediction and receiver-target construction, while preserving the remaining training and execution pipeline.

The distinction between the two controls is thus the stage at which inherited evidence is removed. SKD-MAPPO allows the outcome model to learn from the complete evidence-conditioned context but removes predecessor evidence when that model is queried to construct receiver supervision. Evidence-redacted removes the same class of inherited evidence from both outcome-model learning and receiver-target construction. Neither control removes the evidence from the decentralized actor observation, and neither exposes hidden task validity or future outcomes to the actor or critic.

\subsection{Training Procedure and Decentralized Execution}
\label{app:spl-training}

\paragraph{Outcome replay and readiness.}
When a future collaborator selects a task-directed action, SPL stores its actor-visible pre-decision context, selected action, collaborator-role identity, and subsequently attributable outcome. Successful completion is labeled $1$, while an attributable unsuccessful attempt is labeled $0$. Interrupted, expired, unresolved, or otherwise ambiguous attempts are excluded and receive no synthetic labels. Information observed after the decision is used only to determine the outcome label and is not included in the stored pre-decision context.

Outcome-model optimization begins once $\mathcal{D}_{\mathrm{out}}$ contains at least 128 retained decisions, including at least 16 examples from each outcome class. Each eligible training iteration then applies four outcome-model gradient steps with minibatches of at most 128 replay samples. Receiver-policy supervision begins once the replay contains at least 1,024 retained decisions with at least 128 examples from each class. Before this condition is met, future collaborators receive only their native MARL actor update. Prospective acquisition supervision does not depend on the receiver-side readiness conditions.

\paragraph{Update order and gradient boundaries.}
After each rollout, attributable receiver outcomes are appended to $\mathcal{D}_{\mathrm{out}}$. Once the outcome-model readiness condition is satisfied, the online outcome model is updated by binary cross-entropy, followed by the exponential-moving-average update
\[
\bar{\psi}
\leftarrow
(1-\rho)\bar{\psi}
+
\rho\psi.
\]
When receiver supervision is ready, receiver targets are constructed from the updated target outcome model before backbone optimization.

SPL actor updates are interleaved with the native actor updates. For each eligible actor, the native backbone actor update is performed first, followed by the corresponding SPL update. MAPPO processes actors in fixed agent order and updates the centralized critic after all actor updates. IPPO applies the native actor update, SPL update, and independent critic update for each agent before proceeding to the next agent.

HAPPO retains its sequential heterogeneous actor-update order. For each actor, the current cumulative factor is stored before the native actor update, followed by the applicable SPL update. Action probabilities are then re-evaluated to update the cumulative factor used by subsequent actors. Thus, later actors can receive factors reflecting the post-SPL policies of earlier actors. We do not claim that SPL-HAPPO preserves HAPPO's original monotonic-improvement guarantee.

The outcome loss updates only the online outcome model $\psi$, the acquisition loss updates only the eligible earlier-participant actor, and the receiver loss updates only the eligible future-collaborator actor. Gradients do not propagate through the frozen acquisition model, target outcome model, utility construction, replayed outcomes, future trajectories, or environment dynamics. SPL does not modify the environment reward, transition function, observation space, action space, native return estimator, or critic objective.

\paragraph{Decentralized execution.}
At execution time, each active agent acts solely through its decentralized policy,
\begin{equation}
u_t^i
\sim
\pi_{\theta_i}
\left(
\cdot\mid\tau_t^i,M_t^i
\right).
\label{eq:app-decentralised-execution}
\end{equation}
The acquisition model, outcome replay, online and target outcome models, utility calculations, and SPL target distributions are used only during training. SPL-MAPPO, SPL-IPPO, and SPL-HAPPO use the original decentralized policy architecture at execution time, without privileged state, auxiliary inference, or additional communication.

\subsection{Training Algorithm}
\label{app:spl-algorithm}

Algorithm~\ref{alg:spl} summarizes one SPL training iteration. SPL preserves the native optimization procedure of each MARL backbone and adds the role-appropriate auxiliary actor update. Earlier participants receive prospective acquisition supervision, while future collaborators receive receiver supervision once the corresponding replay-readiness condition is satisfied.

\begin{algorithm}[H]
\footnotesize
\caption{One training iteration with Staggered Participation Learning}
\label{alg:spl}
\begin{algorithmic}[1]

\Require Backbone $\mathcal{B}$; policies
$\{\pi_{\theta_i}\}_{i\in\mathcal{N}}$;
frozen acquisition model;
online outcome model $p_\psi$;
target outcome model $p_{\bar{\psi}}$;
outcome replay $\mathcal{D}_{\mathrm{out}}$

\State Collect one cooperative rollout
\State Compute native return and advantage inputs
\State Extract attributable future-collaborator outcomes
\State Add retained samples $(x_t^j,u_t^j,j,y_t^j)$ to
$\mathcal{D}_{\mathrm{out}}$

\If{outcome-model readiness condition is satisfied}
    \State Update $\psi$ using $\mathcal{L}_{\mathrm{out}}$
    \State $\bar{\psi}\gets(1-\rho)\bar{\psi}+\rho\psi$
\EndIf

\State Construct acquisition targets for eligible earlier participants

\If{receiver-supervision readiness condition is satisfied}
    \State Construct receiver targets using $p_{\bar{\psi}}$
\EndIf

\If{$\mathcal{B}$ is MAPPO}
    \For{each actor $i$}
        \State Apply the native MAPPO actor update to $i$
        \State Apply the eligible SPL actor update to $i$
    \EndFor
    \State Update the centralized critic

\ElsIf{$\mathcal{B}$ is IPPO}
    \For{each actor $i$}
        \State Apply the native IPPO actor update to $i$
        \State Apply the eligible SPL actor update to $i$
        \State Update the local critic of $i$
    \EndFor

\ElsIf{$\mathcal{B}$ is HAPPO}
    \State Initialize cumulative factor $F\gets1$
    \State Select the HAPPO actor-update order

    \For{each actor $i$ in the selected order}
        \State Store the current factor $F$ for actor $i$
        \State Evaluate the pre-update action probabilities
        \State Apply the native HAPPO actor update to $i$
        \State Apply the eligible SPL actor update to $i$
        \State Evaluate the post-update action probabilities
        \State Update the cumulative factor $F$
    \EndFor

    \State Update the centralized critic
\EndIf

\State \Return updated policies, critics, outcome models, and replay

\end{algorithmic}
\end{algorithm}
\FloatBarrier

\subsection{Eight-Agent Scaling Protocol}
\label{app:eight-agent-scaling}

\paragraph{Experimental scope.}
The eight-agent scaling study evaluates MPE-Advanced and RWARE-Advanced with four earlier participants $P_0,\ldots,P_3$, four future collaborators $R_0,\ldots,R_3$, and sixteen candidate tasks. We evaluate Non-overlap, Dynamic, and Cascaded participation using MAPPO and SPL-MAPPO. Non-overlap and Dynamic are trained for two million environment steps, while Cascaded uses four million steps to accommodate its extended participation horizon; all configurations use three independent training runs.

\begin{table}[!htbp]
\centering
\caption{Task and evidence configuration for the eight-agent scaling experiments. The notation $x^{\times4}$ denotes four repetitions of value $x$.}
\label{tab:app-eight-agent-task-settings}
\small
\setlength{\tabcolsep}{4.5pt}
\renewcommand{\arraystretch}{1.06}
\begin{adjustbox}{max width=\linewidth}
\begin{tabular}{@{}lcc@{}}
\toprule
Setting & MPE-Advanced & RWARE-Advanced \\
\midrule
Earlier participants & 4 & 4 \\
Future collaborators & 4 & 4 \\
Candidate tasks & 16 & 16 \\
Fixed valid tasks & 8 & 8 \\
Valid tasks / four-task group & 2 & 2 \\
Validity priors
& \multicolumn{2}{c}{
$(0.80^{\times4},0.65^{\times4},0.35^{\times4},0.20^{\times4})$} \\
Probe budget / earlier participant & 5 & 6 \\
Probe limit / task & 2 & 2 \\
Probe accuracy & 0.85 & 0.85 \\
Publication delay & 5 & 5 \\
Execution budget / future collaborator & 2 & 2 \\
Valid-task value
& $(12^{\times4},10^{\times4},8^{\times4},6^{\times4})$
& 1 \\
Probe cost & 0.20 & 0.01 \\
Invalid-execution cost & 8.00 & 0.25 \\
\bottomrule
\end{tabular}
\end{adjustbox}
\end{table}

\paragraph{Environment configuration.}
The sixteen candidate tasks form four groups of four, with two valid tasks sampled in each group. In MPE-Advanced, balanced task shards are randomly assigned at reset and the earlier-participant--future-collaborator correspondence is randomly assigned at handoff. This correspondence is not available to earlier participants beforehand. Earlier participants and future collaborators spawn on opposite sides of the environment at $x=-1.82$ and $x=1.82$, respectively, with
\[
y\in\{0.76,\ 0.2533,\ -0.2533,\ -0.76\}.
\]
The agent speed, view radius, interaction radius, and placement jitter are $1.0$, $4.0$, $0.18$, and $0.02$. Each actor observation has dimension 339, and the MAPPO critic input concatenates the eight actor observations, giving dimension 2712. Hidden task validity is unavailable to both representations.

RWARE-Advanced uses one shelf row, three shelf columns of height eight, and sensor range one. Candidate task identifiers are permuted each episode, future collaborators select from a shared candidate pool, and competing claims are resolved by first arrival. Hidden task validity is excluded from both actor observations and the MAPPO critic input.

\paragraph{Temporal participation schedules.}
All participation intervals follow the half-open convention $[t_{\mathrm{start}},t_{\mathrm{end}})$. Let $H$ denote the ordinary episode horizon, $a$ the nominal handoff time, $d$ the maximum Dynamic deviation, and $H_{\mathrm{cas}}$ the Cascaded horizon. Table~\ref{tab:app-eight-agent-temporal-parameters} summarizes the corresponding values.

\begin{table}[!htbp]
\centering
\caption{Temporal parameters and aggregate role exposure for the eight-agent experiments. Exposure is measured in agent-steps per episode.}
\label{tab:app-eight-agent-temporal-parameters}
\small
\setlength{\tabcolsep}{4.5pt}
\renewcommand{\arraystretch}{1.06}
\begin{adjustbox}{max width=\linewidth}
\begin{tabular}{@{}lrrrrrr@{}}
\toprule
Domain
& $H$
& $a$
& $d$
& $H_{\mathrm{cas}}$
& Earlier
& Future \\
\midrule
MPE-Advanced   & 125 & 45  & 11 & 500  & 180 & 320 \\
RWARE-Advanced & 500 & 200 & 50 & 2000 & 800 & 1200 \\
\bottomrule
\end{tabular}
\end{adjustbox}
\end{table}

For Non-overlap,
\begin{equation}
\mathcal{I}^{P_i}_{\mathrm{non}}=[0,a),
\qquad
\mathcal{I}^{R_i}_{\mathrm{non}}=[a,H),
\qquad
i\in\{0,1,2,3\}.
\label{eq:app-eight-agent-nonoverlap}
\end{equation}

For Dynamic participation, two independent deviations are sampled per episode,
\begin{equation}
\delta_k
\sim
\operatorname{Unif}\{-d,\ldots,d\},
\qquad
k\in\{0,1\},
\label{eq:app-eight-agent-dynamic-deviation}
\end{equation}
with paired handoffs
\begin{equation}
h_{2k}=a+\delta_k,
\qquad
h_{2k+1}=a-\delta_k.
\label{eq:app-eight-agent-dynamic-handoffs}
\end{equation}
The corresponding intervals are
\begin{equation}
\mathcal{I}^{P_i}_{\mathrm{dyn}}=[0,h_i),
\qquad
\mathcal{I}^{R_i}_{\mathrm{dyn}}=[h_i,H).
\label{eq:app-eight-agent-dynamic-windows}
\end{equation}
The handoff vector is sampled each episode and included in the actor-visible schedule information. Since $\sum_{i=0}^{3}h_i=4a$, Dynamic preserves the same aggregate role exposure as Non-overlap.

For Cascaded participation,
\begin{equation}
\mathcal{I}^{P_i}_{\mathrm{cas}}=[iH,iH+a),
\qquad
\mathcal{I}^{R_i}_{\mathrm{cas}}=[iH+a,(i+1)H),
\qquad
i\in\{0,1,2,3\}.
\label{eq:app-eight-agent-cascaded-windows}
\end{equation}
The resulting horizon is $H_{\mathrm{cas}}=4H$, preserving aggregate role exposure while distributing the four earlier-participant--future-collaborator pairs across successive intervals.

\paragraph{Training and evaluation.}
The eight-agent MAPPO and SPL-MAPPO runs use the feed-forward architecture, PPO settings, and SPL-specific hyperparameters reported in Table~\ref{tab:app-hyperparameters}. Policies are not parameter-shared and recurrent policies are disabled. Non-overlap and Dynamic are trained for two million environment steps, whereas Cascaded is trained for four million steps; each configuration uses three independent runs and is evaluated over 200 final episodes per run. MPE-Advanced uses eight parallel rollout environments, whereas RWARE-Advanced uses one. Reported task-completion results use the final checkpoint at the corresponding training budget, without performance-based checkpoint selection.

\section{Observed Training Cost}
\label{app:training-cost}

We report observed wall-clock training time as a practical indication of computational cost. Measurements correspond to completed full-budget runs; short qualification and smoke-scale runs are excluded. Because experiments ran under varying system load and host hardware was not recorded consistently, these values are descriptive runtime measurements. They are not controlled hardware-throughput benchmarks or estimates of algorithmic complexity.

Table~\ref{tab:app-training-time} summarizes the observed training time. In the primary four-agent experiments, all methods use two million environment steps, and Table~\ref{tab:app-training-time}(a) reports mean minutes per completed run averaged over the ten settings formed by two environment variants and five participation patterns. Table~\ref{tab:app-training-time}(b) reports the eight-agent scaling and Isaac Lab experiments. The eight-agent Non-overlap and Dynamic settings use 2M environment steps, whereas Cascaded uses 4M. The two- and four-agent Isaac Lab settings use approximately 2.0M and 3.0M steps, respectively.

\begin{table}[!htbp]
\centering
\caption{Observed wall-clock training time in minutes. Primary four-agent values are averages across the ten environment--participation settings in each domain; scaling and Isaac Lab values are mean $\pm$ standard error over three independent runs. Light blue shading identifies SPL results.}
\label{tab:app-training-time}

\begin{subtable}[t]{0.33\textwidth}
\centering
\vspace{0pt}
\caption{Primary four-agent experiments.}
\label{tab:app-four-agent-training-time}
\small
\setlength{\tabcolsep}{7pt}
\renewcommand{\arraystretch}{1.06}
\begin{adjustbox}{max width=\linewidth}
\begin{tabular}{lrr}
\toprule
Method & MPE & RWARE \\
\midrule
IPPO       & 154.1 & 166.4 \\
\rowcolor{splbg}
SPL-IPPO   & 175.3 & 177.9 \\
\addlinespace[2pt]
MAPPO      & 114.0 & 128.3 \\
\rowcolor{splbg}
SPL-MAPPO  & 131.1 & 129.6 \\
\addlinespace[2pt]
HAPPO      & 134.6 & 128.5 \\
\rowcolor{splbg}
SPL-HAPPO  & 152.4 & 145.2 \\
\midrule
MAE-MAPPO  & 157.2 & 128.6 \\
Sable      & 205.1 & 210.8 \\
SMPE$^2$   & 119.2 & 218.0 \\
\bottomrule
\end{tabular}
\end{adjustbox}
\end{subtable}
\hfill
\begin{subtable}[t]{0.64\textwidth}
\centering
\vspace{0pt}
\caption{Scaling and Isaac Lab experiments.}
\label{tab:app-additional-training-time}
\small
\setlength{\tabcolsep}{4.5pt}
\renewcommand{\arraystretch}{1.06}
\begin{adjustbox}{max width=\linewidth}
\begin{tabular}{llrcQ}
\toprule
Setting & Participation & Steps & MAPPO & SPL-MAPPO \\
\midrule
\multirow[t]{3}{*}{MPE-Adv. (8)}
& Non-overlap & 2.00M & $76.0 \pm 0.2$    & $87.0 \pm 0.1$ \\
& Dynamic     & 2.00M & $89.5 \pm 0.6$    & $103.0 \pm 0.5$ \\
& Cascaded    & 4.00M & $175.5 \pm 0.2$   & $185.5 \pm 2.7$ \\
\addlinespace[2pt]

\multirow[t]{3}{*}{RWARE-Adv. (8)}
& Non-overlap & 2.00M & $242.9 \pm 6.1$   & $396.9 \pm 15.3$ \\
& Dynamic     & 2.00M & $216.6 \pm 3.6$   & $377.5 \pm 11.0$ \\
& Cascaded    & 4.00M & $453.6 \pm 16.6$  & $538.3 \pm 15.4$ \\
\addlinespace[2pt]

\multirow[t]{2}{*}{Isaac Lab (2)}
& Non-overlap & 2.00M & $548.8 \pm 17.5$  & $562.9 \pm 18.8$ \\
& Dynamic     & 2.00M & $438.1 \pm 9.9$   & $450.0 \pm 7.0$ \\
\addlinespace[2pt]

\multirow[t]{2}{*}{Isaac Lab (4)}
& Non-overlap & 3.01M & $1090.2 \pm 17.8$ & $1244.7 \pm 43.2$ \\
& Dynamic     & 3.01M & $1120.1 \pm 15.1$ & $1295.0 \pm 7.3$ \\
\bottomrule
\end{tabular}
\end{adjustbox}
\end{subtable}

\end{table}

\paragraph{Observation.}
The practical training-time effect of SPL varies across environments and team configurations. In the primary four-agent experiments, SPL incurs moderate additional wall-clock cost for most PPO backbones, while MAPPO and SPL-MAPPO are nearly identical in RWARE. In the eight-agent experiments, the relative increase is modest in MPE and larger in RWARE; absolute times across participation patterns should be interpreted together with their different training budgets, particularly the 4M-step Cascaded settings. The two-agent Isaac Lab runs show only a small difference, whereas the four-agent settings require approximately 14--16\% more observed training time with SPL-MAPPO. Overall, SPL introduces additional training-time computation whose magnitude depends on the environment and configuration, while decentralized execution remains unchanged.

\section{Detailed Experiment Results}
\label{app:detailed-results}

The following tables provide the complete evaluation metrics underlying the main-paper results. Task completion (TC) measures the percentage of required downstream task objectives successfully completed within an evaluation episode, while full success (FS) is the percentage of episodes in which all required downstream tasks are completed. We also report episode return and failed execution (FE). Higher values are better for TC, FS, and return, whereas lower FE is better. MPE and RWARE values are reported as mean $\pm$ standard error over five independent training seeds and 200 final evaluation episodes per seed. A reported standard error of $0.00$ indicates that the five seed-level evaluation means are exactly identical at full precision; further discussion is provided in Appendix~D.7.

\subsection{MPE-Base}
\label{app:results-mpe-base}

Table~\ref{tab:appendix-mpe-base-full} reports the complete MPE-Base results across the five participation patterns, including task completion, full success, return, and failed execution.

\begin{table}[!htbp]
\centering
\caption{Complete MPE-Base four-agent results at 2M environment steps. Values are mean $\pm$ standard error over five independent training seeds and 200 final evaluation episodes per seed. Bold marks the best reported mean in each participation--metric row. For legacy MPE-Base Non-overlap HARL runs, FS is recovered from the retained per-episode completed-task counts using the same all-four-tasks definition.}
\label{tab:appendix-mpe-base-full}
\scriptsize
\setlength{\tabcolsep}{2.2pt}
\renewcommand{\arraystretch}{1.02}
\resizebox{\textwidth}{!}{%
\begin{adjustbox}{max width=\linewidth}
\begin{tabular}{llcQcQcQccc}
\toprule
\multirow{2}{*}{Participation}
& \multirow{2}{*}{Metric}
& \multicolumn{2}{c}{IPPO}
& \multicolumn{2}{c}{MAPPO}
& \multicolumn{2}{c}{HAPPO}
& \multicolumn{3}{c}{Other baselines} \\
\cmidrule(lr){3-4}
\cmidrule(lr){5-6}
\cmidrule(lr){7-8}
\cmidrule(lr){9-11}
& & Vanilla & SPL & Vanilla & SPL & Vanilla & SPL & MAE-MAPPO & Sable & SMPE$^2$ \\
\midrule
\multirow{4}{*}{Non-overlap}
& TC (\%) $\uparrow$ & $70.50 \pm 0.00$ & $78.38 \pm 0.38$ & $70.50 \pm 0.00$ & $78.73 \pm 0.43$ & $67.45 \pm 3.05$ & $\mathbf{79.18 \pm 0.23}$ & $70.50 \pm 0.00$ & $62.70 \pm 0.25$ & $42.73 \pm 9.29$ \\
& FS (\%) $\uparrow$ & $17.50 \pm 0.00$ & $31.80 \pm 0.87$ & $17.50 \pm 0.00$ & $32.80 \pm 1.06$ & $14.00 \pm 3.50$ & $\mathbf{34.20 \pm 0.49}$ & $17.50 \pm 0.00$ & $9.00 \pm 0.71$ & $4.20 \pm 2.57$ \\
& Return $\uparrow$ & $21.569 \pm 0.029$ & $27.337 \pm 0.083$ & $21.644 \pm 0.036$ & $27.231 \pm 0.091$ & $21.010 \pm 0.675$ & $\mathbf{27.409 \pm 0.048}$ & $21.747 \pm 0.076$ & $15.100 \pm 0.199$ & $11.016 \pm 2.409$ \\
& FE $\downarrow$ & $1.125 \pm 0.000$ & $\mathbf{0.582 \pm 0.008}$ & $1.127 \pm 0.002$ & $0.610 \pm 0.012$ & $1.056 \pm 0.069$ & $0.608 \pm 0.004$ & $1.125 \pm 0.000$ & $1.450 \pm 0.011$ & $0.873 \pm 0.232$ \\
\cmidrule(lr){1-11}
\multirow{4}{*}{Overlap-25}
& TC (\%) $\uparrow$ & $68.25 \pm 2.75$ & $79.35 \pm 0.57$ & $70.78 \pm 0.14$ & $80.25 \pm 0.21$ & $70.70 \pm 0.18$ & $\mathbf{80.35 \pm 0.29}$ & $67.93 \pm 2.95$ & $63.50 \pm 2.24$ & $53.85 \pm 5.43$ \\
& FS (\%) $\uparrow$ & $14.90 \pm 2.60$ & $34.10 \pm 1.45$ & $17.50 \pm 0.00$ & $\mathbf{36.80 \pm 0.46}$ & $17.50 \pm 0.16$ & $36.40 \pm 0.81$ & $14.00 \pm 3.38$ & $9.30 \pm 2.22$ & $4.30 \pm 2.53$ \\
& Return $\uparrow$ & $20.308 \pm 1.009$ & $27.068 \pm 0.212$ & $21.521 \pm 0.082$ & $27.232 \pm 0.139$ & $21.439 \pm 0.174$ & $\mathbf{27.358 \pm 0.122}$ & $20.950 \pm 0.554$ & $15.295 \pm 1.868$ & $12.778 \pm 2.124$ \\
& FE $\downarrow$ & $1.124 \pm 0.036$ & $\mathbf{0.654 \pm 0.009}$ & $1.161 \pm 0.007$ & $0.670 \pm 0.014$ & $1.159 \pm 0.010$ & $0.658 \pm 0.017$ & $1.079 \pm 0.079$ & $1.460 \pm 0.090$ & $1.246 \pm 0.236$ \\
\cmidrule(lr){1-11}
\multirow{4}{*}{Overlap-50}
& TC (\%) $\uparrow$ & $67.85 \pm 3.15$ & $\mathbf{78.90 \pm 0.31}$ & $69.53 \pm 1.27$ & $\mathbf{78.90 \pm 0.24}$ & $69.80 \pm 0.92$ & $78.50 \pm 0.47$ & $70.63 \pm 0.23$ & $64.43 \pm 1.85$ & $63.95 \pm 3.34$ \\
& FS (\%) $\uparrow$ & $14.00 \pm 3.50$ & $\mathbf{32.20 \pm 0.60}$ & $15.70 \pm 1.36$ & $31.90 \pm 0.71$ & $16.20 \pm 0.97$ & $30.70 \pm 1.11$ & $17.50 \pm 0.16$ & $10.30 \pm 2.12$ & $10.60 \pm 3.20$ \\
& Return $\uparrow$ & $20.744 \pm 0.680$ & $26.429 \pm 0.112$ & $21.213 \pm 0.287$ & $\mathbf{26.569 \pm 0.098}$ & $21.228 \pm 0.340$ & $26.125 \pm 0.134$ & $21.353 \pm 0.166$ & $15.888 \pm 1.503$ & $15.529 \pm 2.966$ \\
& FE $\downarrow$ & $1.086 \pm 0.074$ & $0.731 \pm 0.003$ & $1.136 \pm 0.033$ & $\mathbf{0.715 \pm 0.008}$ & $1.148 \pm 0.009$ & $0.748 \pm 0.003$ & $1.161 \pm 0.010$ & $1.422 \pm 0.074$ & $1.442 \pm 0.133$ \\
\cmidrule(lr){1-11}
\multirow{4}{*}{Dynamic}
& TC (\%) $\uparrow$ & $66.20 \pm 3.01$ & $\mathbf{79.65 \pm 0.18}$ & $70.23 \pm 0.59$ & $78.88 \pm 0.18$ & $69.53 \pm 0.99$ & $79.63 \pm 0.39$ & $64.85 \pm 3.06$ & $58.30 \pm 1.11$ & $49.70 \pm 1.04$ \\
& FS (\%) $\uparrow$ & $12.60 \pm 3.43$ & $35.30 \pm 0.51$ & $16.70 \pm 0.80$ & $33.60 \pm 0.43$ & $16.30 \pm 1.20$ & $\mathbf{35.60 \pm 1.14}$ & $11.20 \pm 3.39$ & $5.30 \pm 0.58$ & $0.60 \pm 0.60$ \\
& Return $\uparrow$ & $20.694 \pm 0.600$ & $\mathbf{27.643 \pm 0.090}$ & $21.364 \pm 0.232$ & $27.306 \pm 0.124$ & $21.153 \pm 0.367$ & $27.499 \pm 0.146$ & $20.613 \pm 0.613$ & $13.970 \pm 0.839$ & $12.579 \pm 1.591$ \\
& FE $\downarrow$ & $1.003 \pm 0.079$ & $\mathbf{0.599 \pm 0.009}$ & $1.125 \pm 0.010$ & $0.610 \pm 0.012$ & $1.119 \pm 0.015$ & $0.610 \pm 0.014$ & $0.963 \pm 0.077$ & $1.318 \pm 0.038$ & $1.022 \pm 0.139$ \\
\cmidrule(lr){1-11}
\multirow{4}{*}{Cascaded}
& TC (\%) $\uparrow$ & $66.83 \pm 3.68$ & $78.90 \pm 0.70$ & $70.63 \pm 0.10$ & $\mathbf{78.93 \pm 0.48}$ & $70.68 \pm 0.19$ & $78.78 \pm 0.22$ & $70.45 \pm 0.05$ & $38.03 \pm 0.93$ & $27.93 \pm 5.74$ \\
& FS (\%) $\uparrow$ & $14.00 \pm 3.50$ & $\mathbf{33.70 \pm 1.79}$ & $17.50 \pm 0.00$ & $33.50 \pm 0.85$ & $17.40 \pm 0.10$ & $33.00 \pm 0.57$ & $17.50 \pm 0.00$ & $0.50 \pm 0.32$ & $0.20 \pm 0.20$ \\
& Return $\uparrow$ & $20.872 \pm 0.753$ & $27.563 \pm 0.259$ & $21.529 \pm 0.060$ & $27.447 \pm 0.207$ & $21.502 \pm 0.048$ & $\mathbf{27.581 \pm 0.059}$ & $21.642 \pm 0.047$ & $9.745 \pm 0.861$ & $6.875 \pm 2.668$ \\
& FE $\downarrow$ & $1.040 \pm 0.085$ & $\mathbf{0.577 \pm 0.015}$ & $1.138 \pm 0.004$ & $0.590 \pm 0.018$ & $1.144 \pm 0.008$ & $0.578 \pm 0.014$ & $1.127 \pm 0.002$ & $0.850 \pm 0.058$ & $0.650 \pm 0.059$ \\
\bottomrule
\end{tabular}
\end{adjustbox}%
}
\end{table}
\FloatBarrier

\paragraph{Observation.}
Across all five participation patterns, SPL improves TC, FS, and return while reducing FE relative to the matched Vanilla backbone. The improvement is consistent across IPPO, MAPPO, and HAPPO, indicating that the additional gains are not specific to a single PPO-based backbone.

\subsection{MPE-Advanced}
\label{app:results-mpe-advanced}

Table~\ref{tab:appendix-mpe-advanced-full} reports the corresponding results in MPE-Advanced, where task assignments, spawn configurations, and future-collaborator identities vary across episodes.

\begin{table}[!htbp]
\centering
\caption{Complete MPE-Advanced four-agent results at 2M environment steps. Values are mean $\pm$ standard error over five independent training seeds and 200 final evaluation episodes per seed. Bold marks the best reported mean in each participation--metric row.}
\label{tab:appendix-mpe-advanced-full}
\scriptsize
\setlength{\tabcolsep}{2.2pt}
\renewcommand{\arraystretch}{1.02}
\resizebox{\textwidth}{!}{%
\begin{adjustbox}{max width=\linewidth}
\begin{tabular}{llcQcQcQccc}
\toprule
\multirow{2}{*}{Participation}
& \multirow{2}{*}{Metric}
& \multicolumn{2}{c}{IPPO}
& \multicolumn{2}{c}{MAPPO}
& \multicolumn{2}{c}{HAPPO}
& \multicolumn{3}{c}{Other baselines} \\
\cmidrule(lr){3-4}
\cmidrule(lr){5-6}
\cmidrule(lr){7-8}
\cmidrule(lr){9-11}
& & Vanilla & SPL & Vanilla & SPL & Vanilla & SPL & MAE-MAPPO & Sable & SMPE$^2$ \\
\midrule
\multirow{4}{*}{Non-overlap}
& TC (\%) $\uparrow$ & $70.38 \pm 0.00$ & $77.95 \pm 0.19$ & $64.08 \pm 3.86$ & $\mathbf{78.23 \pm 0.16}$ & $67.28 \pm 3.10$ & $78.03 \pm 0.16$ & $63.28 \pm 3.47$ & $64.90 \pm 1.36$ & $49.78 \pm 3.25$ \\
& FS (\%) $\uparrow$ & $19.00 \pm 0.00$ & $30.50 \pm 0.65$ & $11.40 \pm 4.65$ & $\mathbf{31.90 \pm 0.62}$ & $15.20 \pm 3.80$ & $31.10 \pm 0.53$ & $10.60 \pm 4.35$ & $10.80 \pm 1.41$ & $1.60 \pm 1.09$ \\
& Return $\uparrow$ & $21.485 \pm 0.135$ & $27.229 \pm 0.092$ & $19.986 \pm 0.847$ & $\mathbf{27.390 \pm 0.106}$ & $20.754 \pm 0.645$ & $27.328 \pm 0.076$ & $19.939 \pm 0.798$ & $16.683 \pm 1.085$ & $10.730 \pm 2.269$ \\
& FE $\downarrow$ & $1.125 \pm 0.000$ & $0.589 \pm 0.008$ & $0.989 \pm 0.083$ & $\mathbf{0.579 \pm 0.007}$ & $1.056 \pm 0.069$ & $0.580 \pm 0.006$ & $0.971 \pm 0.078$ & $1.369 \pm 0.057$ & $1.238 \pm 0.256$ \\
\cmidrule(lr){1-11}
\multirow{4}{*}{Overlap-25}
& TC (\%) $\uparrow$ & $65.65 \pm 3.53$ & $\mathbf{79.95 \pm 0.52}$ & $68.28 \pm 1.50$ & $79.38 \pm 0.27$ & $70.93 \pm 0.56$ & $79.55 \pm 0.40$ & $71.40 \pm 0.23$ & $64.30 \pm 1.63$ & $47.83 \pm 9.02$ \\
& FS (\%) $\uparrow$ & $12.60 \pm 4.28$ & $\mathbf{35.50 \pm 0.57}$ & $16.60 \pm 1.27$ & $33.80 \pm 0.73$ & $18.70 \pm 0.80$ & $34.50 \pm 1.21$ & $19.30 \pm 0.20$ & $10.50 \pm 2.20$ & $5.20 \pm 4.17$ \\
& Return $\uparrow$ & $20.343 \pm 0.734$ & $\mathbf{27.536 \pm 0.235}$ & $20.611 \pm 0.494$ & $27.244 \pm 0.139$ & $21.634 \pm 0.205$ & $27.502 \pm 0.185$ & $21.967 \pm 0.120$ & $15.589 \pm 1.238$ & $13.152 \pm 2.718$ \\
& FE $\downarrow$ & $1.011 \pm 0.077$ & $0.622 \pm 0.012$ & $1.101 \pm 0.027$ & $0.641 \pm 0.007$ & $1.126 \pm 0.016$ & $\mathbf{0.611 \pm 0.012}$ & $1.131 \pm 0.004$ & $1.427 \pm 0.066$ & $0.901 \pm 0.255$ \\
\cmidrule(lr){1-11}
\multirow{4}{*}{Overlap-50}
& TC (\%) $\uparrow$ & $71.30 \pm 0.22$ & $78.63 \pm 0.23$ & $68.58 \pm 3.05$ & $78.33 \pm 0.40$ & $71.40 \pm 0.14$ & $\mathbf{78.68 \pm 0.25}$ & $62.63 \pm 5.92$ & $61.88 \pm 1.48$ & $52.08 \pm 8.25$ \\
& FS (\%) $\uparrow$ & $19.20 \pm 0.30$ & $31.90 \pm 0.60$ & $15.80 \pm 3.70$ & $31.10 \pm 1.11$ & $19.40 \pm 0.10$ & $\mathbf{32.10 \pm 0.76}$ & $12.00 \pm 4.60$ & $9.10 \pm 1.20$ & $6.10 \pm 4.23$ \\
& Return $\uparrow$ & $21.669 \pm 0.083$ & $\mathbf{26.481 \pm 0.084}$ & $21.005 \pm 0.701$ & $26.358 \pm 0.171$ & $21.567 \pm 0.090$ & $26.446 \pm 0.097$ & $19.743 \pm 1.443$ & $13.716 \pm 1.266$ & $13.018 \pm 3.412$ \\
& FE $\downarrow$ & $1.124 \pm 0.007$ & $\mathbf{0.713 \pm 0.007}$ & $1.069 \pm 0.066$ & $0.715 \pm 0.010$ & $1.143 \pm 0.006$ & $0.721 \pm 0.005$ & $0.958 \pm 0.115$ & $1.525 \pm 0.059$ & $1.139 \pm 0.067$ \\
\cmidrule(lr){1-11}
\multirow{4}{*}{Dynamic}
& TC (\%) $\uparrow$ & $66.63 \pm 3.66$ & $78.23 \pm 0.29$ & $67.05 \pm 3.20$ & $\mathbf{79.08 \pm 0.54}$ & $61.73 \pm 5.55$ & $78.80 \pm 0.18$ & $67.68 \pm 1.76$ & $57.60 \pm 1.30$ & $43.10 \pm 6.61$ \\
& FS (\%) $\uparrow$ & $14.70 \pm 3.68$ & $32.30 \pm 0.64$ & $14.70 \pm 3.68$ & $34.10 \pm 1.38$ & $11.10 \pm 4.53$ & $\mathbf{34.20 \pm 0.64}$ & $15.60 \pm 2.42$ & $6.50 \pm 1.10$ & $0.00 \pm 0.00$ \\
& Return $\uparrow$ & $20.509 \pm 0.893$ & $27.006 \pm 0.131$ & $20.770 \pm 0.681$ & $27.312 \pm 0.258$ & $19.867 \pm 1.215$ & $\mathbf{27.330 \pm 0.081}$ & $20.936 \pm 0.420$ & $14.027 \pm 1.018$ & $13.678 \pm 2.577$ \\
& FE $\downarrow$ & $1.037 \pm 0.072$ & $0.608 \pm 0.005$ & $1.047 \pm 0.064$ & $0.608 \pm 0.013$ & $0.915 \pm 0.123$ & $\mathbf{0.596 \pm 0.012}$ & $1.047 \pm 0.038$ & $1.272 \pm 0.052$ & $0.608 \pm 0.083$ \\
\cmidrule(lr){1-11}
\multirow{4}{*}{Cascaded}
& TC (\%) $\uparrow$ & $67.68 \pm 2.58$ & $76.80 \pm 0.13$ & $67.28 \pm 3.10$ & $77.00 \pm 0.33$ & $69.28 \pm 1.10$ & $\mathbf{77.23 \pm 0.18}$ & $66.83 \pm 3.02$ & $62.73 \pm 0.98$ & $56.70 \pm 5.50$ \\
& FS (\%) $\uparrow$ & $15.90 \pm 3.10$ & $29.00 \pm 0.45$ & $15.20 \pm 3.80$ & $29.20 \pm 0.68$ & $17.80 \pm 1.20$ & $\mathbf{30.70 \pm 0.72}$ & $14.80 \pm 3.72$ & $8.30 \pm 1.66$ & $7.10 \pm 3.72$ \\
& Return $\uparrow$ & $20.895 \pm 0.638$ & $25.976 \pm 0.072$ & $20.623 \pm 0.661$ & $25.988 \pm 0.143$ & $21.038 \pm 0.238$ & $\mathbf{26.033 \pm 0.100}$ & $20.707 \pm 0.662$ & $14.706 \pm 0.811$ & $18.650 \pm 1.423$ \\
& FE $\downarrow$ & $1.065 \pm 0.055$ & $\mathbf{0.734 \pm 0.006}$ & $1.056 \pm 0.069$ & $0.742 \pm 0.007$ & $1.101 \pm 0.024$ & $0.741 \pm 0.007$ & $1.043 \pm 0.067$ & $1.457 \pm 0.035$ & $0.795 \pm 0.108$ \\
\bottomrule
\end{tabular}
\end{adjustbox}%
}
\end{table}
\FloatBarrier

\paragraph{Observation.}
The same pattern persists under the additional structural variation of MPE-Advanced. SPL improves all four reported metrics for every matched backbone--participation comparison, while the three SPL backbones remain close in final performance.

\subsection{RWARE-Base}
\label{app:results-rware-base}

Table~\ref{tab:appendix-rware-base-full} reports the complete RWARE-Base results across the five participation patterns.

\begin{table}[!htbp]
\centering
\caption{Complete RWARE-Base four-agent results at 2M environment steps. Values are mean $\pm$ standard error over five independent training seeds and 200 final evaluation episodes per seed. Bold marks the best reported mean in each participation--metric row.}
\label{tab:appendix-rware-base-full}
\scriptsize
\setlength{\tabcolsep}{2.2pt}
\renewcommand{\arraystretch}{1.02}
\resizebox{\textwidth}{!}{%
\begin{adjustbox}{max width=\linewidth}
\begin{tabular}{llcQcQcQccc}
\toprule
\multirow{2}{*}{Participation}
& \multirow{2}{*}{Metric}
& \multicolumn{2}{c}{IPPO}
& \multicolumn{2}{c}{MAPPO}
& \multicolumn{2}{c}{HAPPO}
& \multicolumn{3}{c}{Other baselines} \\
\cmidrule(lr){3-4}
\cmidrule(lr){5-6}
\cmidrule(lr){7-8}
\cmidrule(lr){9-11}
& & Vanilla & SPL & Vanilla & SPL & Vanilla & SPL & MAE-MAPPO & Sable & SMPE$^2$ \\
\midrule
\multirow{4}{*}{Non-overlap}
& TC (\%) $\uparrow$ & $66.45 \pm 5.86$ & $86.83 \pm 0.40$ & $71.38 \pm 2.27$ & $86.70 \pm 0.49$ & $73.18 \pm 1.50$ & $\mathbf{86.93 \pm 0.26}$ & $73.48 \pm 1.69$ & $71.80 \pm 0.98$ & $66.05 \pm 3.29$ \\
& FS (\%) $\uparrow$ & $14.50 \pm 6.27$ & $53.80 \pm 1.68$ & $19.50 \pm 4.14$ & $53.70 \pm 1.46$ & $19.20 \pm 3.09$ & $\mathbf{54.00 \pm 0.57}$ & $21.30 \pm 2.28$ & $20.00 \pm 1.30$ & $14.10 \pm 3.61$ \\
& Return $\uparrow$ & $2.342 \pm 0.226$ & $3.292 \pm 0.018$ & $2.548 \pm 0.101$ & $3.286 \pm 0.023$ & $2.618 \pm 0.070$ & $\mathbf{3.297 \pm 0.010}$ & $2.641 \pm 0.072$ & $2.471 \pm 0.049$ & $2.301 \pm 0.164$ \\
& FE $\downarrow$ & $0.848 \pm 0.088$ & $0.494 \pm 0.013$ & $1.075 \pm 0.110$ & $0.503 \pm 0.019$ & $1.064 \pm 0.057$ & $\mathbf{0.483 \pm 0.011}$ & $1.003 \pm 0.068$ & $1.128 \pm 0.039$ & $1.358 \pm 0.131$ \\
\cmidrule(lr){1-11}
\multirow{4}{*}{Overlap-25}
& TC (\%) $\uparrow$ & $81.10 \pm 3.41$ & $\mathbf{89.15 \pm 0.16}$ & $74.25 \pm 3.41$ & $89.08 \pm 0.35$ & $70.15 \pm 2.93$ & $88.35 \pm 0.50$ & $72.50 \pm 2.86$ & $76.70 \pm 1.18$ & $67.28 \pm 2.87$ \\
& FS (\%) $\uparrow$ & $38.60 \pm 7.30$ & $\mathbf{61.00 \pm 0.32}$ & $25.90 \pm 7.18$ & $60.60 \pm 0.94$ & $16.60 \pm 4.20$ & $58.80 \pm 1.95$ & $19.20 \pm 5.35$ & $28.10 \pm 3.03$ & $16.00 \pm 2.83$ \\
& Return $\uparrow$ & $2.960 \pm 0.157$ & $\mathbf{3.404 \pm 0.008}$ & $2.667 \pm 0.157$ & $3.401 \pm 0.018$ & $2.500 \pm 0.131$ & $3.369 \pm 0.023$ & $2.633 \pm 0.105$ & $2.720 \pm 0.056$ & $2.358 \pm 0.145$ \\
& FE $\downarrow$ & $0.728 \pm 0.142$ & $0.411 \pm 0.011$ & $0.935 \pm 0.165$ & $\mathbf{0.410 \pm 0.018}$ & $1.075 \pm 0.092$ & $0.424 \pm 0.014$ & $0.877 \pm 0.114$ & $0.932 \pm 0.047$ & $1.309 \pm 0.115$ \\
\cmidrule(lr){1-11}
\multirow{4}{*}{Overlap-50}
& TC (\%) $\uparrow$ & $82.00 \pm 1.12$ & $\mathbf{89.00 \pm 0.19}$ & $74.73 \pm 3.35$ & $88.58 \pm 0.28$ & $70.28 \pm 2.41$ & $88.80 \pm 0.23$ & $77.53 \pm 0.72$ & $77.23 \pm 1.97$ & $61.95 \pm 4.09$ \\
& FS (\%) $\uparrow$ & $41.80 \pm 1.74$ & $59.40 \pm 0.68$ & $24.60 \pm 6.55$ & $59.30 \pm 0.94$ & $15.20 \pm 3.03$ & $\mathbf{59.70 \pm 0.80}$ & $30.20 \pm 1.32$ & $31.20 \pm 4.47$ & $10.10 \pm 4.15$ \\
& Return $\uparrow$ & $3.008 \pm 0.052$ & $\mathbf{3.397 \pm 0.010}$ & $2.706 \pm 0.153$ & $3.375 \pm 0.013$ & $2.487 \pm 0.112$ & $3.388 \pm 0.011$ & $2.834 \pm 0.033$ & $2.741 \pm 0.099$ & $2.143 \pm 0.175$ \\
& FE $\downarrow$ & $0.686 \pm 0.037$ & $\mathbf{0.414 \pm 0.010}$ & $0.979 \pm 0.144$ & $0.432 \pm 0.009$ & $1.184 \pm 0.099$ & $0.415 \pm 0.009$ & $0.858 \pm 0.039$ & $0.911 \pm 0.079$ & $1.322 \pm 0.150$ \\
\cmidrule(lr){1-11}
\multirow{4}{*}{Dynamic}
& TC (\%) $\uparrow$ & $72.88 \pm 1.19$ & $\mathbf{88.88 \pm 0.27}$ & $65.35 \pm 5.46$ & $88.70 \pm 0.30$ & $68.80 \pm 4.06$ & $88.40 \pm 0.42$ & $72.28 \pm 1.51$ & $74.63 \pm 1.22$ & $62.03 \pm 4.29$ \\
& FS (\%) $\uparrow$ & $19.30 \pm 1.56$ & $\mathbf{60.10 \pm 0.81}$ & $14.50 \pm 5.14$ & $58.70 \pm 1.20$ & $16.30 \pm 4.89$ & $58.40 \pm 1.37$ & $18.90 \pm 2.62$ & $24.70 \pm 2.72$ & $10.20 \pm 4.07$ \\
& Return $\uparrow$ & $2.579 \pm 0.041$ & $\mathbf{3.389 \pm 0.013}$ & $2.304 \pm 0.211$ & $3.385 \pm 0.014$ & $2.425 \pm 0.199$ & $3.368 \pm 0.020$ & $2.572 \pm 0.072$ & $2.616 \pm 0.058$ & $2.095 \pm 0.215$ \\
& FE $\downarrow$ & $0.910 \pm 0.118$ & $0.423 \pm 0.007$ & $1.119 \pm 0.129$ & $\mathbf{0.415 \pm 0.006}$ & $1.166 \pm 0.203$ & $0.438 \pm 0.014$ & $1.106 \pm 0.061$ & $1.015 \pm 0.049$ & $1.519 \pm 0.172$ \\
\cmidrule(lr){1-11}
\multirow{4}{*}{Cascaded}
& TC (\%) $\uparrow$ & $66.80 \pm 1.98$ & $86.25 \pm 0.38$ & $67.53 \pm 2.05$ & $\mathbf{86.75 \pm 0.50}$ & $66.58 \pm 2.24$ & $85.95 \pm 0.79$ & $71.63 \pm 0.73$ & $76.83 \pm 0.93$ & $64.18 \pm 2.94$ \\
& FS (\%) $\uparrow$ & $13.20 \pm 2.83$ & $52.40 \pm 1.07$ & $11.80 \pm 3.64$ & $\mathbf{53.60 \pm 1.02}$ & $11.90 \pm 2.64$ & $51.50 \pm 1.67$ & $18.70 \pm 0.97$ & $28.80 \pm 1.97$ & $11.00 \pm 3.67$ \\
& Return $\uparrow$ & $2.346 \pm 0.097$ & $3.281 \pm 0.014$ & $2.417 \pm 0.074$ & $\mathbf{3.299 \pm 0.023}$ & $2.322 \pm 0.110$ & $3.269 \pm 0.032$ & $2.573 \pm 0.031$ & $2.721 \pm 0.047$ & $2.207 \pm 0.146$ \\
& FE $\downarrow$ & $1.298 \pm 0.079$ & $0.456 \pm 0.012$ & $1.095 \pm 0.141$ & $0.466 \pm 0.011$ & $1.333 \pm 0.088$ & $\mathbf{0.448 \pm 0.008}$ & $1.135 \pm 0.029$ & $0.927 \pm 0.037$ & $1.433 \pm 0.118$ \\
\bottomrule
\end{tabular}
\end{adjustbox}%
}
\end{table}
\FloatBarrier
\paragraph{Observation.}
SPL produces larger gains in RWARE than in MPE, particularly in FS and FE. The increase in full success indicates that higher task completion is accompanied by more episodes in which the cooperative objective is completed in full.

\subsection{RWARE-Advanced}
\label{app:results-rware-advanced}

Table~\ref{tab:appendix-rware-advanced-full} reports the results in RWARE-Advanced, which additionally includes task permutation, a shared candidate pool, and dynamic task claims.

\begin{table}[!htbp]
\centering
\caption{Complete RWARE-Advanced four-agent results at 2M environment steps. Values are mean $\pm$ standard error over five independent training seeds and 200 final evaluation episodes per seed. Bold marks the best reported mean in each participation--metric row.}
\label{tab:appendix-rware-advanced-full}
\scriptsize
\setlength{\tabcolsep}{2.2pt}
\renewcommand{\arraystretch}{1.02}
\resizebox{\textwidth}{!}{%
\begin{adjustbox}{max width=\linewidth}
\begin{tabular}{llcQcQcQccc}
\toprule
\multirow{2}{*}{Participation}
& \multirow{2}{*}{Metric}
& \multicolumn{2}{c}{IPPO}
& \multicolumn{2}{c}{MAPPO}
& \multicolumn{2}{c}{HAPPO}
& \multicolumn{3}{c}{Other baselines} \\
\cmidrule(lr){3-4}
\cmidrule(lr){5-6}
\cmidrule(lr){7-8}
\cmidrule(lr){9-11}
& & Vanilla & SPL & Vanilla & SPL & Vanilla & SPL & MAE-MAPPO & Sable & SMPE$^2$ \\
\midrule
\multirow{4}{*}{Non-overlap}
& TC (\%) $\uparrow$ & $52.08 \pm 1.37$ & $79.93 \pm 0.59$ & $58.63 \pm 2.31$ & $\mathbf{80.03 \pm 0.23}$ & $57.58 \pm 3.44$ & $79.55 \pm 0.79$ & $58.50 \pm 2.20$ & $50.70 \pm 0.32$ & $45.70 \pm 0.25$ \\
& FS (\%) $\uparrow$ & $2.60 \pm 0.58$ & $40.60 \pm 0.87$ & $4.70 \pm 0.93$ & $\mathbf{42.20 \pm 1.44}$ & $5.70 \pm 1.79$ & $38.70 \pm 1.71$ & $4.30 \pm 1.19$ & $1.60 \pm 0.19$ & $1.20 \pm 0.62$ \\
& Return $\uparrow$ & $1.654 \pm 0.059$ & $2.950 \pm 0.026$ & $1.972 \pm 0.096$ & $\mathbf{2.951 \pm 0.010}$ & $1.891 \pm 0.144$ & $2.928 \pm 0.030$ & $1.951 \pm 0.099$ & $1.510 \pm 0.017$ & $1.346 \pm 0.022$ \\
& FE $\downarrow$ & $1.573 \pm 0.043$ & $\mathbf{0.577 \pm 0.013}$ & $1.174 \pm 0.098$ & $0.583 \pm 0.011$ & $1.386 \pm 0.115$ & $0.599 \pm 0.020$ & $1.330 \pm 0.119$ & $1.660 \pm 0.019$ & $1.882 \pm 0.025$ \\
\cmidrule(lr){1-11}
\multirow{4}{*}{Overlap-25}
& TC (\%) $\uparrow$ & $64.18 \pm 1.62$ & $83.23 \pm 0.47$ & $68.83 \pm 1.40$ & $83.65 \pm 0.32$ & $63.28 \pm 1.98$ & $\mathbf{83.73 \pm 0.48}$ & $64.70 \pm 1.81$ & $54.75 \pm 0.44$ & $47.68 \pm 0.66$ \\
& FS (\%) $\uparrow$ & $7.00 \pm 0.84$ & $42.30 \pm 0.94$ & $11.60 \pm 1.27$ & $\mathbf{44.50 \pm 1.01}$ & $7.60 \pm 1.64$ & $43.50 \pm 2.11$ & $8.40 \pm 1.71$ & $1.80 \pm 0.34$ & $1.00 \pm 0.42$ \\
& Return $\uparrow$ & $2.175 \pm 0.062$ & $3.069 \pm 0.023$ & $2.360 \pm 0.066$ & $3.092 \pm 0.016$ & $2.122 \pm 0.086$ & $\mathbf{3.095 \pm 0.020}$ & $2.193 \pm 0.087$ & $1.636 \pm 0.021$ & $1.331 \pm 0.045$ \\
& FE $\downarrow$ & $1.316 \pm 0.054$ & $0.590 \pm 0.017$ & $1.159 \pm 0.061$ & $0.567 \pm 0.018$ & $1.368 \pm 0.073$ & $\mathbf{0.566 \pm 0.011}$ & $1.256 \pm 0.100$ & $1.770 \pm 0.013$ & $2.007 \pm 0.039$ \\
\cmidrule(lr){1-11}
\multirow{4}{*}{Overlap-50}
& TC (\%) $\uparrow$ & $70.35 \pm 1.28$ & $\mathbf{84.55 \pm 0.69}$ & $68.05 \pm 4.14$ & $84.45 \pm 0.33$ & $67.50 \pm 2.12$ & $84.43 \pm 0.56$ & $72.88 \pm 1.05$ & $58.88 \pm 0.77$ & $47.05 \pm 0.38$ \\
& FS (\%) $\uparrow$ & $13.40 \pm 1.51$ & $\mathbf{45.90 \pm 1.92}$ & $12.40 \pm 3.67$ & $45.20 \pm 0.77$ & $11.70 \pm 3.14$ & $45.10 \pm 1.78$ & $17.50 \pm 1.31$ & $4.90 \pm 0.33$ & $0.80 \pm 0.49$ \\
& Return $\uparrow$ & $2.437 \pm 0.056$ & $\mathbf{3.124 \pm 0.034}$ & $2.344 \pm 0.184$ & $3.122 \pm 0.017$ & $2.327 \pm 0.104$ & $3.120 \pm 0.024$ & $2.566 \pm 0.051$ & $1.836 \pm 0.039$ & $1.340 \pm 0.015$ \\
& FE $\downarrow$ & $1.118 \pm 0.056$ & $0.587 \pm 0.026$ & $1.209 \pm 0.156$ & $\mathbf{0.574 \pm 0.015}$ & $1.141 \pm 0.124$ & $\mathbf{0.574 \pm 0.011}$ & $1.002 \pm 0.037$ & $1.629 \pm 0.035$ & $2.054 \pm 0.020$ \\
\cmidrule(lr){1-11}
\multirow{4}{*}{Dynamic}
& TC (\%) $\uparrow$ & $66.60 \pm 1.95$ & $81.30 \pm 0.34$ & $62.55 \pm 2.83$ & $\mathbf{82.50 \pm 0.43}$ & $63.23 \pm 0.83$ & $82.20 \pm 0.16$ & $65.90 \pm 2.33$ & $56.30 \pm 0.49$ & $47.55 \pm 0.40$ \\
& FS (\%) $\uparrow$ & $11.60 \pm 2.43$ & $39.30 \pm 0.64$ & $8.50 \pm 2.32$ & $\mathbf{42.80 \pm 0.72}$ & $7.90 \pm 0.93$ & $41.20 \pm 0.78$ & $11.10 \pm 1.87$ & $3.80 \pm 0.64$ & $0.60 \pm 0.40$ \\
& Return $\uparrow$ & $2.282 \pm 0.084$ & $2.995 \pm 0.014$ & $2.090 \pm 0.124$ & $\mathbf{3.048 \pm 0.018}$ & $2.162 \pm 0.041$ & $3.031 \pm 0.006$ & $2.269 \pm 0.098$ & $1.713 \pm 0.024$ & $1.353 \pm 0.035$ \\
& FE $\downarrow$ & $1.122 \pm 0.077$ & $0.590 \pm 0.004$ & $1.351 \pm 0.114$ & $\mathbf{0.564 \pm 0.014}$ & $1.207 \pm 0.046$ & $0.586 \pm 0.015$ & $1.147 \pm 0.086$ & $1.705 \pm 0.025$ & $2.054 \pm 0.031$ \\
\cmidrule(lr){1-11}
\multirow{4}{*}{Cascaded}
& TC (\%) $\uparrow$ & $58.95 \pm 1.70$ & $85.40 \pm 0.42$ & $65.55 \pm 1.76$ & $\mathbf{85.98 \pm 0.44}$ & $66.03 \pm 2.06$ & $84.90 \pm 0.51$ & $64.18 \pm 3.98$ & $54.90 \pm 1.08$ & $47.03 \pm 0.55$ \\
& FS (\%) $\uparrow$ & $4.10 \pm 0.87$ & $46.00 \pm 1.17$ & $8.60 \pm 1.76$ & $\mathbf{48.30 \pm 1.54}$ & $9.50 \pm 2.13$ & $44.80 \pm 1.85$ & $9.40 \pm 2.50$ & $2.70 \pm 0.37$ & $0.60 \pm 0.19$ \\
& Return $\uparrow$ & $1.936 \pm 0.076$ & $3.153 \pm 0.021$ & $2.194 \pm 0.078$ & $\mathbf{3.182 \pm 0.021}$ & $2.226 \pm 0.093$ & $3.126 \pm 0.026$ & $2.150 \pm 0.167$ & $1.625 \pm 0.054$ & $1.339 \pm 0.035$ \\
& FE $\downarrow$ & $1.539 \pm 0.115$ & $0.573 \pm 0.017$ & $1.368 \pm 0.070$ & $\mathbf{0.547 \pm 0.014}$ & $1.317 \pm 0.077$ & $0.599 \pm 0.021$ & $1.331 \pm 0.130$ & $1.804 \pm 0.043$ & $2.119 \pm 0.022$ \\
\bottomrule
\end{tabular}
\end{adjustbox}%
}
\end{table}
\FloatBarrier

\paragraph{Observation.}
The greatest improvements appear in RWARE-Advanced, where downstream decisions depend more strongly on the available task-relevant evidence. SPL increases TC, FS, and return and reduces FE across every matched backbone and participation pattern.

\subsection{Additional Isaac Lab Metrics}
\label{app:detailed-isaac}

Table~\ref{tab:appendix-isaac-full} reports the complete Isaac Lab evaluation,
including task completion (TC), full success (FS), episode return, failed
execution (FE), map coverage, task discovery, and wall-penetration steps. These
additional diagnostics complement the TC results reported in the main paper and
provide a broader view of the behaviors induced by SPL-MAPPO under embodied
UAV--UGV execution.

\begin{table}[!htbp]
\centering
\caption{Isaac Lab evaluation metrics. Values are mean $\pm$ standard error over three independent training seeds and 200 final evaluation episodes per seed.}
\label{tab:appendix-isaac-full}
\small
\setlength{\tabcolsep}{5.0pt}
\renewcommand{\arraystretch}{1.08}

\begin{adjustbox}{max width=\linewidth}
\begin{tabular}{llcQcQ}
\toprule
\multirow{2}{*}{Agents}
& \multirow{2}{*}{Metric}
& \multicolumn{2}{c}{Non-overlap}
& \multicolumn{2}{c}{Dynamic} \\
\cmidrule(lr){3-4}
\cmidrule(lr){5-6}
&
& MAPPO & SPL-MAPPO
& MAPPO & SPL-MAPPO \\
\midrule

\multirow{7}{*}{2}
& TC (\%) $\uparrow$
& $31.50 \pm 5.97$
& $\mathbf{61.58 \pm 0.67}$
& $36.67 \pm 0.71$
& $\mathbf{59.58 \pm 1.79}$ \\

& FS (\%) $\uparrow$
& $5.17 \pm 3.03$
& $\mathbf{35.17 \pm 0.73}$
& $9.00 \pm 0.76$
& $\mathbf{34.50 \pm 2.29}$ \\

& Return $\uparrow$
& $0.648 \pm 0.147$
& $\mathbf{1.540 \pm 0.016}$
& $0.785 \pm 0.022$
& $\mathbf{1.494 \pm 0.059}$ \\

& FE $\downarrow$
& $\mathbf{0.632 \pm 0.057}$
& $0.822 \pm 0.009$
& $0.798 \pm 0.003$
& $\mathbf{0.760 \pm 0.038}$ \\

& Map coverage (\%) $\uparrow$
& $36.21 \pm 1.69$
& $\mathbf{44.03 \pm 0.17}$
& $33.21 \pm 1.60$
& $\mathbf{44.03 \pm 1.07}$ \\

& Tasks discovered $\uparrow$
& $0.238 \pm 0.009$
& $\mathbf{0.287 \pm 0.003}$
& $0.150 \pm 0.020$
& $\mathbf{0.261 \pm 0.016}$ \\

& Wall-penetration steps $\downarrow$
& $0.000 \pm 0.000$
& $0.000 \pm 0.000$
& $0.000 \pm 0.000$
& $0.000 \pm 0.000$ \\

\midrule

\multirow{7}{*}{4}
& TC (\%) $\uparrow$
& $55.79 \pm 3.23$
& $\mathbf{76.38 \pm 1.52}$
& $58.46 \pm 3.40$
& $\mathbf{79.25 \pm 1.30}$ \\

& FS (\%) $\uparrow$
& $11.83 \pm 4.69$
& $\mathbf{39.83 \pm 3.22}$
& $12.17 \pm 4.21$
& $\mathbf{44.83 \pm 1.17}$ \\

& Return $\uparrow$
& $2.235 \pm 0.172$
& $\mathbf{3.330 \pm 0.092}$
& $2.343 \pm 0.174$
& $\mathbf{3.499 \pm 0.064}$ \\

& FE $\downarrow$
& $\mathbf{1.100 \pm 0.016}$
& $1.167 \pm 0.027$
& $\mathbf{1.160 \pm 0.023}$
& $1.193 \pm 0.020$ \\

& Map coverage (\%) $\uparrow$
& $\mathbf{67.96 \pm 1.45}$
& $54.90 \pm 0.36$
& $\mathbf{69.25 \pm 1.59}$
& $53.84 \pm 0.76$ \\

& Tasks discovered $\uparrow$
& $\mathbf{0.470 \pm 0.022}$
& $0.200 \pm 0.007$
& $\mathbf{0.512 \pm 0.041}$
& $0.190 \pm 0.009$ \\

& Wall-penetration steps $\downarrow$
& $0.000 \pm 0.000$
& $0.000 \pm 0.000$
& $0.000 \pm 0.000$
& $0.000 \pm 0.000$ \\

\bottomrule
\end{tabular}
\end{adjustbox}
\end{table}

\paragraph{Task performance.}
SPL-MAPPO achieves higher TC, FS, and return in all four
team-size--participation settings. For the two-agent system, TC increases from
$31.50\%$ to $61.58\%$ under Non-overlap and from $36.67\%$ to $59.58\%$
under Dynamic participation. The corresponding FS gains are also substantial,
increasing from $5.17\%$ to $35.17\%$ and from $9.00\%$ to $34.50\%$.
The same pattern persists with four agents: TC increases from $55.79\%$ to
$76.38\%$ under Non-overlap and from $58.46\%$ to $79.25\%$ under Dynamic
participation, while FS increases from $11.83\%$ to $39.83\%$ and from
$12.17\%$ to $44.83\%$. Episode return follows the same direction in all four
settings.

\paragraph{Execution and exploration diagnostics.}
FE does not change uniformly with SPL. In the two-agent setting it is higher
under Non-overlap and slightly lower under Dynamic participation, while in the
four-agent setting it is slightly higher under both schedules. These absolute
FE values occur under substantially different exploration and task-discovery
behavior and should therefore be interpreted together with those diagnostics.

The exploration pattern also differs with team size. With two agents,
SPL-MAPPO attains greater map coverage and discovers more tasks under both
participation schedules. For example, under Dynamic participation, map coverage
increases from $33.21\%$ to $44.03\%$, while task discovery increases from
$0.150$ to $0.261$. This pattern is consistent with broader evidence
acquisition. With four agents, SPL-MAPPO exhibits the opposite exploration
pattern: map coverage decreases from $67.96\%$ to $54.90\%$ under Non-overlap
and from $69.25\%$ to $53.84\%$ under Dynamic participation, while task
discovery also decreases. Despite this reduced exploration, SPL-MAPPO achieves
higher TC, FS, and return in both four-agent settings. This is consistent with
more selective acquisition and/or more effective use of the available
task-relevant evidence, although the reported diagnostics do not separate these
effects causally.

No wall-penetration steps are observed for either method in any of the four
settings, indicating that the performance differences reported above are not
accompanied by violations of this physical-safety diagnostic.

\subsection{Prior-Misspecification Robustness}
\label{app:prior-misspecification}

SPL uses task-family validity priors as part of the structured model for prospective acquisition supervision. We therefore evaluate how sensitive SPL is to misspecification of these priors. The true environment distribution is kept unchanged, and only the validity priors assumed by the acquisition model are perturbed. Let
\[
\mathbf{p}
=
(0.80,0.80,0.65,0.65,0.35,0.35,0.20,0.20)
\]
denote the nominal prior vector. For misspecification level
$\lambda\in[0,1]$, the acquisition model instead uses
\begin{equation}
\widehat{\mathbf{p}}_{\lambda}
=
(1-\lambda)\mathbf{p}
+
\lambda(1-\mathbf{p}).
\label{eq:app-prior-misspecification}
\end{equation}
Thus, $\lambda=0$ recovers the nominal priors,
$\lambda=0.5$ produces an uninformative prior of $0.5$ for every task,
and $\lambda=1$ fully reverses the nominal prior ordering. All other
structured information, including the fixed-valid-task constraint,
probe observation model, task utilities, environment dynamics, and
receiver learning procedure, remains unchanged.

We evaluate SPL-MAPPO under Non-overlap participation in
MPE-Advanced and RWARE-Advanced. Each misspecified condition uses
five independent training seeds, two million environment steps, and
200 final evaluation episodes per seed, matching the primary
four-agent protocol. The $\lambda=0$ values correspond to the nominal
SPL-MAPPO results reported in the main experiments; the
$\lambda\in\{0.25,0.50,1.00\}$ conditions comprise 30 additional
training runs. In all misspecified runs, the environment continues to
sample tasks from the nominal prior distribution in $\mathbf{p}$.

\begin{table}[!htbp]
\centering
\caption{Sensitivity of SPL-MAPPO to misspecified validity priors under
Non-overlap participation. Values are mean $\pm$ standard error over five
training seeds and 200 final evaluation episodes per seed. The environment
always uses the nominal prior distribution; only the acquisition model uses
$\widehat{\mathbf{p}}_{\lambda}$.}
\label{tab:app-prior-misspecification}
\small
\setlength{\tabcolsep}{4.0pt}
\renewcommand{\arraystretch}{1.05}

\begin{adjustbox}{max width=\linewidth}
\begin{tabular}{llQQQQ}
\toprule
Environment
& $\lambda$
& TC (\%) $\uparrow$
& FS (\%) $\uparrow$
& Return $\uparrow$
& FE $\downarrow$ \\
\midrule

\multirow{4}{*}{MPE-Advanced}
& $0.00$
& $78.23 \pm 0.16$
& $31.90 \pm 0.62$
& $27.390 \pm 0.106$
& $0.579 \pm 0.007$ \\

& $0.25$
& $77.70 \pm 0.39$
& $30.70 \pm 0.80$
& $27.02 \pm 0.15$
& $0.560 \pm 0.006$ \\

& $0.50$
& $76.43 \pm 0.11$
& $29.90 \pm 0.33$
& $26.70 \pm 0.10$
& $0.557 \pm 0.009$ \\

& $1.00$
& $76.73 \pm 0.21$
& $29.80 \pm 0.72$
& $26.59 \pm 0.05$
& $0.612 \pm 0.007$ \\

\midrule

\multirow{4}{*}{RWARE-Advanced}
& $0.00$
& $80.03 \pm 0.23$
& $42.20 \pm 1.44$
& $2.951 \pm 0.010$
& $0.583 \pm 0.011$ \\

& $0.25$
& $79.93 \pm 0.68$
& $39.70 \pm 1.59$
& $2.947 \pm 0.028$
& $0.586 \pm 0.011$ \\

& $0.50$
& $78.35 \pm 0.94$
& $33.90 \pm 1.43$
& $2.888 \pm 0.040$
& $0.666 \pm 0.023$ \\

& $1.00$
& $77.45 \pm 0.56$
& $35.50 \pm 1.15$
& $2.826 \pm 0.023$
& $0.652 \pm 0.007$ \\

\bottomrule
\end{tabular}
\end{adjustbox}
\end{table}

\paragraph{Observation.}
Prior misspecification reduces performance relative to the nominal SPL
configuration, but does not cause a collapse in task completion in either
tested environment. With fully reversed priors ($\lambda=1$), SPL-MAPPO
achieves $76.73\%$ TC in MPE-Advanced and $77.45\%$ in
RWARE-Advanced, compared with nominal values of $78.23\%$ and
$80.03\%$, respectively. These values also remain above the corresponding
Vanilla MAPPO means of $64.08\%$ and $58.63\%$ reported in the main
experiments.

The effect is not strictly monotonic in MPE-Advanced, where TC at
$\lambda=1$ is slightly higher than at $\lambda=0.5$. RWARE-Advanced
shows a clearer reduction in TC as the assumed prior becomes less accurate,
while full success is more sensitive than TC: FS decreases from
$42.20\%$ in the nominal setting to $33.90\%$ at $\lambda=0.5$ and
$35.50\%$ at $\lambda=1$. These results indicate that SPL retains
substantial task-completion performance under validity-prior
misspecification in the tested settings, while stricter episode-level success
can be more sensitive.

This experiment perturbs only the validity priors used by the acquisition
model. The fixed-valid-task constraint, probe accuracy, task utilities, and
other elements of the structured task model remain correctly specified.
Accordingly, the results provide evidence of tolerance to validity-prior
misspecification in the tested settings. Robustness to arbitrary errors in
the full structured task model is not established.

\subsection{Statistical Analysis of Primary Comparisons}
\label{app:statistical-analysis}

We conduct a retrospective statistical analysis of the 60 primary
Vanilla--SPL comparisons reported in the main paper. Each trained policy is
summarized by its mean task completion (TC) over 200 final evaluation episodes,
and independent training seeds are treated as the statistical replication
units. Throughout this subsection, $\Delta\mathrm{TC}$ denotes the absolute
difference between percentage-valued SPL and Vanilla TC means,
$\mathrm{TC}_{\mathrm{SPL}}-\mathrm{TC}_{\mathrm{Vanilla}}$, and is reported
using \%.

Same-training-seed pairing is available for 57 of the 60 configurations,
providing five paired seed-level differences in each testable configuration.
For these comparisons, we report the mean paired difference, a two-sided paired
Student $t$ test with four degrees of freedom, and a pointwise 95\% $t$
confidence interval. The Non-overlap MAPPO comparisons in MPE-Advanced,
RWARE-Base, and RWARE-Advanced use different Vanilla and SPL training-seed
sets. Their observed five-seed method means remain available for descriptive
comparison, but no paired confidence interval or paired $p$-value is reported.

To account for multiplicity, we apply Holm correction over the declared family
of all 60 primary comparisons at familywise level $0.05$. The three
configurations without verified seed pairing are assigned $p=1$ internally for
the correction and are displayed as unavailable below. This preserves the
original 60-comparison family and yields a conservative adjustment. Pointwise
confidence intervals are not multiplicity-adjusted.

\begin{table}[!htbp]
\centering
\caption{Retrospective statistical analysis for MPE-Base. Each training seed
is a replication unit. $\Delta$TC is the absolute SPL-minus-Vanilla difference
in percentage-valued TC scores. Confidence intervals are pointwise 95\%
intervals from five paired training seeds. Holm $p$ values are adjusted over
the declared family of 60 primary comparisons.}
\label{tab:spl-primary-mpe-base}
\small
\setlength{\tabcolsep}{5pt}
\renewcommand{\arraystretch}{1.03}
\begin{adjustbox}{max width=\linewidth}
\begin{tabular}{llrrrr}
\toprule
Participation & Backbone
& $\Delta$TC (\%)
& 95\% CI (\%)
& Raw $p$
& Holm $p$ \\
\midrule

Non-overlap
& IPPO
& 7.88
& [6.82, 8.93]
& $3.17\times10^{-5}$
& 0.0017 \\

& MAPPO
& 8.23
& [7.03, 9.42]
& $4.48\times10^{-5}$
& 0.0023 \\

& HAPPO
& 11.73
& [3.01, 20.44]
& 0.0202
& 0.3028 \\

\addlinespace[2pt]

Overlap-25
& IPPO
& 11.10
& [2.11, 20.09]
& 0.0266
& 0.3073 \\

& MAPPO
& 9.48
& [8.74, 10.21]
& $3.56\times10^{-6}$
& $2.10\times10^{-4}$ \\

& HAPPO
& 9.65
& [8.86, 10.44]
& $4.61\times10^{-6}$
& $2.67\times10^{-4}$ \\

\addlinespace[2pt]

Overlap-50
& IPPO
& 11.05
& [2.81, 19.29]
& 0.0204
& 0.3028 \\

& MAPPO
& 9.38
& [5.41, 13.34]
& 0.0028
& 0.0752 \\

& HAPPO
& 8.70
& [6.28, 11.12]
& $5.65\times10^{-4}$
& 0.0237 \\

\addlinespace[2pt]

Dynamic
& IPPO
& 13.45
& [5.36, 21.54]
& 0.0099
& 0.2081 \\

& MAPPO
& 8.65
& [7.41, 9.89]
& $4.24\times10^{-5}$
& 0.0022 \\

& HAPPO
& 10.10
& [6.97, 13.23]
& $8.59\times10^{-4}$
& 0.0326 \\

\addlinespace[2pt]

Cascaded
& IPPO
& 12.08
& [2.41, 21.74]
& 0.0256
& 0.3073 \\

& MAPPO
& 8.30
& [7.06, 9.54]
& $5.01\times10^{-5}$
& 0.0025 \\

& HAPPO
& 8.10
& [7.25, 8.95]
& $1.23\times10^{-5}$
& $7.01\times10^{-4}$ \\

\bottomrule
\end{tabular}
\end{adjustbox}
\end{table}

\begin{table}[!htbp]
\centering
\caption{Retrospective statistical analysis for MPE-Advanced. Notation and
statistical procedure follow Table~\ref{tab:spl-primary-mpe-base}. The
Non-overlap MAPPO comparison uses different Vanilla and SPL training-seed sets;
its $\Delta$TC is therefore descriptive and paired inference is unavailable.}
\label{tab:spl-primary-mpe-advanced}
\small
\setlength{\tabcolsep}{5pt}
\renewcommand{\arraystretch}{1.03}
\begin{adjustbox}{max width=\linewidth}
\begin{tabular}{llrrrr}
\toprule
Participation & Backbone
& $\Delta$TC (\%)
& 95\% CI (\%)
& Raw $p$
& Holm $p$ \\
\midrule

Non-overlap
& IPPO
& 7.58
& [7.05, 8.10]
& $2.26\times10^{-6}$
& $1.36\times10^{-4}$ \\

& MAPPO
& 14.15
& --
& --
& -- \\

& HAPPO
& 10.75
& [2.06, 19.44]
& 0.0264
& 0.3073 \\

\addlinespace[2pt]

Overlap-25
& IPPO
& 14.30
& [5.25, 23.35]
& 0.0118
& 0.2358 \\

& MAPPO
& 11.10
& [7.48, 14.72]
& 0.0010
& 0.0378 \\

& HAPPO
& 8.63
& [6.13, 11.12]
& $6.59\times10^{-4}$
& 0.0264 \\

\addlinespace[2pt]

Overlap-50
& IPPO
& 7.33
& [6.43, 8.22]
& $2.23\times10^{-5}$
& 0.0012 \\

& MAPPO
& 9.75
& [1.26, 18.24]
& 0.0332
& 0.3073 \\

& HAPPO
& 7.28
& [6.31, 8.24]
& $3.09\times10^{-5}$
& 0.0017 \\

\addlinespace[2pt]

Dynamic
& IPPO
& 11.60
& [0.73, 22.47]
& 0.0414
& 0.3073 \\

& MAPPO
& 12.03
& [3.78, 20.27]
& 0.0155
& 0.2784 \\

& HAPPO
& 17.08
& [1.68, 32.47]
& 0.0370
& 0.3073 \\

\addlinespace[2pt]

Cascaded
& IPPO
& 9.13
& [1.73, 16.52]
& 0.0266
& 0.3073 \\

& MAPPO
& 9.73
& [0.98, 18.47]
& 0.0366
& 0.3073 \\

& HAPPO
& 7.95
& [5.10, 10.80]
& 0.0015
& 0.0492 \\

\bottomrule
\end{tabular}
\end{adjustbox}

\par\smallskip
{\footnotesize
-- indicates that paired inference is unavailable because Vanilla and SPL use
different training-seed sets. The corresponding untestable comparison is
assigned $p=1$ internally for the 60-comparison Holm correction.}
\end{table}

\begin{table}[!htbp]
\centering
\caption{Retrospective statistical analysis for RWARE-Base. Notation and
statistical procedure follow Table~\ref{tab:spl-primary-mpe-base}. The
Non-overlap MAPPO comparison uses different Vanilla and SPL training-seed sets;
its $\Delta$TC is descriptive and paired inference is unavailable.}
\label{tab:spl-primary-rware-base}
\small
\setlength{\tabcolsep}{5pt}
\renewcommand{\arraystretch}{1.03}
\begin{adjustbox}{max width=\linewidth}
\begin{tabular}{llrrrr}
\toprule
Participation & Backbone
& $\Delta$TC (\%)
& 95\% CI (\%)
& Raw $p$
& Holm $p$ \\
\midrule

Non-overlap
& IPPO
& 20.38
& [4.50, 36.25]
& 0.0235
& 0.3056 \\

& MAPPO
& 15.33
& --
& --
& -- \\

& HAPPO
& 13.75
& [9.92, 17.58]
& $5.67\times10^{-4}$
& 0.0237 \\

\addlinespace[2pt]

Overlap-25
& IPPO
& 8.05
& [-1.30, 17.40]
& 0.0751
& 0.3073 \\

& MAPPO
& 14.83
& [6.33, 23.32]
& 0.0084
& 0.1842 \\

& HAPPO
& 18.20
& [9.60, 26.80]
& 0.0042
& 0.1049 \\

\addlinespace[2pt]

Overlap-50
& IPPO
& 7.00
& [3.73, 10.27]
& 0.0040
& 0.1043 \\

& MAPPO
& 13.85
& [4.26, 23.44]
& 0.0160
& 0.2784 \\

& HAPPO
& 18.53
& [12.18, 24.87]
& 0.0013
& 0.0440 \\

\addlinespace[2pt]

Dynamic
& IPPO
& 16.00
& [12.30, 19.70]
& $2.76\times10^{-4}$
& 0.0124 \\

& MAPPO
& 23.35
& [7.88, 38.82]
& 0.0138
& 0.2624 \\

& HAPPO
& 19.60
& [8.63, 30.57]
& 0.0077
& 0.1774 \\

\addlinespace[2pt]

Cascaded
& IPPO
& 19.45
& [13.18, 25.72]
& 0.0010
& 0.0371 \\

& MAPPO
& 19.23
& [12.63, 25.82]
& 0.0013
& 0.0440 \\

& HAPPO
& 19.38
& [11.51, 27.24]
& 0.0024
& 0.0693 \\

\bottomrule
\end{tabular}
\end{adjustbox}
\end{table}

\begin{table}[!htbp]
\centering
\caption{Retrospective statistical analysis for RWARE-Advanced. Notation and
statistical procedure follow Table~\ref{tab:spl-primary-mpe-base}. The
Non-overlap MAPPO comparison uses different Vanilla and SPL training-seed sets;
its $\Delta$TC is descriptive and paired inference is unavailable.}
\label{tab:spl-primary-rware-advanced}
\small
\setlength{\tabcolsep}{5pt}
\renewcommand{\arraystretch}{1.03}
\begin{adjustbox}{max width=\linewidth}
\begin{tabular}{llrrrr}
\toprule
Participation & Backbone
& $\Delta$TC (\%)
& 95\% CI (\%)
& Raw $p$
& Holm $p$ \\
\midrule

Non-overlap
& IPPO
& 27.85
& [23.14, 32.56]
& $8.06\times10^{-5}$
& 0.0040 \\

& MAPPO
& 21.40
& --
& --
& -- \\

& HAPPO
& 21.98
& [11.18, 32.77]
& 0.0048
& 0.1160 \\

\addlinespace[2pt]

Overlap-25
& IPPO
& 19.05
& [14.95, 23.15]
& $2.08\times10^{-4}$
& 0.0096 \\

& MAPPO
& 14.83
& [11.13, 18.52]
& $3.71\times10^{-4}$
& 0.0163 \\

& HAPPO
& 20.45
& [16.11, 24.79]
& $1.98\times10^{-4}$
& 0.0093 \\

\addlinespace[2pt]

Overlap-50
& IPPO
& 14.20
& [9.08, 19.32]
& 0.0015
& 0.0492 \\

& MAPPO
& 16.40
& [4.79, 28.01]
& 0.0172
& 0.2784 \\

& HAPPO
& 16.93
& [10.26, 23.59]
& 0.0021
& 0.0639 \\

\addlinespace[2pt]

Dynamic
& IPPO
& 14.70
& [8.73, 20.67]
& 0.0024
& 0.0693 \\

& MAPPO
& 19.95
& [12.23, 27.67]
& 0.0020
& 0.0620 \\

& HAPPO
& 18.98
& [16.46, 21.49]
& $3.05\times10^{-5}$
& 0.0017 \\

\addlinespace[2pt]

Cascaded
& IPPO
& 26.45
& [21.10, 31.80]
& $1.63\times10^{-4}$
& 0.0078 \\

& MAPPO
& 20.43
& [15.09, 25.76]
& $4.43\times10^{-4}$
& 0.0190 \\

& HAPPO
& 18.88
& [13.19, 24.56]
& $7.69\times10^{-4}$
& 0.0300 \\

\bottomrule
\end{tabular}
\end{adjustbox}
\end{table}

\FloatBarrier
\paragraph{Summary.}
SPL has a higher observed mean TC than its corresponding Vanilla backbone in
all 60 primary configurations. The equal-weight descriptive mean difference
across these configurations is $14.09\%$; the corresponding backbone-level
mean differences are $13.93\%$, $14.02\%$, and $14.33\%$ for IPPO, MAPPO,
and HAPPO, respectively. Among the 57 configurations with paired inference,
56 pointwise 95\% confidence intervals have a positive lower bound. The single
exception is RWARE-Base Overlap-25 with IPPO, whose pointwise interval is
$[-1.30,17.40]\%$. After Holm correction over the declared family of 60
comparisons, 29 configurations remain significant at familywise level $0.05$,
while 28 testable comparisons do not. The three remaining configurations have
no paired test because same-training-seed pairing is unavailable.

These results distinguish descriptive consistency from multiplicity-adjusted
statistical evidence. Higher observed SPL means occur in all 60 configurations,
but this does not imply that all 60 comparisons are individually statistically
significant. Similarly, failure to reject the null hypothesis after Holm
correction should not be interpreted as evidence of equivalence between SPL
and the corresponding Vanilla method.

\paragraph{Evaluation design and zero standard errors.}
Final evaluation uses deterministic policies and a fixed set of 200 reset seeds
within each environment protocol, reused across training seeds and methods.
This controls variation due to the sampled evaluation instances while retaining
independent policy training across training seeds.

Four TC cells in the primary results table have an exact seed-level standard
error of zero. These are Vanilla IPPO and Vanilla MAPPO under MPE-Base
Non-overlap, MAE-MAPPO under MPE-Base Non-overlap, and Vanilla IPPO under
MPE-Advanced Non-overlap. In each case, the five independently trained policies
produce exactly identical 200-episode mean TC values at full precision.
The reported zero therefore concerns variability among the five seed-level
evaluation means; it does not imply that individual evaluation episodes have
zero variability. The available records establish this numerical plateau under
the fixed deterministic evaluation schedule but do not establish a particular
behavioral mechanism responsible for it.

\paragraph{Limitations of the statistical analysis.}
This analysis is retrospective and was conducted on the existing completed
runs without additional training or evaluation. The paired tests treat the five
training seeds as independent replications and assume that the seed-level
differences are approximately normally distributed. With only five pairs per
configuration, this normality assumption cannot be assessed convincingly.
Furthermore, because the same fixed evaluation-instance schedule is reused
across training seeds and methods, the reported inference is conditional on
this evaluation design and does not quantify uncertainty over an unrestricted
distribution of deployment tasks.

The three Non-overlap MAPPO configurations without verified same-seed pairing
prevent a complete 60-comparison paired analysis. Assigning these unavailable
tests $p=1$ internally makes the reported Holm correction conservative while
preserving the declared family size, but does not replace the missing paired
evidence. Pointwise confidence intervals should also be interpreted separately
from the Holm-adjusted hypothesis tests.

\end{document}